\setlist{nosep}
\newtheorem{definition}{Definition}
\newtheorem{lemma}{Lemma}
\newtheorem{corollary}{Corollary}
\newtheorem{proposition}{Proposition}
\newtheorem{theorem}{Theorem}
\def\Do{\text{do}}
\newcommand{\pr}[1]{\mathbb{P}(#1)}
\newcommand{\prr}[1]{\mathbb{P}_r(#1)}
\newcommand{\prg}[1]{\mathbb{P}_g(#1)}
\newcommand{\mat}[1]{\mathbf{#1}}
\newcommand{\expect}[2]{\mathbb{E}_{#1}\left[#2\right]}
\newcommand{\expectd}[1]{\mathbb{E}_{x\sim p_{\text{data}}(x)}\left[#1\right]}
\newcommand{\expectg}[1]{\mathbb{E}_{x\sim p_{\text{g}}(x)}\left[#1\right]}
\newcommand{\expectdz}[1]{\mathbb{E}_{x\sim p_{\text{data}}^0(x)}\left[#1\right]}
\newcommand{\expectdo}[1]{\mathbb{E}_{x\sim p_{\text{data}}^1(x)}\left[#1\right]}
\newcommand{\expectgo}[1]{\mathbb{E}_{x\sim p_{\text{g}}^1(x)}\left[#1\right]}
\newcommand{\expectgz}[1]{\mathbb{E}_{x\sim p_{\text{g}}^0(x)}\left[#1\right]}
\DeclarePairedDelimiterX{\infdivx}[2]{(}{)}{ #1\;\delimsize\|\;#2}
\newcommand{\KL}{KL\infdivx}
\newcommand{\norm}[1]{\left\lVert#1\right\rVert}
\begin{document} 
\date{\today}
\title{CausalGAN: Learning Causal Implicit Generative Models \\ with Adversarial Training}
\author[1,a]{Murat Kocaoglu \footnote{Equal contribution.}}
\newcommand\CoAuthorMark{\footnotemark[\arabic{footnote}]}
\author[1,b]{Christopher Snyder \protect\CoAuthorMark}
\author[1,c]{Alexandros G. Dimakis}
\author[1,d]{Sriram Vishwanath}
\affil[1]{\small Department of Electrical and Computer Engineering, The University of Texas at Austin, USA}
\affil[ ]{\small \textit \textsuperscript{a} mkocaoglu@utexas.edu \textsuperscript{b}
22csnyder@gmail.com \textsuperscript{c}
 dimakis@austin.utexas.edu \textsuperscript{d} sriram@austin.utexas.edu}

\renewcommand\Authands{ and }

\maketitle

\begin{abstract}
We propose an adversarial training procedure for learning a causal implicit generative model for a given causal graph. We show that adversarial training can be used to learn a generative model with true observational and interventional distributions if the generator architecture is consistent with the given causal graph. We consider the application of generating faces based on given binary labels where the dependency structure between the labels is preserved with a causal graph. This problem can be seen as learning a causal implicit generative model for the image and labels. We devise a two-stage procedure for this problem. First we train a causal implicit generative model over binary labels using a neural network consistent with a causal graph as the generator. We empirically show that Wasserstein GAN can be used to output discrete labels. Later we propose two new conditional GAN architectures, which we call CausalGAN and CausalBEGAN. We show that the optimal generator of the CausalGAN, given the labels, samples from the image distributions conditioned on these labels. The conditional GAN combined with a trained causal implicit generative model for the labels is then an implicit causal generative network over the labels and the generated image. We show that the proposed architectures can be used to sample from observational and interventional image distributions, even for interventions which do not naturally occur in the dataset.
\end{abstract}

\section{Introduction}
Generative adversarial networks are neural generative models that can be trained using backpropagation to mimick sampling from very high dimensional nonparametric distributions \cite{Goodfellow2014}. A \emph{generator} network models the sampling process through feedforward computation. The generator output is constrained and refined through the feedback by a competitive "adversary network", that attempts to discriminate between the generated and real samples. In the application of sampling from a distribution over images, a generator, typically a neural network, outputs an image given independent noise variables. The objective of the generator is to maximize the loss of the discriminator (convince the discriminator that it outputs images from the real data distribution). GANs have shown tremendous success in generating samples from distributions such as image and video \cite{Vondrick2016nips} and have even been proposed for language translation \cite{Wu2017}. 

One extension idea for GANs is to enable sampling from the class conditional data distributions by feeding labels to the generator. Various neural network architectures have been proposed for solving this problem \cite{Mirza2014,Odena2016,Antipov2017}. As far as we are aware of, in all of these works, the class labels are chosen independently from one another. Therefore, choosing one label does not affect the distribution of the other labels. As a result, these architectures do not provide the functionality to condition on a label, and sample other labels and the image. For concreteness consider a generator trained to output images of birds when given the \emph{color} and \emph{species} labels. On one hand, if we feed the generator 
\emph{color=blue}, since \emph{species} label is independent from the \emph{color} label, we are likely to see blue eagles as well as blue jays. However, we do not expect to see any blue eagles when conditioned on \emph{color=blue} in any dataset of real bird images. Similarly, consider a generator trained to output face images given the \emph{gender} and \emph{mustache} labels. When labels are chosen independently from one another, images generated under \emph{mustache = 1} should contain both males and females, which is clearly different than conditioning on \emph{mustache = 1}. The key for understanding and unifying these two notions, conditioning and being able to sample from distributions different than the dataset's is to use \emph{causality}.

We can think of generating an image conditioned on labels as a causal process: Labels determine the image distribution. The generator is a functional map from labels to image distributions. This is consistent with a simple causal graph \emph{"Labels cause the Image"}, represented with the graph $L\rightarrow G$, where $L$ is the set of labels and $G$ is the generated image. Using a finer model, we can also include the causal graph between the labels. Using the notion of causal graphs, we are interested in extending the previous work on conditional image generation by 
\begin{enumerate}[label=(\roman*)]
\item capturing the dependence and
\item capturing the causal effect
\end{enumerate}
between labels and the image.

As an example, consider the causal graph between gender ($G$) and mustache ($M$) labels. The causal relation is clearly \emph{gender causes mustache}\footnote{In reality, there may be confounder variables, i.e., variables that affect both, which are not observable. In this work, we ignore this effect by assuming the graph has causal sufficiency, i.e., there does not exist unobserved variables that cause more than one observable variable.}, shown with the graph $G\rightarrow M$. Conditioning on \emph{gender=male}, we expect to see males with or without mustaches, based on the fraction of males with mustaches in the population. When we condition on \emph{mustache = 1}, we expect to sample from males only since the population does not contain females with mustaches. In addition to sampling from conditional distributions, causal models allow us to sample from various different distributions called \emph{interventional distributions}, which we explain next.

\begin{figure}[t!]
\centering

\begingroup
\captionsetup{width=0.45\linewidth }

\subfloat[Top: Intervened on Bald=1. Bottom: Conditioned on Bald = 1. $Male\rightarrow Bald$.]{
\includegraphics[width=0.45\linewidth]{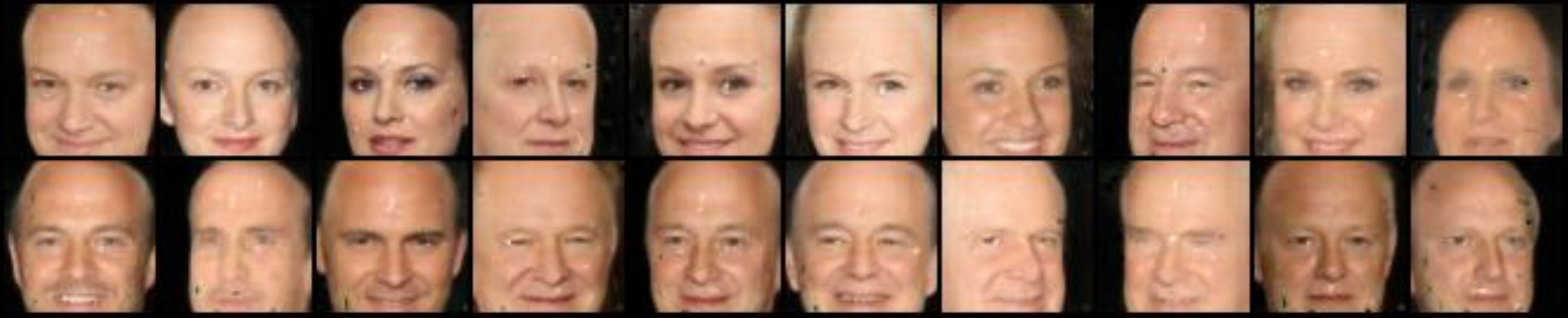}
\label{fig:bald_d1c1}
}
\subfloat[Top: Intervened on Mustache=1. Bottom: Conditioned on  Mustache = 1. $Male\rightarrow Mustache$.]{
\includegraphics[width=0.45\linewidth]{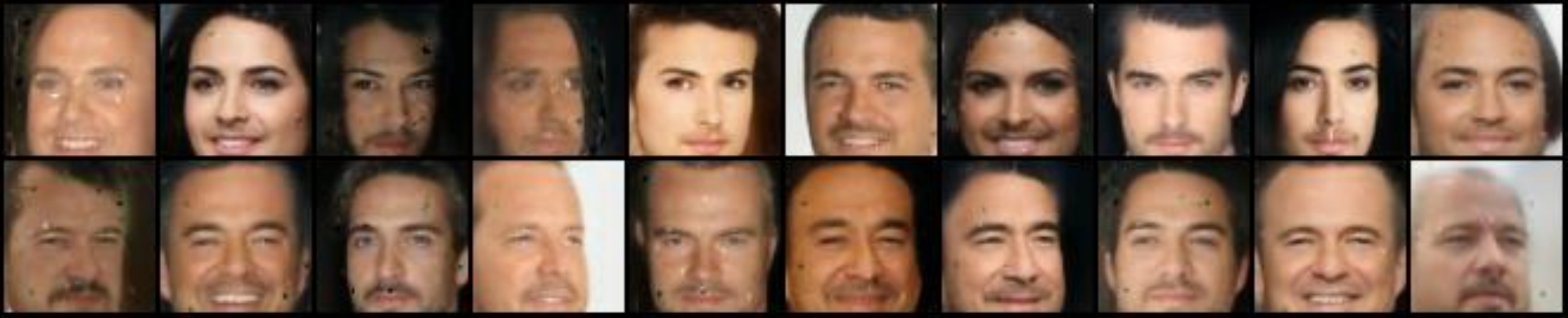}
\label{fig:mustache_d1c1}
}
\endgroup

\caption{Observational and interventional image samples from CausalBEGAN. Our architecture can be used to sample not only from the joint distribution (conditioned on a label) but also from the interventional distribution, e.g., under the intervention do$(Mustache=1)$. The resulting distributions are clearly different, as is evident from the samples outside the dataset, e.g., \emph{females with mustaches}.}
\label{fig:introductory}
\end{figure}

From a causal lens, using independent labels corresponds to using an empty causal graph between the labels. However in practice the labels are not independent and even have clear causal connections (e.g., \emph{gender} causes \emph{mustache}). Using an empty causal graph instead of the true causal graph, and setting a label to a particular value is equivalent to \emph{intervening} on that label in the original causal graph, but also ignoring the way it affects other variables. An intervention is an experiment which fixes the value of a variable, without affecting the rest of the causal mechanism, which is different from conditioning. An intervention on a variable affects its descendant variables in the causal graph. But unlike conditioning, it does not affect the distribution of its ancestors. For example, instead of the causal graph \emph{Gender causes Mustache}, if we used the empty causal graph between the same labels, intervening on \emph{Gender = Female} would create females with mustaches, whereas with the correct causal graph, it should only yield females without mustaches since setting  the \emph{Gender} variable will affect all the variables that are downstream, e.g., \emph{mustache}. See Figure \ref{fig:introductory} for a sample of our results which illustrate this concept on the \emph{bald} and \emph{mustache} variables. Similarly, for generating birds with the causal graph \emph{Species causes color}, intervening on \emph{color = blue} allows us to sample blue eagles (which do not exist) whereas conditioning on \emph{color = blue} does not.

An \textit{implicit} generative model \cite{Mohamed2016} is a mechanism that can sample from a probability distribution but cannot provide likelihoods for data points. In this work we propose \textit{causal implicit} generative models (CiGM): mechanisms that can sample not only from probability distributions but also from \textit{conditional} and \textit{interventional} distributions.  We show that when the generator structure inherits its neural connections from the causal graph, GANs can be used to train causal implicit generative models. We use WassersteinGAN to train a causal implicit generative model for image labels, as part of a two-step procedure for training a causal implicit generative model for the images and image labels. For the second step, we propose a novel conditional GAN architecture and loss function called the CausalGAN. We show that the optimal generator can sample from the correct conditional and interventional distributions, which is summarized by the following theorem.
\begin{theorem}[Informal]
\label{thm:informal}
Let $G(l,z)$ be the output of the generator for a given label $l$ and latent vector $z$. Let $G^*$ be the global optimal generator for the loss function in (\ref{eq:gen_loss}), when the rest of the network is trained to optimality. Then the generator samples from the conditional image distribution given the label, i.e., $p_g(G(l,Z)=x) = p_{data}{(X=x|L=l)}$, where $p_{data}$ is the data probability density function over the image and the labels, $p_g$ is the probability density function induced by the random variable $Z$, and $X$ is the image random variable.
\end{theorem}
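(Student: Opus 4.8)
The plan is to follow the template of Goodfellow et al.~\cite{Goodfellow2014}: fix the generator, compute the closed‑form optima of \emph{all three} adversaries (the image discriminator $D$, the Labeler, and the Anti‑Labeler), substitute them back to obtain a ``virtual'' loss $C(G)$ depending only on the distribution $p_g$ that $G$ induces, and then show $C(G)$ is minimized exactly when $p_g$ matches the target. Throughout write $p_g(x\mid l)$ for the conditional law of $G(l,Z)$; since the labels fed to the generator are drawn from the (idealized) true label distribution $p_{data}(l)$, the generator induces the joint $p_g(x,l)=p_{data}(l)\,p_g(x\mid l)$ with image marginal $p_g(x)=\sum_l p_{data}(l)\,p_g(x\mid l)$, and the target joint is $p_{data}(x,l)$. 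Because $p_g$ and $p_{data}$ already share the label marginal $p_{data}(l)$, proving $p_g(x,l)=p_{data}(x,l)$ is equivalent to the claimed identity $p_g(x\mid l)=p_{data}(x\mid l)$, so it suffices to pin down the joints.

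\emph{Step 1 (optimal adversaries).} Fixing $G$, each adversary's objective is an integral whose integrand depends on that adversary only pointwise, so one optimizes the integrand at each $x$. Exactly as in \cite{Goodfellow2014} this gives $D^*(x)=p_{data}(x)/(p_{data}(x)+p_g(x))$. The Labeler, trained by cross‑entropy to predict the labels of \emph{real} images, has optimum equal to the true posterior, $D_{LR}^*(\cdot\mid x)=p_{data}(L=\cdot\mid x)$; the Anti‑Labeler, trained by the same cross‑entropy but on \emph{generated} images, has optimum $D_{LG}^*(\cdot\mid x)=p_g(L=\cdot\mid x)$. One checks each integrand is strictly concave in the relevant argument so these optima are unique, and (as in \cite{Goodfellow2014}) assumes they are attainable in the non‑parametric limit.

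\emph{Step 2 (virtual generator loss).} Substitute $D^*,D_{LR}^*,D_{LG}^*$ into the generator loss~(\ref{eq:gen_loss}) and drop additive terms independent of $G$. The discriminator term contributes $2\,\mathrm{JSD}\big(p_{data}(x)\,\|\,p_g(x)\big)-\log 4$, the Jensen--Shannon form of the discriminator game. The Labeler term becomes the cross‑entropy $\mathbb{E}_{x\sim p_g}\big[H\big(p_g(L\mid x),p_{data}(L\mid x)\big)\big]$, while the Anti‑Labeler term — which the generator \emph{maximizes}, so it enters the minimized loss with a minus sign — becomes the entropy $-\,\mathbb{E}_{x\sim p_g}\big[H\big(p_g(L\mid x)\big)\big]$. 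The crux is that these two combine into a divergence:
\[
C(G)\;=\;-\log 4\;+\;2\,\mathrm{JSD}\big(p_{data}(x)\,\|\,p_g(x)\big)\;+\;\mathbb{E}_{x\sim p_g}\big[\KL{p_g(L\mid x)}{p_{data}(L\mid x)}\big].
\]
The Anti‑Labeler is essential here: without its entropy term the Labeler cross‑entropy alone would be driven down by a degenerate $G$ that, for each $l$, outputs only the most label‑unambiguous images (mode collapse), whereas the $\KL{}{}$ term above is minimized precisely by truthful label posteriors.

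\emph{Step 3 (identify the minimizer), and the main obstacle.} Both summands of $C(G)+\log4$ are nonnegative, so $C(G)\ge-\log4$, with equality iff $p_g(x)=p_{data}(x)$ and $p_g(L\mid x)=p_{data}(L\mid x)$ for $p_g$‑a.e.\ $x$; multiplying the matched posterior by the matched marginal shows these two conditions together are equivalent to $p_g(x,l)=p_{data}(x,l)$, hence (by the remark above) to $p_g(x\mid l)=p_{data}(x\mid l)$. The bound is attained — the generator that for each $l$ pushes $Z$ forward to $p_{data}(\cdot\mid l)$ yields $p_g=p_{data}$ and $C(G)=-\log4$, and such a $G$ exists in the non‑parametric limit — so any global minimizer $G^*$ satisfies $p_g(G^*(l,Z)=x)=p_{data}(X=x\mid L=l)$. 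I expect the delicate point to be Step 2: verifying that the Anti‑Labeler's entropy contribution exactly cancels the ``confidence reward'' hidden in the Labeler's cross‑entropy and turns it into a genuine KL divergence, so that $C(G)$ is a sum of nonnegative divergences that vanish simultaneously only at $p_g=p_{data}$. A second issue to treat carefully in the formal statement is the gap between the \emph{joint} label posterior $p_{data}(L=l\mid x)$ and its per‑coordinate marginals: the argument above needs the Labeler/Anti‑Labeler to pin down the full label configuration, and if the architecture is coordinatewise, matching all per‑coordinate posteriors together with the image marginal pins down $p_g(x,l)$ only under an extra hypothesis such as conditional independence of the labels given the image, which should then be stated explicitly.
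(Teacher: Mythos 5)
Your strategy is exactly the paper's: fix $G$, compute the optimal discriminator, Labeler and Anti-Labeler in closed form, substitute into the generator loss to get a virtual criterion $C(G)$, and exhibit $C(G)$ as a constant plus a sum of nonnegative divergences that vanish simultaneously iff the generated and data distributions agree. Your Step~1 matches the paper's Lemmas 1--2, and your observation that the Labeler cross-entropy and the Anti-Labeler entropy combine into $\mathbb{E}_{x\sim p_g}\left[\KL{p_g(L\mid x)}{p_{data}(L\mid x)}\right]$ is correct and is an equivalent regrouping of the paper's algebra (by the chain rule for KL, this quantity equals $(1-\rho)\KL{p_g^0}{p_{data}^0}+\rho\KL{p_g^1}{p_{data}^1}-\KL{p_g}{p_{data}}$, which is precisely how these terms appear in the paper's intermediate expression). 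Your closing caveat about per-coordinate labelers requiring labels to be deterministic functions of the image is also exactly the paper's Section on the alternate $d$-label architecture.

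The one concrete slip is in Step~2: you assert the discriminator term contributes $2\,\mathrm{JSD}\!\left(p_{data}\,\|\,p_g\right)-\log 4$, but that is the contribution of the \emph{standard} GAN generator loss $\expectd{\log D}+\expectg{\log(1-D)}$. The loss in (\ref{eq:gen_loss}) is not that: it contains the additional term $-\expectg{\log D(x)}$ (the paper's Remark after (\ref{eq:gen_loss}) calls attention to exactly this), and at $D^*$ the discriminator part evaluates to $\KL{p_{data}}{\tfrac{p_{data}+p_g}{2}}+\KL{p_g}{p_{data}}-\log 2$, not $2\,\mathrm{JSD}-\log 4$. This happens not to break your argument — the corrected term is still a constant plus nonnegative divergences vanishing iff $p_g=p_{data}$, and indeed adding the extra $\KL{p_g}{p_{data}}$ to your label-posterior KL term collapses, again by the chain rule, to the paper's final form $C(G)=\text{const}+\KL{p_{data}}{\tfrac{p_{data}+p_g}{2}}+(1-\rho)\KL{p_g^0}{p_{data}^0}+\rho\KL{p_g^1}{p_{data}^1}$ — but as written your closed form for $C(G)$ is not the one induced by the stated loss, and the cancellation you flag as "the delicate point" is organized differently in the paper precisely because of that extra discriminator term. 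You should rederive Step~2 from (\ref{eq:gen_loss}) as given.
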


The following corollary states that the trained causal implicit generative model for the labels concatenated with CausalGAN is a causal implicit generative model for the labels and image.
\begin{corollary}
Suppose $C:\mathcal{Z}_1\rightarrow \mathcal{L}$ is a causal implicit generative model for the causal graph $D=(\mathcal{V},E)$ where $\mathcal{V}$ is the set of image labels and the observational joint distribution over these labels is strictly positive. Let $G:\mathcal{L}\times \mathcal{Z}_2\rightarrow \mathcal{I}$ be the class conditional generator that can sample from the image distribution conditioned on the given label combination $L\in \mathcal{L}$. Then $G(C(Z_1),Z_2)$ is a causal implicit generative model for the causal graph $D' = (\mathcal{V} \cup \{Image\}, E \cup \{(V_1,Image),(V_2,Image),\hdots, (V_n,Image)\})$.
\end{corollary}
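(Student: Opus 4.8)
\noindent\emph{Proof proposal.} Recall (from the definition of a causal implicit generative model) that a CiGM for a DAG $D=(\mathcal V,E)$ realizing a law $\mathbb P$ over $\mathcal V$ is a feedforward sampler admitting a node-wise decomposition $V=f_V(\mathrm{Pa}_D(V),N_V)$, $V\in\mathcal V$, with mutually independent exogenous inputs $\{N_V\}$, whose output law is $\mathbb P$; such a sampler is automatically a CiGM because ancestral evaluation reproduces $\mathbb P$ and clamping the outputs of a node set $S$ to constants $s$ (i.e.\ $\mathrm{do}(S=s)$) realizes the truncated factorization, hence the $\mathrm{do}(S=s)$ law of the SCM compatible with $(D,\mathbb P)$. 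The plan is to exhibit $G(C(Z_1),Z_2)$ as exactly such a decomposition for $D'$ and the law $p_{data}$ over $\mathcal V\cup\{Image\}$, and then to quote this equivalence. Concretely, three things need to be checked: structural compatibility with $D'$, mutual independence of the exogenous inputs, and correctness of the observational law; the interventional guarantees then come for free.

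\smallskip
\noindent\emph{Structure, independence, observational law.} Since $C$ is a CiGM for $D$, it decomposes as $V_i=f_i(\mathrm{Pa}_D(V_i),N_i)$ with $\{N_i\}_i$ mutually independent functions of $Z_1$, and its output law over $\mathcal V=\{V_1,\dots,V_n\}$ is the data label distribution $p_{data}(L)$. Appending $Image=G(V_1,\dots,V_n,Z_2)$ with $Z_2\perp Z_1$ makes $\{N_i\}_i\cup\{Z_2\}$ a mutually independent family, and the functional-dependency graph of the combined sampler is $D$ with an added edge $V_i\to Image$ for every $i$ — that is, $D'$, which is a DAG since $Image$ is a sink. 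For the observational law: $C(Z_1)$ is distributed as $p_{data}(L)$, and since $Z_2\perp Z_1$, conditionally on $C(Z_1)=l$ the image $G(l,Z_2)$ is distributed as $p_{data}(X\mid L=l)$ by the defining property of the conditional generator $G$ — which is well defined for \emph{every} $l\in\mathcal L$ precisely because the observational label joint is strictly positive. Multiplying gives joint law $p_{data}(L=l)\,p_{data}(X=x\mid L=l)=p_{data}(L=l,X=x)$, as required.

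\smallskip
\noindent\emph{Interventions.} Split on whether $Image$ is intervened upon. If $S\subseteq\mathcal V$, clamping the nodes of $S$ in the combined sampler clamps them inside $C$, which outputs the correct post-intervention label law $\mathbb P^{\mathrm{do}(S=s)}_D$; the image is then generated by the unchanged map $G(\cdot,Z_2)$, so conditionally on the labels it is still distributed as $p_{data}(X\mid L)$, and the combined law is $\mathbb P^{\mathrm{do}(S=s)}_D(L)\,p_{data}(X\mid L)$. By the truncated factorization this is exactly the $\mathrm{do}(S=s)$ law of the SCM compatible with $D'$ and $p_{data}$: the only factor added relative to the label model is $p_{data}(X\mid L)$, whose parent set in $D'$ is all of $\mathcal V$, so it is never truncated. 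If instead $Image\in S$, then since $Image$ is a sink, clamping it leaves the label sub-network and any label interventions in $S$ untouched, and both the combined sampler and the SCM return the label-interventional law together with the point mass at the clamped image. These two cases exhaust all interventions, which completes the argument.

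\smallskip
\noindent\emph{Main obstacle.} The delicate step is the interventional argument: one must justify that the image's conditional law given the labels is the \emph{same} map before and after an arbitrary intervention on a subset of the labels (the mechanism-invariance of $G$), and that this map still makes sense for label combinations that may have data-probability zero and hence only ever arise post-intervention. This is exactly what forces the strict-positivity hypothesis, and it is why $G(C(Z_1),Z_2)$ must be read structurally — as the SCM assembled above with $Z_2\perp Z_1$ — rather than merely distributionally.
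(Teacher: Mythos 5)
Your proof is correct and follows essentially the same route as the paper: show the concatenated network is consistent with $D'$, verify the observational joint via $\pr{X,L}=\pr{X\mid L}\,\pr{L}$, and conclude that all interventional distributions are then also correct. The only difference is that where the paper simply invokes Proposition \ref{prop:causal} (two causal models on the same graph with the same observational distribution agree on every interventional distribution), you re-derive that fact for this special case by checking the truncated factorization directly, with a case split on whether the $Image$ node is intervened.
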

In words, the corollary states the following: Consider a causal graph $D'$ on the image \emph{labels} and the \emph{image} variable, where every \emph{label} causes the \emph{image}. Then combining an implicit causal generative model for the induced subgraph on the \emph{labels} with a conditional generative model for the \emph{image} given the \emph{labels} yields a causal implicit generative model for $D'$.

Our contributions are as follows:
\begin{itemize}
\item We observe that adversarial training can be used after simply structuring the generator architecture based on the causal graph to train a causal implicit generative model.
\item We empirically show how simple GAN training can be adapted using WassersteinGAN to learn a graph-structured generative model that outputs \emph{essentially discrete}\footnote{Each of the generated labels are sharply concentrated around 0 and 1.} labels. 
\item We consider the problem of conditional and interventional sampling of images given a causal graph over binary labels. We propose a two-stage procedure to train a causal implicit generative model over the binary labels and the image. As part of this procedure, we propose a novel conditional GAN architecture and loss function. We show that the global optimal generator\footnote{Global optimal after the remaining network is trained to optimality.} provably samples from the class conditional distributions. 
\item We propose a natural but nontrivial extension of BEGAN to accept labels: using the same motivations for margins as in BEGAN \cite{Berthelot2017}, we arrive at a "margin of margins" term, which cannot be neglected. We show empirically that this model, which we call CausalBEGAN, produces high quality images that capture the image labels.
\item We evaluate our causal implicit generative model training framework on the labeled CelebA data \cite{liu2015faceattributes}. We show that the combined architecture generates images that can capture both the observational and interventional distributions over images and labels jointly \footnote{Our code is available at \url{https://github.com/mkocaoglu/CausalGAN}}. We show the surprising result that CausalGAN and CausalBEGAN can produce high-quality label-consistent images \emph{even for label combinations realized under interventions that never occur during training}, e.g., "woman with mustache". 
\end{itemize}

\section{Related Work}
Using a generative adversarial network conditioned on the image labels has been proposed before: In \cite{Mirza2014}, authors propose to extend generative adversarial networks to the setting where there is extra information, such as labels. The label of the image is fed to both the generator and the discriminator. This architecture is called conditional GAN. In \cite{Chen2016}, authors propose a new architecture called InfoGAN, which attempts to maximize a variational lower bound of mutual information between the labels given to the generator and the image. In \cite{Odena2016}, authors propose a new conditional GAN architecture, which performs well on higher resolution images. A class label is given to the generator. Image from the dataset is also chosen conditioned on this label. In addition to deciding if the image is real or fake, the discriminator has to also output an estimate of the class label.

Using causal principles for deep learning and using deep learning techniques for causal inference has been recently gaining attention. In \cite{Paz2016}, authors observe the connection between conditional GAN layers, and structural equation models. Based on this observation, they use CGAN \cite{Mirza2014} to learn the causal direction between two variables from a dataset. In \cite{Paz2017}, the authors propose using a neural network in order to discover the causal relation between image class labels based on static images. In \cite{Bahadori2017}, authors propose a new regularization for training a neural network, which they call causal regularization, in order to assure that the model is predictive in a causal sense. In a very recent work \cite{Besserve2017}, authors point out the connection of GANs to causal generative models. However they see image as a cause of the neural net weights, and do not use labels.

BiGAN \cite{Donahue2016} and ALI \cite{Dumoulin2017} improve the standard GAN framework to provide the functionality of learning the mapping from image space to latent space. In CoGAN \cite{Liu2016} the authors learn a joint distribution given samples from marginals by enforcing weight sharing between generators. This can, for example, be used to learn the joint distribution between image and labels. It is not clear, however, if this approach will work when the generator is structured via a causal graph. SD-GAN \cite{Donahue2017} is an architecture which splits the latent space into "Identity" and "Observation" portions. To generate faces of the same person, one can then fix the identity portion of the latent code. This works well for datasets where each identity has multiple observations. Authors in \cite{Antipov2017} use conditional GAN of \cite{Mirza2014} with a one-hot encoded vector that encodes the age interval. A generator conditioned on this one-hot vector can then be used for changing the age attribute of a face image. Another application of generative models is in compressed sensing: Authors in \cite{Bora2017} give compressed sensing guarantees for recovering a vector, if the data lies close to the output of a trained generative model.

\section{Background}
\label{sec:background}
\subsection{Causality Basics}
In this section, we give a brief introduction to causality. Specifically, we use Pearl's framework \cite{Pearl2009}, i.e., structural causal models, which uses structural equations and directed acyclic graphs between random variables to represent a causal model. We explain how causal principles apply to our framework through examples. For a more detailed treatment of the subject with more of the technical details, see \cite{Pearl2009}.

Consider two random variables $X, Y$. Within the structural causal modeling framework and under the causal sufficiency assumption\footnote{In a causally sufficient system, every unobserved variable affects no more than a single observed variable.}, $X$ \emph{causes} $Y$ simply means that there exists a function $f$ and some unobserved random variable $E$, independent from $X$, such that $Y = f(X,E)$. Unobserved variables are also called \emph{exogenous}. The causal graph that represents this relation is $X\rightarrow Y $. In general, a causal graph is a directed acyclic graph implied by the structural equations: The parents of a node in the causal graph represent the \emph{causes} of that variable. The causal graph can be constructed from the structural equations as follows: The parents of a variable are those that appear in the structural equation that determines the value of that variable.

Formally, a structural causal model is a tuple $\mathcal{M} = (\mathcal{V}, \mathcal{E}, \mathcal{F},\mathcal{P}_E(.))$ that contains a set of functions $\mathcal{F} = \{f_1,f_2,\hdots,f_n\}$, a set of random variables $V = \{X_1,X_2,\hdots, X_n\}$, a set of exogenous random variables $\mathcal{E}=\{E_1,E_2,\hdots, E_n\}$, and a probability distribution over the exogenous variables $\mathcal{P}_{\mathcal{E}}$ \footnote{The definition provided here assumes causal sufficiency, i.e., there are no exogenous variables that affect more than one observable variable. Under causal sufficiency, Pearl's model assumes that the distribution over the exogenous variables is a product distribution, i.e., exogenous variables are mutually independent.}. The set of observable variables $\mathcal{V}$ has a joint distribution implied by the distributions of $\mathcal{E}$, and the functional relations $\mathcal{F}$. This distribution is the projection of $\mathcal{P}_{\mathcal{E}}$ onto the set of variables $\mathcal{V}$ and is shown by $\mathcal{P}_\mathcal{V}$. The causal graph $D$ is then the directed acyclic graph on the nodes $\mathcal{V}$, such that a node $X_j$ is a parent of node $X_i$ if and only if $X_j$ is in the domain of $f_i$, i.e., $X_i = f_i(X_j, S, E_i)$, for some $S\subset V$. The set of parents of variable $X_i$ is shown by $Pa_i$. $D$ is then a Bayesian network for the induced joint probability distribution over the observable variables $\mathcal{V}$. We assume causal sufficiency: Every exogenous variable is a direct parent of at most one observable variable.

An \emph{intervention}, is an operation that changes the underlying causal mechanism, hence the corresponding causal graph. An intervention on $X_i$ is denoted as $do(X_i=x_i)$. It is different from conditioning on $X_i=x$ in the following way: An intervention removes the connections of node $X_i$ to its parents, whereas conditioning does not change the causal graph from which data is sampled. The interpretation is that, for example, if we set the value of $X_i$ to 1, then it is no longer determined through the function $f_i(Pa_i, E_i)$. An intervention on a set of nodes is defined similarly. The joint distribution over the variables after an intervention (post-interventional distribution) can be calculated as follows: Since $D$ is a Bayesian network for the joint distribution, the observational distribution can be factorized as $P(x_1,x_2,\hdots x_n) = \prod_{i\in [n]} \Pr(x_i|Pa_i)$, where the nodes in $Pa_i$ are assigned to the corresponding values in $\{x_i\}_{i\in [n]}$. After an intervention on a set of nodes $X_S \coloneqq \{X_i\}_{i\in S}$, i.e., $do(X_S = \mat{s})$, the post-interventional distribution is given by $\prod_{i \in [n]\backslash S }\Pr(x_i|Pa_i^S)$, where $Pa_i^S$ is the shorthand notation for the following assignment: $X_j = x_j $ for $X_j\in Pa_i$ if $j \notin S$ and $X_j = \mat{s}(j)$ if $j\in S$\footnote{With slight abuse of notation, we use $\mat{s}(j)$ to represent the value assigned to variable $X_j$ by the intervention rather than the $j$th coordinate of $\mat{s}$}.

In general it is not possible to identify the true causal graph for a set of variables without performing experiments or making additional assumptions. This is because there are multiple causal graphs that lead to the same joint probability distribution even for two variables \cite{Spirtes2001}. This paper does not address the problem of learning the causal graph: We assume the causal graph is given to us, and we learn a causal model, i.e.,  the functions and the distributions of the exogenous variables comprising the structural equations\footnote{Even when the causal graph is given, there will be many different sets of functions and exogenous noise distributions that explain the observed joint distribution for that  causal graph. We are learning one such model.}. There is significant prior work on learning causal graphs that could be used before our method, see e.g. \cite{EberhardtThesis, Hoyer2008, Hyttinen2013, hauser2014two, Shanmugam2015, Paz2015a, Etesami2016, Quinn2015, Kontoyiannis2016, Kocaoglu2017, Kocaoglu2017a}. When the true causal graph is unknown we can use any feasible graph, i.e., any Bayesian network that respects the conditional independencies present in the data. If only a few conditional independencies are known, a richer model (i.e., a denser Bayesian network) can be used, although a larger number of functional relations should be learned in that case. We explore the effect of the used Bayesian network in Section \ref{sec:results}. If the used Bayesian network has edges that are inconsistent with the true causal graph, our conditional distributions will be correct, but the interventional distributions will be different.

\section{Causal Implicit Generative Models}
\label{sec:implicit}
Implicit generative models \cite{Mohamed2016} are used to sample from a probability distribution without an explicit parameterization. Generative adversarial networks are arguably one of the most successful examples of implicit generative models. Thanks to an adversarial training procedure, GANs are able to produce realistic samples from distributions over a very high dimensional space, such as images. To sample from the desired distribution, one samples a vector from a known distribution, such as Gaussian or uniform, and feeds it into a feedforward neural network which was trained on a given dataset. Although implicit generative models can sample from the data distribution, they do not provide the functionality to sample from interventional distributions. \emph{Causal implicit generative models} provide a way to sample from both observational and interventional distributions. 

We show that generative adversarial networks can also be used for training causal implicit generative models. Consider the simple causal graph $X\rightarrow Z\leftarrow Y$. Under the causal sufficiency assumption, this model can be written as $X = f_X(N_X), Y = f_Y(N_Y), Z = f_Z(X,Y,N_Z)$, where $f_X,f_Y,f_Z$ are some functions and $N_X, N_Y, N_Z$ are jointly independent variables. The following simple observation is useful: \emph{In the GAN training framework, generator neural network connections can be arranged to reflect the causal graph structure}. Consider Figure \ref{fig:XYZ}. The feedforward neural networks can be used to represent the functions $f_X,f_Y,f_Z$. The noise terms can be chosen as independent, complying with the condition that $(N_X,N_Y,N_Z)$ are jointly independent. Hence this feedforward neural network can be used to represents the causal graph $X\rightarrow Z\leftarrow Y$ if $f_X, f_Y, f_Z$ are within the class of functions that can be represented with the given family of neural networks.

The following proposition is well known in the causality literature. It shows that given the true causal graph, two causal models that have the same observational distribution have the same interventional distribution for any intervention.
\begin{proposition}
\label{prop:causal}
Let $\mathcal{M}_1 = (D_1=(V,E), N_1, \mathcal{F}_1,\mathcal{P}_{N_1}(.)), \mathcal{M}_2 = (D_2=(V,E), N_2, \mathcal{F}_2,\mathcal{Q}_{N_2}(.))$ be two causal models. If $\mathcal{P}_V(.) = \mathcal{Q}_V(.)$, then $\mathcal{P}_V(.|\Do (S)) = \mathcal{Q}_V(. | \Do (S))$
\end{proposition}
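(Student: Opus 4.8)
The plan is to reduce each of the two interventional distributions to the truncated factorization (manipulation/g-formula) of its underlying observational law, and then observe that this formula depends only on ingredients the two models share: the common graph $D=(V,E)$ and the conditional-of-a-node-given-its-parents kernels of the observational distribution.

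First I would recall, as stated in Section~\ref{sec:background}, that since $D_1=D_2=D$ the graph $D$ is a Bayesian network for \emph{both} $\mathcal{P}_V$ (coming from $\mathcal{M}_1$) and $\mathcal{Q}_V$ (coming from $\mathcal{M}_2$); hence both joints factor as $\prod_{i\in[n]}\Pr(x_i\mid Pa_i)$ with the \emph{same} node-to-parent assignment — this is where it matters that the edge set $E$, not merely the vertex set, is shared. Next, because $\mathcal{P}_V$ and $\mathcal{Q}_V$ coincide as joint distributions on $V$, their marginals on every subset — in particular on $\{X_i\}\cup Pa_i$ — coincide, so the conditional kernels satisfy $\Pr_{\mathcal{M}_1}(x_i\mid Pa_i)=\Pr_{\mathcal{M}_2}(x_i\mid Pa_i)$ for every $i$ and every parent configuration of positive probability.

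Then, for an arbitrary intervention $\Do(X_S=\mathbf{s})$, I would invoke the truncated factorization recalled in Section~\ref{sec:background}: the post-interventional law of $\mathcal{M}_1$ is $\prod_{i\in[n]\setminus S}\Pr_{\mathcal{M}_1}(x_i\mid Pa_i^S)$ and that of $\mathcal{M}_2$ is the same expression with $\mathcal{M}_2$'s kernels, where $Pa_i^S$ denotes the parent configuration with the $S$-coordinates reset to $\mathbf{s}$. Each factor is one of the kernels from the previous step, so the two products agree term by term, giving $\mathcal{P}_V(\cdot\mid\Do(S))=\mathcal{Q}_V(\cdot\mid\Do(S))$; as $S$ and $\mathbf{s}$ were arbitrary, this proves the proposition. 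An essentially equivalent alternative is to express each post-interventional law directly as the pushforward of the exogenous distribution through the structural equations with $f_i$, $i\in S$, replaced by the constants $\mathbf{s}(i)$, and then prove by induction along a topological order of $D$ that this pushforward equals the truncated factorization, again reducing to the two shared ingredients above.

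The one genuinely delicate point, and the step I expect to need the most care, is the well-definedness of the conditional kernels on exactly the configurations $Pa_i^S$ appearing after truncation: such a configuration fixes some parents of $X_i$ to the intervened values $\mathbf{s}$, and for $\Pr_{\mathcal{M}_1}(x_i\mid Pa_i^S)$ and $\Pr_{\mathcal{M}_2}(x_i\mid Pa_i^S)$ to be defined — and comparable — one needs those conditioning events to be observationally non-null. Under the strict-positivity assumption used elsewhere in the paper this is automatic; in general one either restricts to interventions whose relevant events have positive probability or passes to the pushforward formulation, where no conditioning is needed. In either case this is a standard technical caveat rather than a substantive obstacle, so the argument is essentially a bookkeeping exercise once the truncated factorization is in hand.
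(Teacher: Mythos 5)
Your proof is correct and follows essentially the same route as the paper's own (much terser) argument: both reduce to the fact that the shared graph $D$ is a Bayesian network for the common observational distribution, so the truncated factorization built from the conditionals $\Pr(x_i\mid Pa_i)$ determines every interventional distribution identically for the two models. Your explicit treatment of the well-definedness of the kernels on possibly-null parent configurations is a careful addition that the paper leaves implicit, but it does not change the substance of the argument.
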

\begin{proof}
Note that $D_1$ and $D_2$ are the same causal Bayesian networks \cite{Pearl2009}. Interventional distributions for causal Bayesian networks can be directly calculated from the conditional probabilities and the causal graph. Thus, $\mathcal{M}_1$ and $\mathcal{M}_2$ have the same interventional distributions.
\end{proof}

We have the following definition, which ties a feedforward neural network with a causal graph:
\begin{definition}
\label{def:consistent}
Let $Z = \{Z_1, Z_2, \hdots, Z_m\}$ be a set of mutually independent random variables. A feedforward neural network $G$ that outputs the vector $G(Z) = [G_1(Z), G_2(Z), \hdots, G_n(Z)]$ is called \textbf{consistent} with a causal graph $D=([n],E)$, if $\forall i \in [n]$, $\exists$ a set of layers $f_i$ such that $G_i(Z)$ can be written as $G_i(Z) = f_i(\{G_j(Z)\}_{j\in Pa_i}, Z_{S_i})$, where $Pa_i$ are the set of parents of $i$ in $D$, and $Z_{S_i} \coloneqq \{Z_j\ : j\in S_i\}$ are collections of subsets of $Z$ such that $\{S_i:i\in [n]\}$ is a partition of [m]. 
\end{definition}
\begin{figure}[t]
\centering
\subfloat[Standard generator architecture and the causal graph it represents]
{\label{fig:causal_interpretation}
\includegraphics[width=0.45\linewidth]{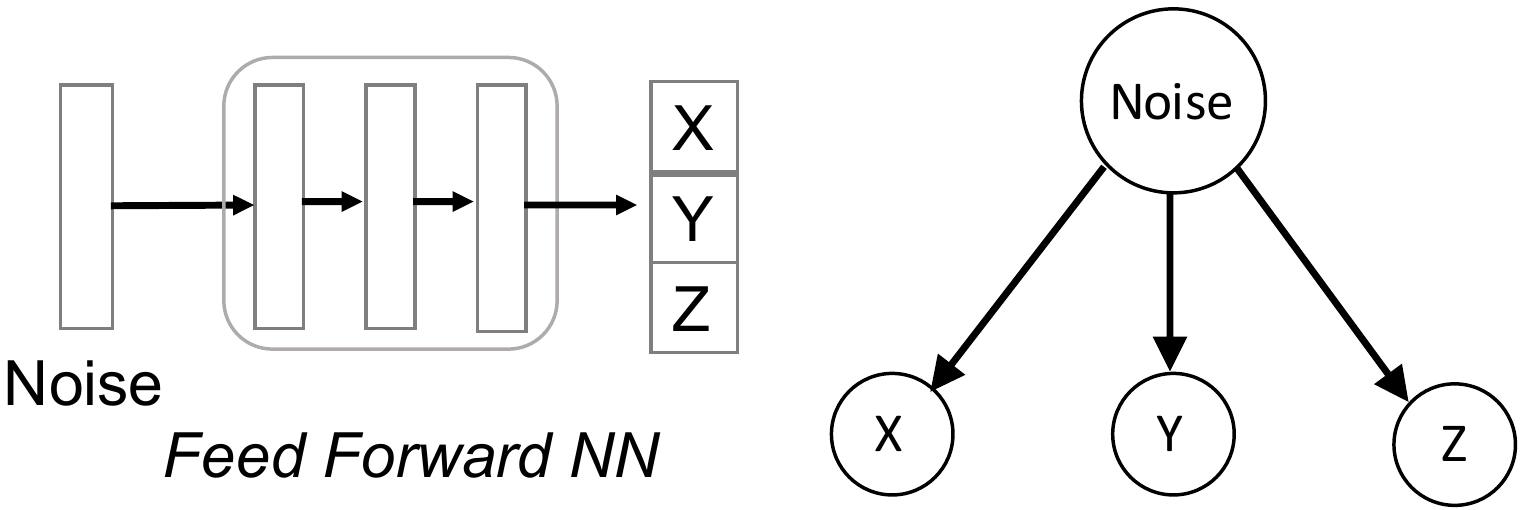}}
\hspace{0.25in}
\subfloat[Generator neural network architecture that represents the causal graph $X\rightarrow Z \leftarrow Y$]
{\label{fig:XYZ}
\includegraphics[width=0.23\linewidth]{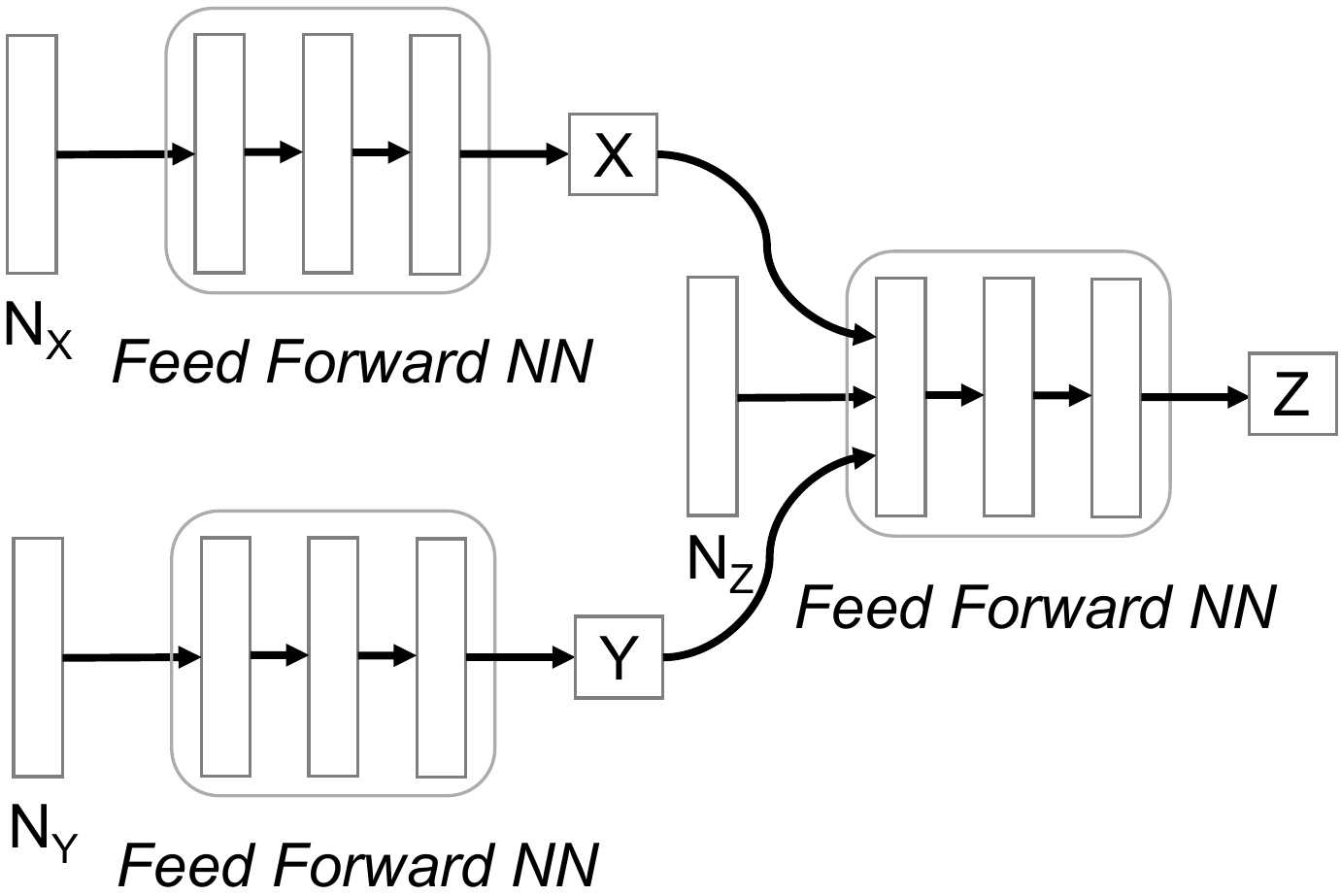}}
\caption{(a) The causal graph implied by the standard generator architecture, feedforward neural network. (b) A neural network implementation of the causal graph $X\rightarrow Z \leftarrow Y$: Each feed forward neural net captures the function $f$ in the structural equation model $V = f(Pa_V, E)$. }
\end{figure}
Based on the definition, we say a feedforward neural network $G$ with output 
\begin{equation}
G(Z) = [G_1(Z), G_2(Z), \hdots, G_n(Z)],
\end{equation} 
is a causal implicit generative model for the causal model $\mathcal{M}=(D=([n],E), N, \mathcal{F},\mathcal{P}_{N}(.))$ if $G$ is consistent with the causal graph $D$ and $\Pr(G(Z) = \mat{x}) = \mathcal{P}_V(\mat{x}), \forall \mat{x}$.

We propose using adversarial training where the generator neural network is consistent with the causal graph according to Definition \ref{def:consistent}. This notion is illustrated in Figure \ref{fig:XYZ}. 

\section{Causal Generative Adversarial Networks}
\label{sec:causalGAN}
Causal implicit generative models can be trained given a causal graph and samples from a joint distribution. However, for the application of image generation with binary labels, we found it difficult to simultaneously learn the joint label and image distribution \footnote{Please see the Appendix for our primitive result using this naive attempt.}. For these applications, we focus on dividing the task of learning a causal implicit generative causal model into two subtasks: First, learn the causal implicit generative model over a small set of variables. Then, learn the remaining set of variables conditioned on the first set of variables using a conditional generative network. For this training to be consistent with the causal structure, every node in the first set should come before any node in the second set with respect to the partial order of the causal graph. We assume that the problem of generating images based on the image labels inherently contains a causal graph similar to the one given in Figure \ref{fig:image_gen}, which makes it suitable for a two-stage training: First, train a generative model over the labels, then train a generative model for the images conditioned on the labels. As we show next, our new architecture and loss function (CausalGAN) assures that the optimum generator outputs the label conditioned image distributions. Under the assumption that the joint probability distribution over the labels is strictly positive\footnote{This assumption does not hold in the CelebA dataset: $\Pr{(Male=0,Mustache=1)}=0$. However, we will see that the trained model is able to extrapolate to these interventional distributions when the CausalGAN model is not trained for very long.}, combining pretrained causal generative model for labels with a label-conditioned image generator gives a causal implicit  generative model for images. The formal statement for this corollary is postponed to Section \ref{sec:theoretical}.

\begin{figure}[t]
\centering
\includegraphics[width=0.41\linewidth]{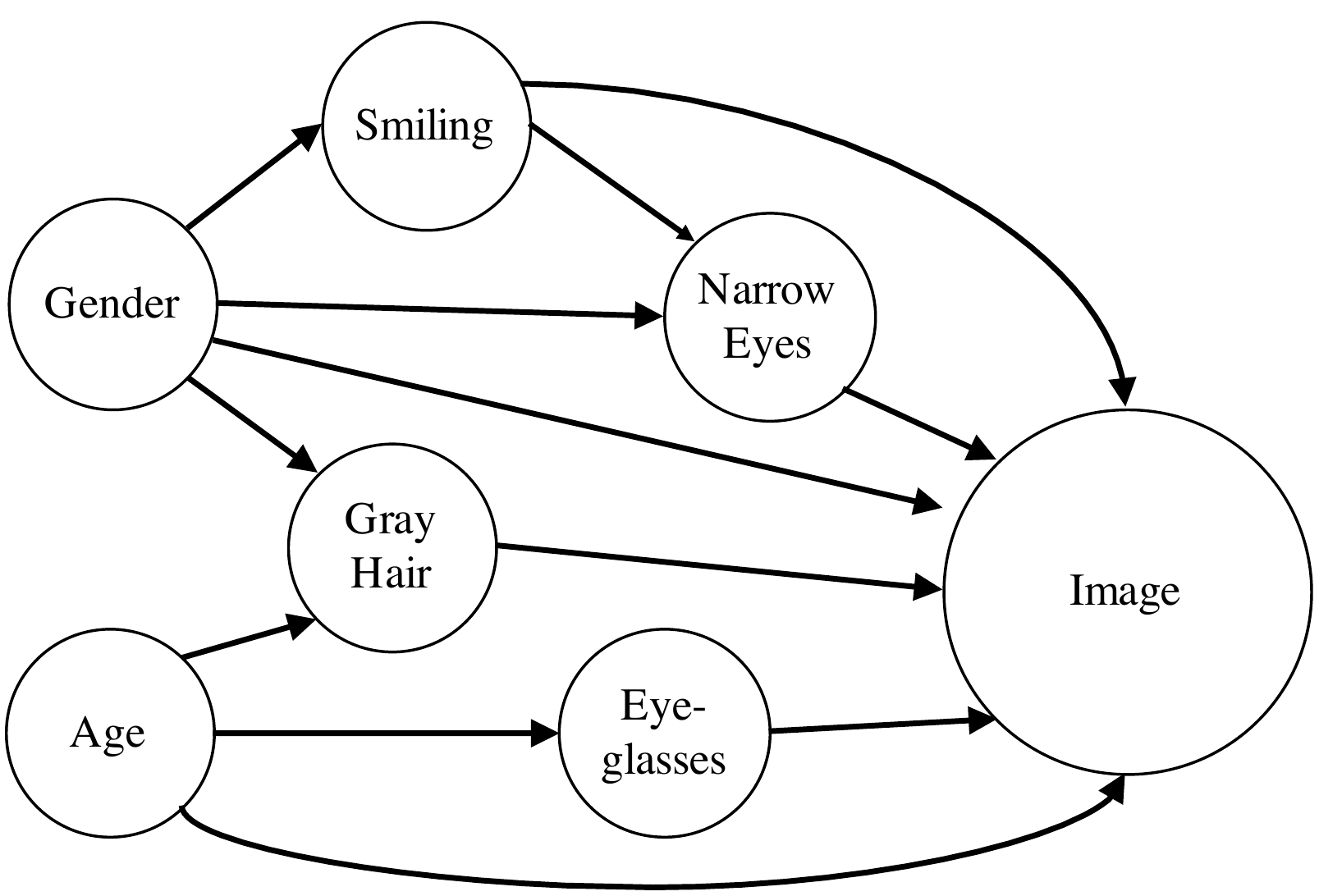}
\caption{A plausible causal model for image generation.}
\label{fig:image_gen}
\end{figure}
\subsection{Causal Implicit Generative Model for Binary Labels}
\label{subsec:cc}
Here we describe the adversarial training of a causal implicit generative model for binary labels. This generative model, which we call the \emph{Causal Controller}, will be used for controlling which distribution the images will be sampled from when intervened or conditioned on a set of labels. As in Section \ref{sec:implicit}, we structure the Causal Controller network to sequentially produce labels according to the causal graph.

Since our theoretical results hold for binary labels, we prefer a generator which can sample from an essentially discrete label distribution \footnote{Ignoring the theoretical considerations, adding noise to transform the labels artificially into continuous targets also works. However we observed better empirical convergence with this technique.}. However, the standard GAN training is not suited for learning a discrete distribution due to the properties of Jensen-Shannon divergence. To be able to sample from a discrete distribution, we employ WassersteinGAN \cite{Arjovsky2017}. We used the model of \cite{Gulrajani2017}, where the Lipschitz constraint on the gradient is replaced by a penalty term in the loss. 

\subsection{CausalGAN Architecture}
\label{causalgan:architecture}
As part of the two-step process proposed in Section \ref{sec:implicit} of learning a causal implicit generative model over the labels \emph{and} the image variables, we design a new conditional GAN architecture to generate the images based on the labels of the Causal Controller. Unlike previous work, our new architecture and loss function assures that the optimum generator outputs the label conditioned image distributions. We use a pretrained Causal Controller which is not further updated.

\textbf{Labeler and Anti-Labeler:}
We have two separate labeler neural networks. \emph{The Labeler} is trained to estimate the labels of images in the dataset. \emph{The Anti-Labeler} is trained to estimate the labels of the images which are sampled from the generator. The label of a generated image is the label produced by the Causal Controller.

\begin{figure}[t]
\centering
\includegraphics[width=0.75\linewidth]{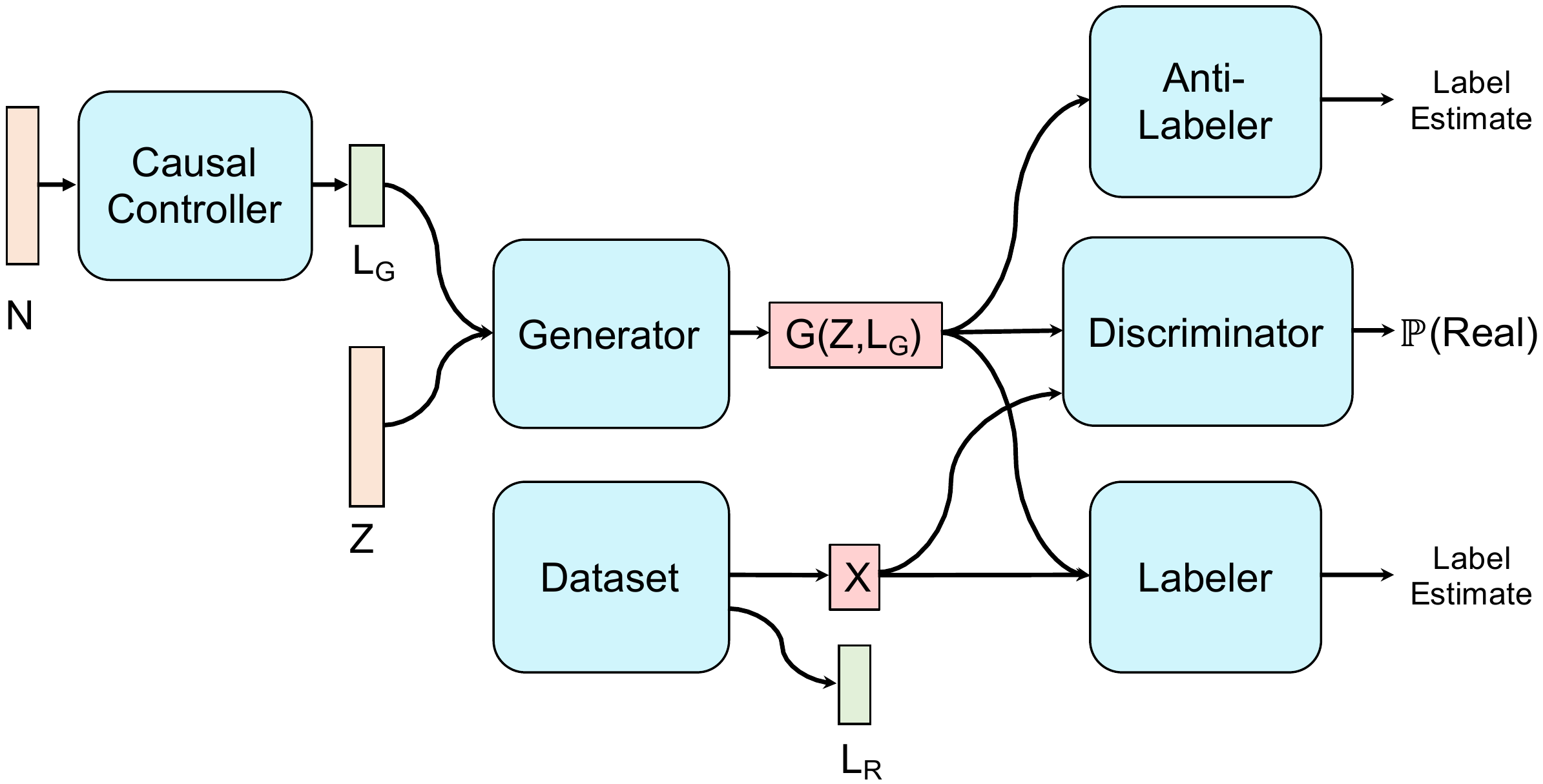}
\caption{CausalGAN architecture.}
\label{fig:architecture}
\end{figure}

\textbf{Generator:} The objective of the generator is 3-fold: producing realistic images by competing with the discriminator, capturing the labels it is given in the produced images by minimizing the Labeler loss, and avoiding drifting towards unrealistic image distributions that are easy to label by maximizing the Anti-Labeler loss. For the optimum Causal Controller, Labeler, and Anti-Labeler, we will later show that the optimum generator samples from the same distribution as the class conditional images.

The most important distinction of CausalGAN with the existing conditional GAN architectures is that it uses an Anti-Labeler network in addition to a Labeler network. Notice that the theoretical guarantee we develop in Section \ref{sec:theoretical} does not hold when Anti-Labeler network is not used. Intuitively, the Anti-Labeler loss discourages the generator network to generate only few typical faces for a fixed label combination. This is a phenomenon that we call \emph{label-conditioned mode collapse}. In the literature, minibatch-features are one of the most popular techniques used to avoid mode-collapse \cite{Salimans2016}. However, the diversity within a batch of images due to different label combinations can make this approach ineffective for combatting label-conditioned mode collapse. We observe that this intuition carries over to practice.

\subsubsection*{Loss Functions}
We present the results for a single binary label $l$. For the more general case of $d$ binary labels, we have an extension where the labeler and the generator losses are slightly modified. We explain this extension in the supplementary material in Section \ref{sec:generalized_proof} along with the proof that the optimal generator samples from the class conditional distribution given the $d-$dimensional label vector. Let $\pr{l=1}=\rho$. We use $p_{g}^0(x) \coloneqq \pr{G(z,l) = x | l=0}$ and $p_{g}^1(x) \coloneqq \pr{G(z,l) = x | l=1}$. $G(.), D(.), D_{LR}(.)$, and $D_{LG}(.)$ are the mappings due to generator, discriminator, Labeler, and Anti-Labeler respectively.

The generator loss function of CausalGAN contains label loss terms, the GAN loss in \cite{Goodfellow2014}, and an added loss term due to the discriminator. With the addition of this term to the generator loss, we will be able to prove that the optimal generator outputs the class conditional image distribution. This result will also be true for multiple binary labels.

For a fixed generator, Anti-Labeler solves the following optimization problem:
\begin{equation}
\label{eq:labeler_g_loss}
\max\limits_{D_{LG}} \rho\expectgo{\log(D_{LG}(x))} + (1-\rho)\expectgz{\log(1-D_{LG}(x)}.
\end{equation}

\noindent
The Labeler solves the following optimization problem:
\begin{equation}
\label{eq:labeler_r_loss}
\max\limits_{D_{LR}} \rho\expectdo{\log(D_{LR}(x))} + (1-\rho)\expectdz{\log(1-D_{LR}(x)}.
\end{equation}

\noindent
For a fixed generator, the discriminator solves the following optimization problem:
\begin{align}
\label{eq:discLoss}
\max\limits_{D} \expectd{\log(D(x))} + \expectg{\log\left(\frac{1-D(x)}{D(x)}\right)}.
\end{align}

\noindent
For a fixed discriminator, Labeler and Anti-Labeler, generator solves the following optimization problem:
\begin{align}
\label{eq:gen_loss}
&\min\limits_{G} \expectd{\log(D(x))} + \expectg{\log\left(\frac{1-D(x)}{D(x)}\right)} \nonumber\\
&- \rho\expect{x\sim p_g^1(x)}{ \log(D_{LR}(X))} - (1-\rho)\expect{x\sim p_g^0(x)}{\log(1-D_{LR}(X))}\nonumber \\
+& \rho \expect{x\sim p_g^1(x)}{  \log(D_{LG}(X))} + (1-\rho) \expect{x\sim p_g^0(x)}{ \log(1-D_{LG}(X))}.
\end{align}

\noindent
\textbf{Remark:} Although the authors in \cite{Goodfellow2014} have the additive term $\expectg{\log(1-D(X))}$ in the definition of the loss function, in practice they use the term $\expectg{-\log(D(X))}$. It is interesting to note that this is the extra loss terms we need for the global optimum to correspond to the class conditional image distributions under a label loss.

\subsection{CausalBEGAN Architecture}
\label{causalbegan:architecture}
In this section, we propose a simple, but non-trivial extension of BEGAN where we feed image labels to the generator. One of the central contributions of BEGAN \cite{Berthelot2017} is a control theory-inspired boundary equilibrium approach that encourages generator training only when the discriminator is near optimum and its gradients are the most informative. The following observation helps us carry the same idea to the case with labels: Label gradients are most informative when the image quality is high. Here, we introduce a new loss and a set of margins that reflect this intuition.

Formally, let $\mathcal{L}(x)$ be the average $L_1$ pixel-wise autoencoder loss for an image $x$, as in BEGAN. Let $\mathcal{L}_{sq}(u,v)$ be the squared loss term, i.e., $\norm{u-v}_2^2$. Let $(x,l_x)$ be a sample from the data distribution, where $x$ is the image and $l_x$ is its corresponding label. Similarly, $G(z,l_g)$ is an image sample from the generator, where $l_g$ is the label used to generate this image. Denoting the space of images by $\mathcal{I}$, let $G:\mathbb{R}^n \times  \{0,1\}^m \mapsto \mathcal{I}$ be the generator. As a naive attempt to extend the original BEGAN loss formulation to include the labels, we can write the following loss functions:
\begin{align}
Loss_D &= \mathcal{L}(x) - \mathcal{L}(Labeler(G(z,l))) + \mathcal{L}_{sq}(l_x,Labeler(x)) - \mathcal{L}_{sq}(l_g,Labeler(G(z,l_g))), \nonumber\\
Loss_G &= \mathcal{L}(G(z,l_g)) + \mathcal{L}_{sq}(l_g,Labeler(G(z,l_g))). \label{eq:naiveBEGAN}
\end{align}

However, this naive formulation does not address the use of margins, which is extremely critical in the BEGAN formulation. Just as a better trained BEGAN discriminator creates more useful gradients for image generation, a better trained Labeler is a prerequisite for meaningful gradients. This motivates an additional margin-coefficient tuple $(b_2,c_2)$, as shown in (\ref{eq:b_2},\ref{eq:c_2}). 

The generator tries to jointly minimize the two loss terms in the formulation in (\ref{eq:naiveBEGAN}). We empirically observe that occasionally the image quality will suffer because the images that best exploit the Labeler network are often not obliged to be realistic, and can be noisy or misshapen. Based on this, label loss seems unlikely to provide useful gradients unless the image quality remains good. Therefore we encourage the generator to incorporate label loss only when the \emph{image quality margin} $b_1$ is large compared to the \emph{label margin} $b_2$. To achieve this, we introduce a new \emph{ margin of margins} term, $b_3$. As a result, the margin equations and update rules are summarized as follows, where $\lambda_1,\lambda_2,\lambda_3$ are learning rates for the coefficients.
\begin{align}
b_1 &= \gamma_1*\mathcal{L}(x) - \mathcal{L}(G(z,l_g)).\nonumber\\
b_2 &= \gamma_2*\mathcal{L}_{sq}(l_x,Labeler(x)) - \mathcal{L}_{sq}(l_g,Labeler(G(z,l_g))).\label{eq:b_2}\\
b_3 &= \gamma_3*relu(b_1) - relu(b_2).\nonumber\\
c_1 &\gets clip_{[0,1]}(c_1 + \lambda_1*b_1).\nonumber\\
c_2 &\gets clip_{[0,1]}(c_2 + \lambda_2*b_2).\label{eq:c_2}\\
c_3 &\gets clip_{[0,1]}(c_3+ \lambda_3*b_3).\nonumber\\
Loss_D &= \mathcal{L}(x) - c_1*\mathcal{L}(G(z,l_g)) + \mathcal{L}_{sq}(l_x,Labeler(x)) - c_2*\mathcal{L}_{sq}(l_g,G(z,l_g)). \label{eq:z_t}\\
Loss_G &= \mathcal{L}(G(z,l_g)) + c_3*\mathcal{L}_{sq}(l_g,Labeler(G(z,l_g))). \nonumber
\end{align}

One of the advantages of BEGAN is the existence of a monotonically decreasing scalar which can track the convergence of the gradient descent optimization. Our extension preserves this property as we can define
\begin{equation}
\mathcal{M}_{complete} = \mathcal{L}(x) + |b_1| + |b_2| + |b_3|,
\label{eqn:M}
\end{equation}
and show that $\mathcal{M}_{complete}$ decreases progressively during our optimizations. See Figure \ref{fig:convergence_of_causalBEGAN} in the Appendix.

\section{Theoretical Guarantees for CausalGAN}
\label{sec:theoretical}
In this section, we show that the best CausalGAN generator for the given loss function outputs the class conditional image distribution when Causal Controller outputs the real label distribution and labelers operate at their optimum. We show this result for the case of a single binary label $l\in\{0,1\}$. The proof can be extended to multiple binary variables, which we explain in the supplementary material in Section \ref{sec:generalized_proof}. As far as we are aware of, this is the first conditional generative adversarial network architecture with this guarantee.

\subsection{CausalGAN with Single Binary Label}
First, we find the optimal discriminator for a fixed generator. Note that in (\ref{eq:discLoss}), the terms that the discriminator can optimize are the same as the GAN loss in \cite{Goodfellow2014}. Hence the optimal discriminator behaves the same as in the standard GAN. Then, the following lemma from \cite{Goodfellow2014} directly applies to our discriminator:
\begin{proposition}[\cite{Goodfellow2014}]
For fixed $G$, the optimal discriminator $D$ is given by
\begin{equation}
D_G^*(x)=\frac{p_{data}(x)}{p_{data}(x)+p_{g}(x)}.
\end{equation}
\end{proposition}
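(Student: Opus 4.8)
The plan is to follow the classical argument of \cite{Goodfellow2014}: reduce the discriminator's maximization to an unconstrained pointwise optimization in the value $D(x)$ at each fixed $x$, and solve that one-variable problem in closed form. Following the observation made just before the statement, it suffices to maximize the standard GAN value function $V(G,D) = \expectd{\log D(x)} + \expectg{\log(1 - D(x))}$ over the discriminator, so I would first rewrite this as a single integral against Lebesgue measure (assuming $p_{data}$ and $p_g$ admit densities), namely $V(G,D) = \int \big(p_{data}(x)\log D(x) + p_g(x)\log(1-D(x))\big)\,dx$. Because the integrand at each $x$ depends on $D$ only through the scalar $D(x)\in[0,1]$, maximizing $V$ over all measurable $D:\mathcal{X}\to[0,1]$ is equivalent to maximizing the integrand separately for each $x$.

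Next I would solve that scalar problem: for constants $a,b\ge 0$ not both zero, the function $y\mapsto a\log y + b\log(1-y)$ on $(0,1)$ is strictly concave — its second derivative $-a/y^2 - b/(1-y)^2$ is strictly negative — and its first derivative $a/y - b/(1-y)$ vanishes exactly at $y^* = a/(a+b)$, which is therefore the unique maximizer. Substituting $a = p_{data}(x)$ and $b = p_g(x)$ gives $D_G^*(x) = p_{data}(x)/(p_{data}(x)+p_g(x))$ at every $x$ with $p_{data}(x)+p_g(x) > 0$; on the complement of $\mathrm{Supp}(p_{data})\cup\mathrm{Supp}(p_g)$ the integrand is identically zero, so $D$ may be defined arbitrarily there, and where exactly one density vanishes the same formula still produces the correct degenerate optimum ($D_G^* = 1$ or $0$). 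This establishes the claimed form.

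I do not expect a genuine obstacle — this is essentially a textbook computation — so the step I would be most careful with is the passage from pointwise to global optimality: I must check that the pointwise maximizer $x\mapsto p_{data}(x)/(p_{data}(x)+p_g(x))$ is a measurable function (it is, being a ratio of measurable densities) and that it actually attains $\sup_D V(G,D)$, which holds because it dominates the integrand value of every competing $D$ at every $x$. I would also flag, without belaboring it, that here "optimal" is understood over the class of all measurable $[0,1]$-valued functions, not over a fixed network architecture, exactly as in the original GAN analysis.
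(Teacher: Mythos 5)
Your proof is correct and is exactly the standard pointwise-maximization argument of Goodfellow et al.: the paper itself gives no proof of this proposition, simply citing \cite{Goodfellow2014} after observing that the discriminator-relevant terms of its objective reduce to the standard GAN value function, which is precisely the reduction you take as your starting point. Your added care about measurability, attainment of the supremum, and behavior off the supports goes slightly beyond what either the paper or the original reference spells out, but changes nothing substantive.
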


Second, we identify the optimal Labeler and Anti-Labeler. We have the following lemma:
\begin{lemma}
\label{lem:labeler_r}
The optimum Labeler has $D_{LR}(x) = \prr{l=1|x}$.
\end{lemma}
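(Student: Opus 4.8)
The plan is to mirror the derivation of the optimal discriminator in \cite{Goodfellow2014}, since the Labeler objective (\ref{eq:labeler_r_loss}) has exactly the same cross-entropy structure once the conditional expectations are unfolded. First I would rewrite (\ref{eq:labeler_r_loss}) as a single integral over image space: using $\rho=\pr{l=1}$ together with $p_{data}^1(x)=p_{data}(x\mid l=1)$ and $p_{data}^0(x)=p_{data}(x\mid l=0)$, the Labeler maximizes
\[
\int \Big(\rho\, p_{data}^1(x)\log D_{LR}(x) + (1-\rho)\, p_{data}^0(x)\log\big(1-D_{LR}(x)\big)\Big)\,dx .
\]

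Next, invoking the capacity assumption on the Labeler network (it can represent any measurable map $\mathcal{I}\to[0,1]$), the maximization decouples and can be carried out pointwise in $x$: for fixed $x$ we maximize $a\log y + b\log(1-y)$ over $y\in[0,1]$, where $a:=\rho\,p_{data}^1(x)\ge 0$ and $b:=(1-\rho)\,p_{data}^0(x)\ge 0$. This is the elementary fact used in \cite{Goodfellow2014}: when $a+b>0$ the map is strictly concave on $(0,1)$ and its derivative $a/y-b/(1-y)$ vanishes only at $y^\star=a/(a+b)$, which is therefore the unique maximizer; when $a=0$ or $b=0$ the supremum is still given by the same formula ($y^\star=0$ or $1$).

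Then I would substitute back and simplify via Bayes' rule, all probabilities being taken with respect to the real data distribution (hence the subscript $r$). We have $a=\rho\,p_{data}^1(x)=\pr{l=1}\pr{x\mid l=1}=\pr{x,l=1}$ and $a+b=\pr{x,l=1}+\pr{x,l=0}=\pr{x}$, so $D_{LR}^\star(x)=\pr{x,l=1}/\pr{x}=\prr{l=1|x}$, which is the claim. On the $p_{data}$-null set where $\pr{x}=0$ the value of $D_{LR}$ is irrelevant and may be set arbitrarily.

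I do not anticipate a genuine obstacle: the lemma is a direct analogue of the Goodfellow optimal-discriminator proposition quoted just above it. The only steps deserving a word of justification are (i) the reduction to pointwise optimization, which rests on the Labeler's expressive power and on the fact that the integrand at each $x$ is maximized independently, and (ii) the handling of the boundary cases $a=0$ or $b=0$, where the formula $a/(a+b)$ still records the optimum. An identical argument applied to (\ref{eq:labeler_g_loss}), with $p_g^1,p_g^0$ in place of $p_{data}^1,p_{data}^0$, gives the companion statement $D_{LG}(x)=p_g(l=1\mid x)$ for the Anti-Labeler.
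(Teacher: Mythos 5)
Your proposal is correct and follows essentially the same route as the paper's own proof: rewrite the Labeler objective as a single integral, maximize pointwise using the elementary fact that $a\log y + b\log(1-y)$ is maximized at $y=a/(a+b)$, and simplify via Bayes' rule to obtain $D_{LR}^*(x)=\prr{l=1|x}$. Your additional care with the boundary cases and the $p_{data}$-null set is a minor refinement of the paper's terser argument, not a different approach.
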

\begin{proof}
Please see the supplementary material.
\end{proof}

Similarly, we have the corresponding lemma for Anti-Labeler:
\begin{lemma}
For a fixed generator with $x \sim p_g $, the optimum Anti-Labeler has $D_{LG}(x) = \prg{l=1|x}$.
\end{lemma}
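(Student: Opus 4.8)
The plan is to mirror the argument used for the optimal Labeler in Lemma~\ref{lem:labeler_r}, which in turn mirrors the optimal-discriminator computation of \cite{Goodfellow2014}. For a fixed generator, the Anti-Labeler objective in~(\ref{eq:labeler_g_loss}) is a functional of $D_{LG}$ alone, so I would first rewrite it as a single integral against the image measure induced by the generator. Using $\pr{l=1}=\rho$ (the pretrained Causal Controller reproduces the true label marginal) and writing the two conditional generator densities as $p_g^1$ and $p_g^0$, the objective becomes
\begin{equation}
\int_{\mathcal{I}} \Bigl[\rho\, p_g^1(x)\log D_{LG}(x) + (1-\rho)\, p_g^0(x)\log\bigl(1-D_{LG}(x)\bigr)\Bigr]\,dx .
\end{equation}

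Next I would observe that $D_{LG}$ is unconstrained across different values of $x$ — it ranges over all measurable maps $\mathcal{I}\to[0,1]$ — so the integral is maximized by maximizing the integrand pointwise. For fixed $x$ the integrand has the form $a\log y + b\log(1-y)$ with $a=\rho\, p_g^1(x)\ge 0$ and $b=(1-\rho)\, p_g^0(x)\ge 0$; wherever $a+b>0$ this is strictly concave in $y\in(0,1)$ with unique maximizer $y^\star=a/(a+b)$, while on the (generator-)null set where $a=b=0$ the value of $D_{LG}$ is irrelevant. Hence the optimal Anti-Labeler satisfies
\begin{equation}
D_{LG}^\star(x)=\frac{\rho\, p_g^1(x)}{\rho\, p_g^1(x)+(1-\rho)\, p_g^0(x)}
\end{equation}
for $p_g$-almost every $x$.

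Finally I would identify the right-hand side with $\prg{l=1\mid x}$ via Bayes' rule: since the generated label has marginal $\prg{l=1}=\rho$ and, conditioned on the label, the image has density $p_g^1$ resp.\ $p_g^0$, the induced image marginal is $p_g(x)=\rho\, p_g^1(x)+(1-\rho)\, p_g^0(x)$ and $\prg{l=1\mid x}=\rho\, p_g^1(x)/p_g(x)$, which is exactly $D_{LG}^\star(x)$.

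I expect the only genuine subtlety — the same one already present in Lemma~\ref{lem:labeler_r} — to be the routine measure-theoretic bookkeeping: checking that the pointwise maximizer is a legitimate measurable choice of $D_{LG}$, that optimality is understood in the $p_g$-a.e.\ sense, and that $\rho\, p_g^1(x)+(1-\rho)\, p_g^0(x)>0$ precisely on the support of $p_g$ so the closed form is well defined where it matters. Everything else is mechanical, and the statement for the Anti-Labeler is formally identical to the one for the Labeler with $p_{data}^0,p_{data}^1$ replaced by $p_g^0,p_g^1$.
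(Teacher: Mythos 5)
Your proposal is correct and follows essentially the same route as the paper, which simply notes that the argument is identical to the proof of Lemma~\ref{lem:labeler_r}: rewrite the objective as an integral, maximize the integrand $a\log y + b\log(1-y)$ pointwise to get $y^\star = a/(a+b)$, and identify the result with $\prg{l=1|x}$ via Bayes' rule. The measure-theoretic caveats you raise are reasonable but are glossed over in the paper as well.
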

\begin{proof}
Proof is the same as the proof of Lemma \ref{lem:labeler_r}.
\end{proof}

Define $C(G)$ as the generator loss for when discriminator, Labeler and Anti-Labeler are at their optimum. Then, we show that the generator that minimizes $C(G)$ outputs class conditional image distributions. 
\begin{theorem}[Theorem \ref{thm:informal} formal for single binary label]
\label{thm:main}
The global minimum of the virtual training criterion $C(G)$ is achieved if and only if $p_g^0=p_{\text{data}}^0$ and $p_g^1=p_\text{data}^1$, i.e., if and only if given a label $l$, generator output $G(z,l)$ has the class conditional image distribution $p_{data}(x|l)$.
\end{theorem}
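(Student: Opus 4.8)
The plan is to plug into the generator objective (\ref{eq:gen_loss}) the optimal discriminator $D_G^*(x)=p_{\text{data}}(x)/(p_{\text{data}}(x)+p_g(x))$, the optimal Labeler $D_{LR}^*(x)=\prr{l=1|x}$, and the optimal Anti-Labeler $D_{LG}^*(x)=\prg{l=1|x}$, so that $C(G)$ becomes an explicit functional of $p_g^0$, $p_g^1$, and $p_g=\rho p_g^1+(1-\rho)p_g^0$; then I would rewrite $C(G)$ as a sum of Kullback--Leibler divergences whose only common zero is $p_g^i=p_{\text{data}}^i$, and read off the minimizer.

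For the two GAN terms, since $\tfrac{1-D_G^*}{D_G^*}=p_g/p_{\text{data}}$, the sum $\expectd{\log D_G^*(x)}+\expectg{\log\tfrac{1-D_G^*(x)}{D_G^*(x)}}$ equals $\int p_{\text{data}}\log\frac{p_{\text{data}}}{p_{\text{data}}+p_g}+\int p_g\log\frac{p_g}{p_{\text{data}}}$, which I would reorganize as $\KL{p_{\text{data}}}{\tfrac{p_{\text{data}}+p_g}{2}}+\KL{p_g}{p_{\text{data}}}-\log 2$. For the four label terms, Bayes' rule at the data law gives $D_{LR}^*(x)=\rho p_{\text{data}}^1(x)/p_{\text{data}}(x)$, $1-D_{LR}^*(x)=(1-\rho)p_{\text{data}}^0(x)/p_{\text{data}}(x)$, and at the generator law $D_{LG}^*(x)=\rho p_g^1(x)/p_g(x)$, $1-D_{LG}^*(x)=(1-\rho)p_g^0(x)/p_g(x)$. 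Substituting, the additive constants $\rho\log\rho+(1-\rho)\log(1-\rho)$ produced by the Labeler terms and by the Anti-Labeler terms cancel against each other, and collecting what remains with the identity $\rho p_g^1+(1-\rho)p_g^0=p_g$ turns the four label terms into $\rho\,\KL{p_g^1}{p_{\text{data}}^1}+(1-\rho)\,\KL{p_g^0}{p_{\text{data}}^0}-\KL{p_g}{p_{\text{data}}}$.

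Adding the two contributions, the $\KL{p_g}{p_{\text{data}}}$ terms cancel --- which is exactly the purpose of the nonstandard extra summand $-\expectg{\log D(x)}$ noted in the Remark following (\ref{eq:gen_loss}) --- leaving
\[
C(G)=\KL{p_{\text{data}}}{\tfrac{p_{\text{data}}+p_g}{2}}+\rho\,\KL{p_g^1}{p_{\text{data}}^1}+(1-\rho)\,\KL{p_g^0}{p_{\text{data}}^0}-\log 2.
\]
Every KL term is nonnegative, so $C(G)\ge -\log 2$; and since $\rho\in(0,1)$ (the label distribution is strictly positive), equality forces $p_g^1=p_{\text{data}}^1$ and $p_g^0=p_{\text{data}}^0$, which in turn give $p_g=p_{\text{data}}$ so that the first term also vanishes. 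Conversely, $p_g^i=p_{\text{data}}^i$ makes all three terms zero. Hence the global minimum $-\log 2$ of $C(G)$ is attained if and only if $p_g^0=p_{\text{data}}^0$ and $p_g^1=p_{\text{data}}^1$, equivalently, $G(z,l)$ has the class-conditional image law $p_{\text{data}}(x\mid l)$.

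The step I expect to be most delicate is the bookkeeping in the label terms: expressing $D_{LR}^*$ and $D_{LG}^*$ through Bayes' rule, tracking the $\log\rho$ and $\log(1-\rho)$ constants so that they cancel between Labeler and Anti-Labeler, and recognizing that what remains recombines --- via $p_g=\rho p_g^1+(1-\rho)p_g^0$ --- into exactly $-\KL{p_g}{p_{\text{data}}}$, which then annihilates the surplus divergence coming from the GAN part. Beyond that, the only care needed is the standard absolute-continuity / common-support caveat that makes the KL divergences (and hence the ``if and only if'') well posed, together with the use of $\rho\in(0,1)$.
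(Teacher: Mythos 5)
Your proposal is correct and follows essentially the same route as the paper's proof: substitute the optimal discriminator, Labeler, and Anti-Labeler into (\ref{eq:gen_loss}), observe that the $\rho\log\rho+(1-\rho)\log(1-\rho)$ constants from the Labeler and Anti-Labeler terms cancel, recombine the remainder via $p_g=\rho p_g^1+(1-\rho)p_g^0$ into $-\KL{p_g}{p_{\text{data}}}$ which annihilates the matching term from the GAN part, and conclude $C(G)=\KL{p_{\text{data}}}{\tfrac{p_{\text{data}}+p_g}{2}}+\rho\KL{p_g^1}{p_{\text{data}}^1}+(1-\rho)\KL{p_g^0}{p_{\text{data}}^0}+\text{const}$. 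The only cosmetic difference is the constant ($-\log 2$ versus the paper's $-1$, a base-of-logarithm convention), and you state the ``only if'' direction slightly more explicitly than the paper does.
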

\begin{proof}
Please see the supplementary material.
\end{proof}

Now we can show that our two stage procedure can be used to train a causal implicit generative model for any causal graph where the \emph{Image} variable is a sink node, captured by the following corollary:
\begin{corollary}
\label{cor:main}
Suppose $C:\mathcal{Z}_1\rightarrow \mathcal{L}$ is a causal implicit generative model for the causal graph $D=(\mathcal{V},E)$ where $\mathcal{V}$ is the set of image labels and the observational joint distribution over these labels are strictly positive. Let $G:\mathcal{L}\times \mathcal{Z}_2\rightarrow \mathcal{I}$ be the class conditional GAN that can sample from the image distribution conditioned on the given label combination $L\in \mathcal{L}$. Then $G(C(Z_1),Z_2)$ is a causal implicit generative model for the causal graph $D' = (\mathcal{V} \cup \{Image\}, E \cup \{(V_1,Image),(V_2,Image),\hdots (V_n,Image)\})$.
\end{corollary}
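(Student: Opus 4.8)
The plan is to check the two conditions defining a causal implicit generative model (Definition~\ref{def:consistent} and the paragraph following it) for the composed network $H(Z_1,Z_2) \coloneqq \big(C(Z_1),\, G(C(Z_1),Z_2)\big)$: (i) $H$ is \emph{consistent} with $D'$, and (ii) the law of $H(Z_1,Z_2)$ over $(\mathcal{L},\mathcal{I})$ equals the observational distribution of a causal model whose graph is $D'$, whose label marginal is the true label distribution $\mathcal{P}_V$, and whose image-given-labels conditional is $p_{data}(x\mid l)$. Once (i) and (ii) hold, interventional correctness comes for free: by Proposition~\ref{prop:causal}, all causal models sharing the graph $D'$ and this observational law induce identical interventional distributions, so matching the observational law suffices.

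First I would establish consistency. Since $C$ is consistent with $D$, each label coordinate $C_i(Z_1)$ is a function of $\{C_j(Z_1)\}_{j\in Pa_i}$ and of a noise block $Z_{1,S_i}$, where $\{S_i\}_i$ partitions the coordinates of $Z_1$. The new \emph{Image} output is $G(C(Z_1),Z_2)$, which depends on \emph{all} label coordinates --- exactly the parents of \emph{Image} in $D'$ --- and on the fresh noise $Z_2$. Assigning $Z_2$ (a set of mutually independent variables, independent of $Z_1$) as the noise block of the \emph{Image} node extends the partition, and since $D'$ leaves every label's parent set unchanged and makes \emph{Image} a sink whose parents are all labels, $H$ satisfies Definition~\ref{def:consistent} for $D'$.

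Next I would pin down the output distribution. Factor
\[
\Pr\big(C(Z_1)=l,\; G(C(Z_1),Z_2)=x\big) = \Pr\big(C(Z_1)=l\big)\cdot \Pr\big(G(l,Z_2)=x \,\big|\, C(Z_1)=l\big).
\]
Independence of $Z_2$ from $Z_1$ makes the second factor equal to the unconditional generator law given input label $l$, which by Theorem~\ref{thm:main} (in its $d$-label form, Section~\ref{sec:generalized_proof}) is $p_{data}(x\mid L=l)$; strict positivity of the label distribution is what makes $p_{data}(x\mid L=l)$ well defined for every label combination and lets the $C(G)$-optimality argument run for each one. The first factor is $\mathcal{P}_V(l)$ because $C$ is a causal implicit generative model for $D$. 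Hence $H$ has law $\mathcal{P}_V(l)\,p_{data}(x\mid l)$. Finally, because $D'$ only appends the sink \emph{Image} and preserves the parent set of each label, its Bayesian-network factorization reads $\mathcal{P}_{V'}(l,x) = \big(\prod_i \Pr(v_i\mid Pa_i)\big)\,\Pr(x\mid l) = \mathcal{P}_V(l)\,p_{data}(x\mid l)$, which matches --- completing the argument that $H$ is a causal implicit generative model for $D'$.

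The step I expect to be the main obstacle is the justification inside (ii) that $G(l,Z_2)$ still outputs $p_{data}(x\mid l)$ when $l$ is supplied by the upstream Causal Controller rather than drawn from the data: this needs both the independence of $Z_2$ from $Z_1$ (so conditioning on $C(Z_1)=l$ does not reshape $Z_2$) and the fact that Theorem~\ref{thm:main} characterizes the optimal generator's output purely as a function of the input label value. A secondary subtlety worth stating explicitly is that ``causal implicit generative model for the graph $D'$'' is only well posed up to the choice of structural equations, and it is Proposition~\ref{prop:causal} that licenses speaking of \emph{the} interventional distributions once $D'$ and the observational law are fixed.
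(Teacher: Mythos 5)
Your proof is correct and follows essentially the same route as the paper's: verify consistency of the concatenated network with $D'$, factor the joint law as $\Pr(\text{Label})\cdot\Pr(\text{Image}\mid\text{Label})$ to match the observational distribution, and invoke Proposition~\ref{prop:causal} to transfer correctness to all interventional distributions. You merely spell out details the paper leaves implicit (the noise-partition argument for consistency, the independence of $Z_2$ from $Z_1$, and the role of strict positivity), which is a fair elaboration rather than a different approach.
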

\begin{proof}
Please see the supplementary material.
\end{proof}

\subsection{Extensions to Multiple Labels}
In Theorem \ref{thm:main} we show that the optimum generator samples from the class conditional distributions given a single binary label. Our objective is to extend this result to the case with $d$ binary labels.

First we show that if the Labeler and Anti-Labeler are trained to output $2^d$ scalars, each interpreted as the posterior probability of a particular label combination given the image, then the minimizer of $C(G)$ samples from the class conditional distributions \emph{given d  labels}. This result is shown in Theorem \ref{thm:main_generalized} in the supplementary material. However, when $d$ is large, this architecture may be hard to implement. To resolve this, we propose an alternative architecture, which we implement for our experiments: We extend the single binary label setup and use cross entropy loss terms for each label. This requires Labeler and Anti-Labeler to have only $d$ outputs. However, although we need the generator to capture the joint label posterior given the image, this only assures that the generator captures each label's posterior distribution, i.e.,  $p_r(l_i|x)=p_g(l_i|x)$ (Proposition \ref{prop:alternate}). This, in general, does not guarantee that the class conditional distributions will be true to the data distribution. However, for many joint distributions of practical interest, where \emph{the set of labels are completely determined by the image}\footnote{The dataset we are using arguably satisfies this condition.}, we show that this guarantee implies that the joint label posterior will be true to the data distribution, implying that the optimum generator samples from the class conditional distributions. Please see Section \ref{sec:alternate_practical_architecture} for the formal results and more details.

%
%
%
%

\section{Implementation}
In this section, we explain the differences between implementation and theory, along with other implementation details for both CausalGAN and CausalBEGAN.

\subsection{Pretraining Causal Controller for Face Labels}
\label{subsec:pretrain}

In this section, we explain the implementation details of the Wasserstein Causal Controller for generating face labels. We used the total variation distance (TVD) between the distribution of generator and data distribution as a metric to decide the success of the models.

The gradient term used as a penalty is estimated by evaluating the gradient at points interpolated between the real and fake batches. Interestingly, this Wasserstein approach gives us the opportunity to train the Causal Controller to output (almost) discrete labels (See Figure \ref{fig:cc_discrete}). In practice though, we still found benefit in rounding them before passing them to the generator. 

The generator architecture is structured in accordance with Section \ref{sec:implicit} based on the causal graph in Figure \ref{fig:big_causal_graph}, using uniform noise as exogenous variables and 6 layer neural networks as functions mapping parents to children. For the training, we used 25 Wasserstein discriminator (critic) updates per generator update, with a learning rate of 0.0008.

\subsection{Implementation Details for CausalGAN}
In practice, we use stochastic gradient descent to train our model. We use \emph{DCGAN} \cite{Radford2015}, a convolutional neural net-based implementation of generative adversarial networks, and extend it into our Causal GAN framework. We have expanded it by adding our Labeler networks, training a Causal Controller network and modifying the loss functions appropriately. Compared to DCGAN an important distinction is that we make 6 generator updates for each discriminator update on average. The discriminator and labeler networks are concurrently updated in a single iteration. 

Notice that the loss terms defined in Section \ref{causalgan:architecture} contain a single binary label. In practice we feed a $d$-dimensional label vector and need a corresponding loss function. We extend the Labeler and Anti-Labeler loss terms by simply averaging the loss terms for every label. The $i^{th}$ coordinates of the $d$-dimensional vectors given by the labelers determine the loss terms for label $i$. Note that this is different than the architecture given in Section \ref{sec:generalized_proof}, where the discriminator outputs a length-$2^d$ vector and estimates the probabilities of all label combinations given the image. Therefore this approach does not have the guarantee to sample from the class conditional distributions, if the data distribution is not restricted. However, for the type of labeled image dataset we use in this work, where labels seem to be completely determined given an image, this architecture is sufficient to have the same guarantees.  For the details, please see Section \ref{sec:alternate_practical_architecture} in the supplementary material.

Compared to the theory we have, another difference in the implementation is that we have swapped the order of the terms in the cross entropy expressions for labeler losses. This has provided sharper images at the end of the training.

\subsection{Usage of Anti-Labeler in CausalGAN}
An important challenge that comes with gradient-based training is the use of Anti-Labeler. We observe the following:  In the early stages of the training, Anti-Labeler can very quickly minimize its loss, if the generator falls into label-conditioned mode collapse. Recall that we define label-conditioned mode-collapse as the problem of generating few typical faces when a label is fixed. For example, the generator can output the same face when \emph{Eyeglasses} variable is set to 1. This helps generator to easily satisfy the label loss term we add to our loss function. Notice that however, if label-conditioned mode collapse occurs, Anti-Labeler will very easily estimate the true labels given an image, since it is always provided with the same image. Hence, maximizing the Anti-Labeler loss in the early stages of the  training helps generator to avoid label-conditioned mode collapse with our loss function.

In the later stages of the training, due to the other loss terms, generator outputs realistic images, which drives Anti-Labeler to act similar to Labeler. Thus, maximizing Anti-Labeler loss and minimizing Labeler loss become contradicting tasks. This moves the training in a direction where labels are captured less and less by the generator, hence losing the conditional image generation property. 

Based on these observations, we employ the following loss function for the generator in practice:
\begin{equation}
\mathcal{L}_G = \mathcal{L}_{GAN} + \mathcal{L}_{LabelerR} - e^{-t/T}\mathcal{L}_{LabelerG},
\end{equation}
where the terms are GAN loss term, loss of Labeler and loss of Anti-Labeler respectively (see first second and third lines of (\ref{eq:gen_loss})). $t$ is the number of iterations in the training and $T$ is the time constant of the exponential decaying coefficient for the Anti-Labeler loss. $T=3000$ is chosen for the experiments, which corresponds to roughly 1 epoch of training.

\begin{figure}[ht!]
\centering
\includegraphics[width=0.5\linewidth]{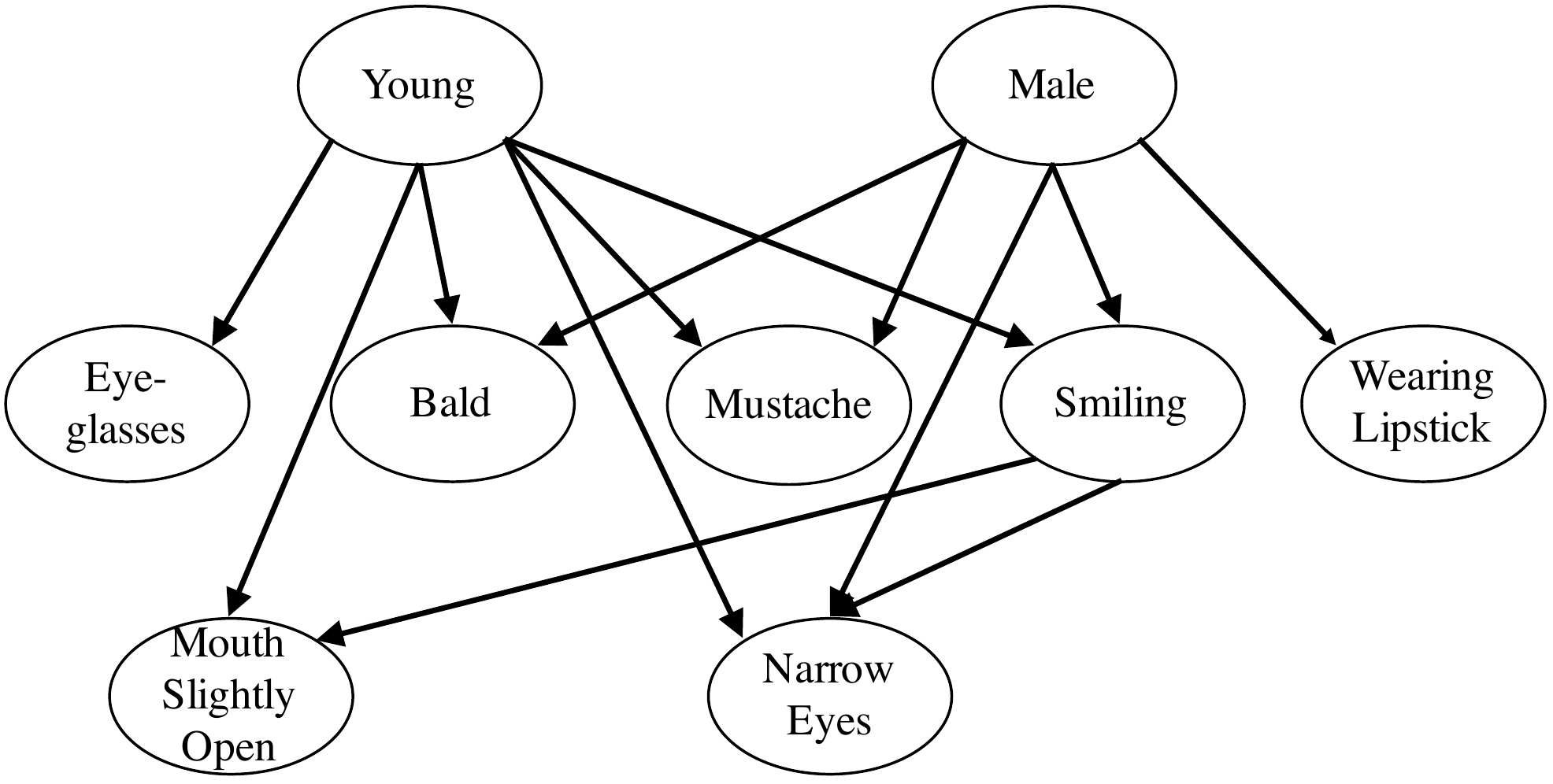}
\caption{The causal graph used for simulations for both CausalGAN and CausalBEGAN, called Causal Graph 1 (G1). We also add edges (see Appendix Section \ref{app:cc}) to form the complete graph "cG1". We also make use of the graph rcG1, which is obtained by reversing the direction of every edge in cG1.}
\label{fig:big_causal_graph}
\end{figure}

\subsection{Conditional Image Generation for CausalBEGAN}
The labels input to CausalBEGAN are taken from the Causal Controller. We use very few parameter tunings. We use the same learning rate (0.00008) for both the generator and discriminator and do 1 update of each simultaneously (calculating the for each before applying either). We simply use $\gamma_1=\gamma_2=\gamma_3=0.5$. We do not expect the model to be very sensitive to these parameter values, as we achieve good performance without hyperparameter tweaking. We do use customized margin learning rates $\lambda_1=0.001, \lambda_2=0.00008, \lambda_3=0.01$, which reflect the asymmetry in how quickly the generator can respond to each margin. For example $c_2$ can have much more "spiky", fast responding behavior compared to others even when paired with a smaller learning rate, although we have not explored this parameter space in depth. In these margin behaviors, we observe that the best performing models have all three margins "active": near $0$ while frequently taking small positive values.

\section{Results}
\label{sec:results}
\subsection{Dependence of GAN Behavior on Causal Graph}
\label{sec:synthetic}
In Section \ref{sec:implicit} we showed how a GAN could be used to train a causal implicit generative model by incorporating the causal graph into the generator structure. Here we investigate the behavior and convergence of causal implicit generative models when the true data distribution arises from another (possibly distinct) causal graph.

We consider causal implicit generative model convergence on synthetic data whose three features $\{X,Y,Z\}$ arise from one of three causal graphs: "line" $X\rightarrow Y\rightarrow Z$ , "collider" $X\rightarrow Y\leftarrow Z$, and "complete" $X\rightarrow Y\rightarrow Z,X\rightarrow Z$. For each node a (randomly sampled once) cubic polynomial in $n+1$ variables computes the value of that node given its $n$ parents and 1 uniform exogenous variable. We then repeat, creating a new synthetic dataset in this way for each causal model and report the averaged results of 20 runs for each model.

For each of these data generating graphs, we compare the convergence of the joint distribution to the true joint in terms of the total variation distance, when the generator is structured according to a line, collider, or complete graph. For completeness, we also include generators with no knowledge of causal structure: $\{fc3, fc5, fc10\}$ are fully connected neural networks that map uniform random noise to 3 output variables using either 3,5, or 10 layers respectively.

\begin{figure}[t]
\subfloat[$X\rightarrow Y \rightarrow Z$]
{\label{fig:linear}
\includegraphics[width=0.32\linewidth]{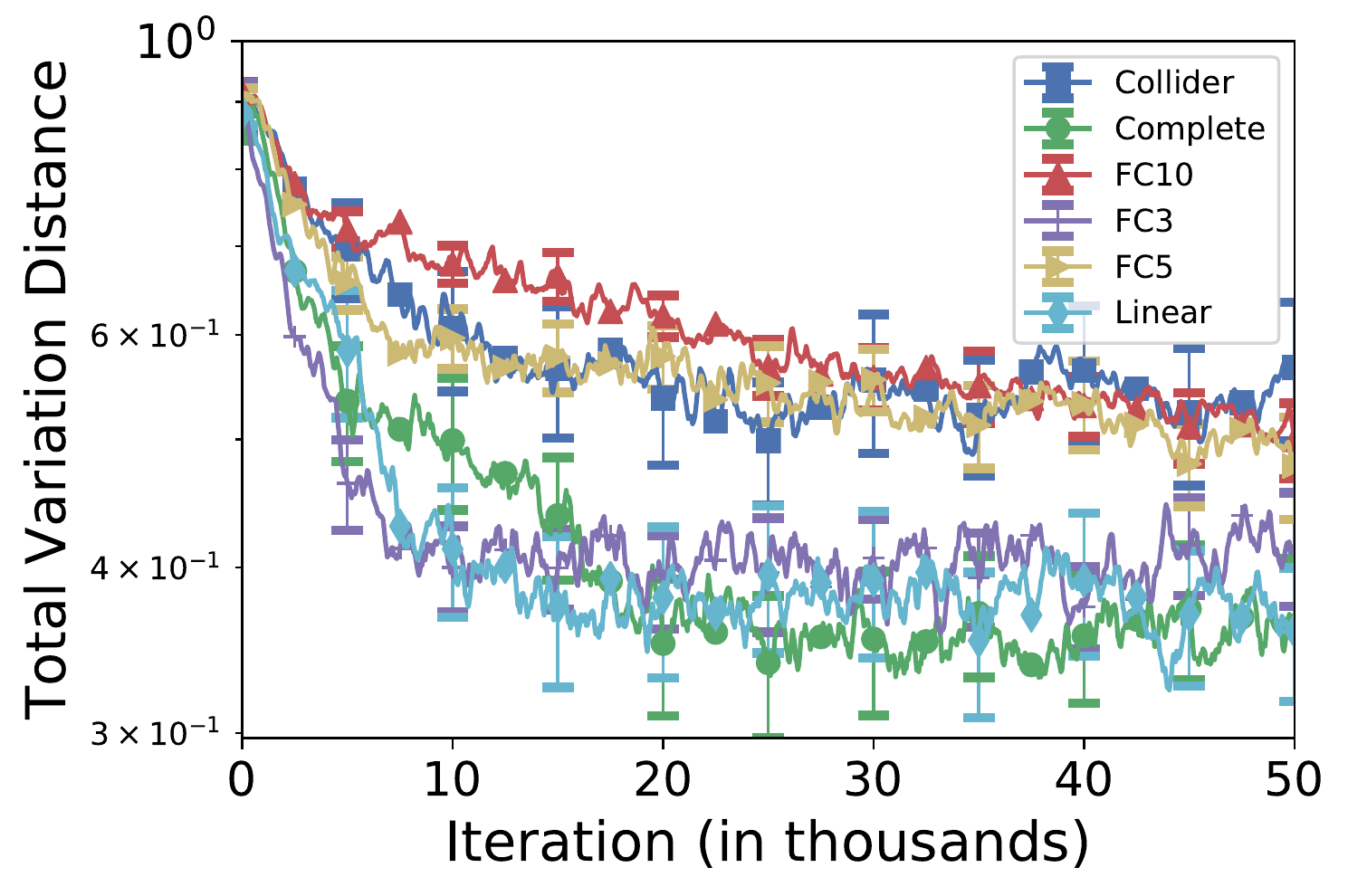}}
\subfloat[$X\rightarrow Y\leftarrow Z$]
{\label{fig:collider}
\includegraphics[width=0.32\linewidth]{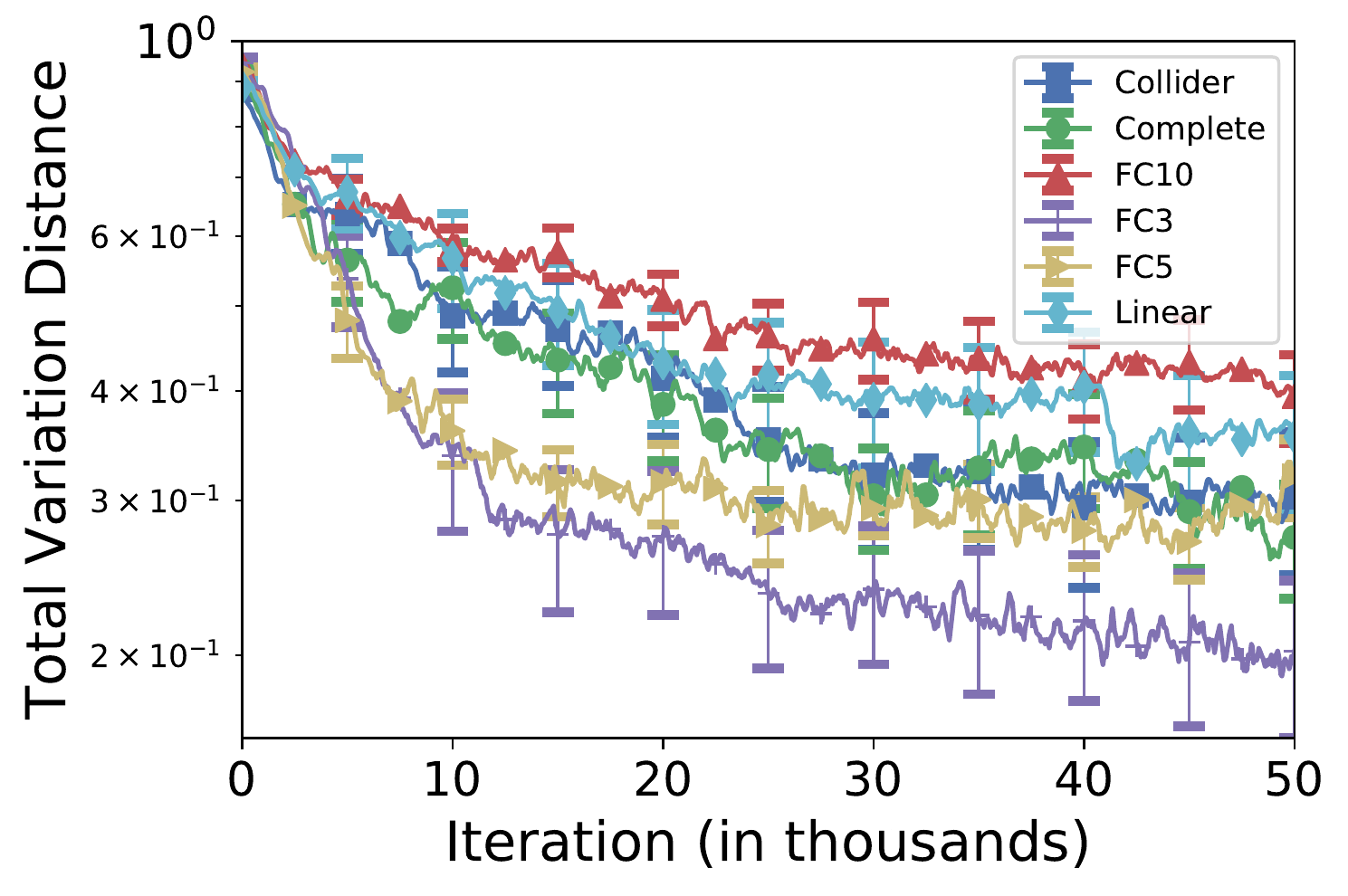}}
\subfloat[$X\rightarrow Y\rightarrow Z$, $X\rightarrow Z$]
{\label{fig:complete}
\includegraphics[width=0.32\linewidth]{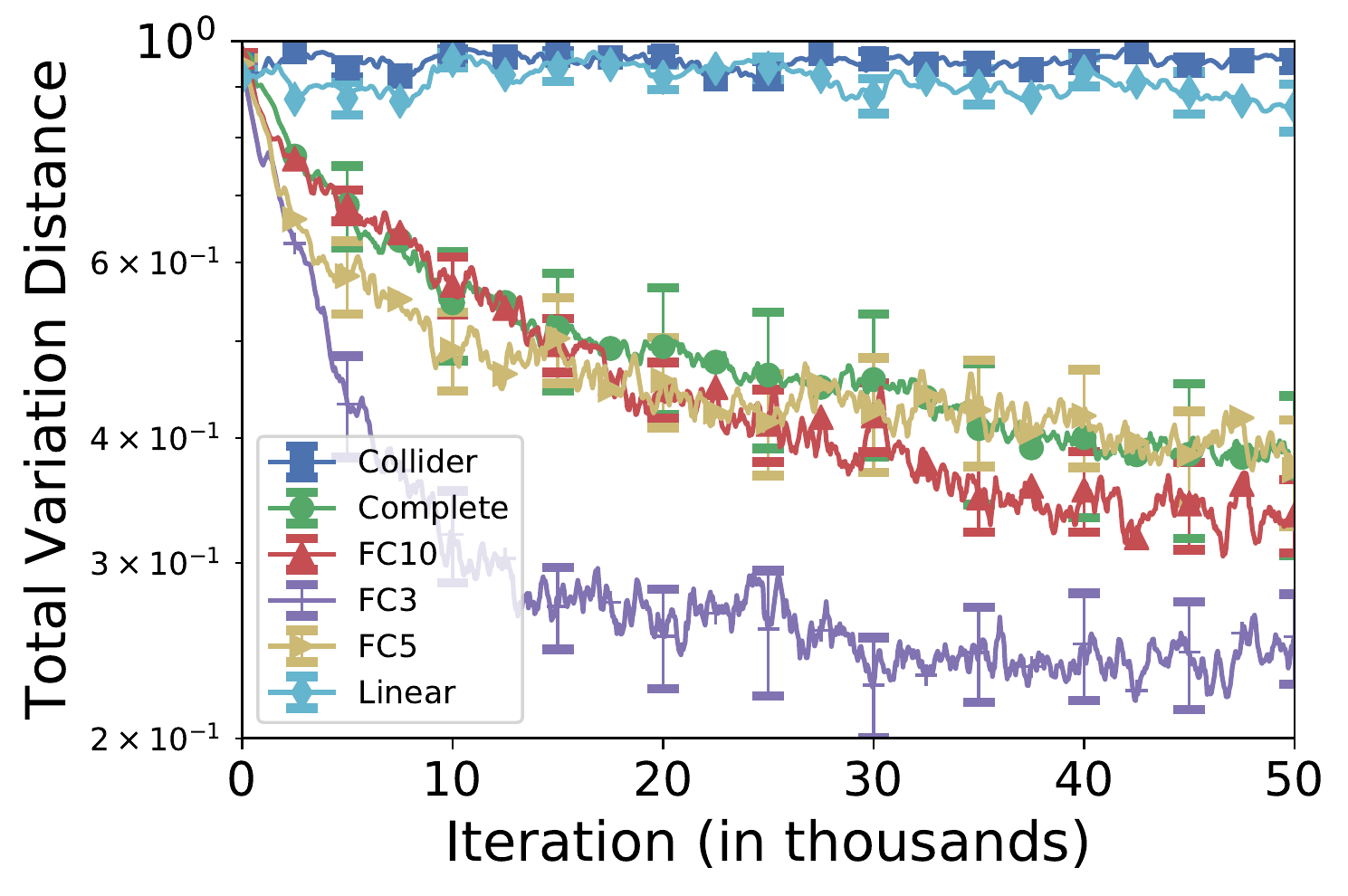}}
\caption{Convergence in total variation distance of generated distribution to the true distribution for causal implicit generative model, when the generator is structured based on different causal graphs. (a) Data generated from line graph $X\rightarrow Y\rightarrow Z$. The best convergence behavior is observed when the true causal graph is used in the generator architecture. (b) Data generated from collider graph $X\rightarrow Y \leftarrow Z$. Fully connected layers may perform better than the true graph depending on the number of layers. Collider and complete graphs performs better than the line graph which implies the wrong Bayesian network. (c) Data generated from complete graph $X\rightarrow Y\rightarrow Z$, $X\rightarrow Z$. Fully connected with 3 layers performs the best, followed by the complete and fully connected with 5 and 10 layers. Line and collider graphs, which implies the wrong Bayesian network does not show convergence behavior.}
\label{fig:synthetic_tvd}
\end{figure}

The results are given in Figure \ref{fig:synthetic_tvd}. Data is generated from line causal graph $X\rightarrow Y \rightarrow Z$ (left panel), collider causal graph $X \rightarrow Y \leftarrow$ (middle panel), and complete causal graph $X\rightarrow Y \rightarrow Z, X\rightarrow Z$ (right panel). Each curve shows the convergence behavior of the generator distribution, when generator is structured based on each one of these causal graphs. We expect convergence when the causal graph used to structure the generator is capable of generating the joint distribution due to the true causal graph: as long as we use the correct Bayesian network, we should be able to fit to the true joint. For example, complete graph can encode all joint distributions. Hence, we expect complete graph to work well with all data generation models. Standard fully connected layers correspond to the causal graph with a latent variable causing all the observable variables. Ideally, this model should be able to fit to any causal generative model. However, the convergence behavior of adversarial training across these models is unclear, which is what we are exploring with Figure \ref{fig:synthetic_tvd}.

For the line graph data $X\rightarrow Y\rightarrow Z$, we see that the best convergence behavior is when line graph is used in the generator architecture. As expected, complete graph also converges well, with slight delay. Similarly, fully connected network with 3 layers show good performance, although surprisingly fully connected with 5 and 10 layers perform much worse. It seems that although fully connected can encode the joint distribution in theory, in practice with adversarial training, the number of layers should be tuned to achieve the same performance as using the true causal graph. Using the wrong Bayesian network, the collider, also yields worse performance.

For the collider graph, surprisingly using a fully connected generator with 3 and 5 layers shows the best performance. However, consistent with the previous observation, the number of layers is important, and using 10 layers gives the worst convergence behavior. Using complete and collider graphs achieves the same decent performance, whereas line graph, a wrong Bayesian network, performs worse than the two. 

For the complete graph, fully connected 3 performs the best, followed by fully connected 5, 10 and the complete graph. As we expect, 	line and collider graphs, which cannot encode all the distributions due to a complete graph, performs the worst and does not actually show any convergence behavior. 

\subsection{Wasserstein Causal Controller on CelebA Labels}
\label{cc:results}
We test the performance of our Wasserstein Causal Controller on a subset of the binary labels of CelebA datset. We use the causal graph given in Figure \ref{fig:big_causal_graph}.

For causal graph training, first we verify that our Wasserstein training allows the generator to learn a mapping from continuous uniform noise to a discrete distribution. Figure \ref{fig:cc_discrete} shows where the samples, averaged over all the labels in Causal Graph 1, from this generator appears on the real line. The result emphasizes that the proposed Causal Controller outputs an almost discrete distribution: $96\%$ of the samples appear in $0.05-$neighborhood of $0$ or $1$. Outputs shown are \emph{unrounded} generator outputs. 

\begin{figure}
\subfloat[Essentially Discrete Range of Causal Controller]
{\label{fig:cc_discrete}
\includegraphics[width=0.45\linewidth]{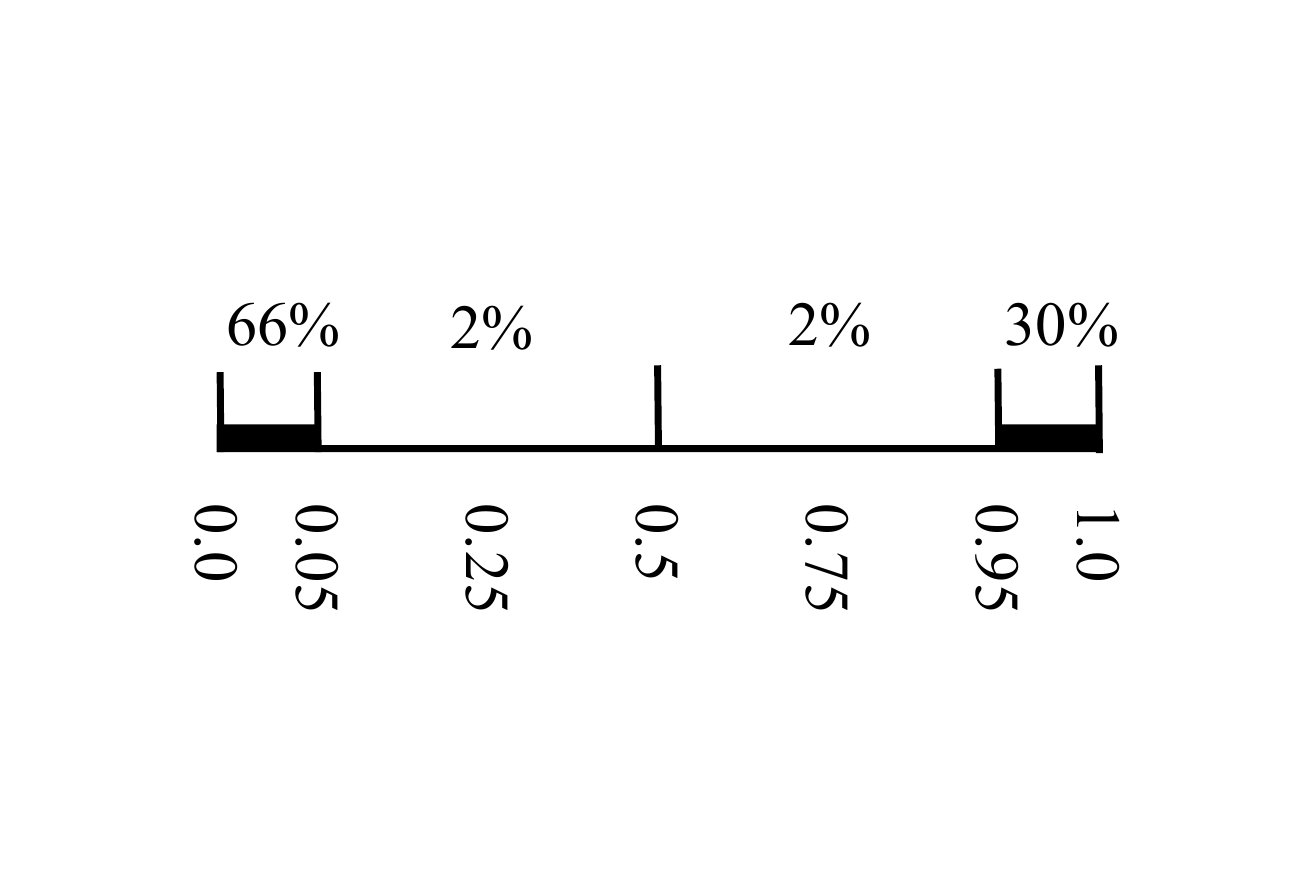}}
\hspace{0.2in}
\subfloat[TVD vs. No. of Iters in CelebA Labels]
{\label{fig:TVDiters}
\includegraphics[width=0.4\linewidth]{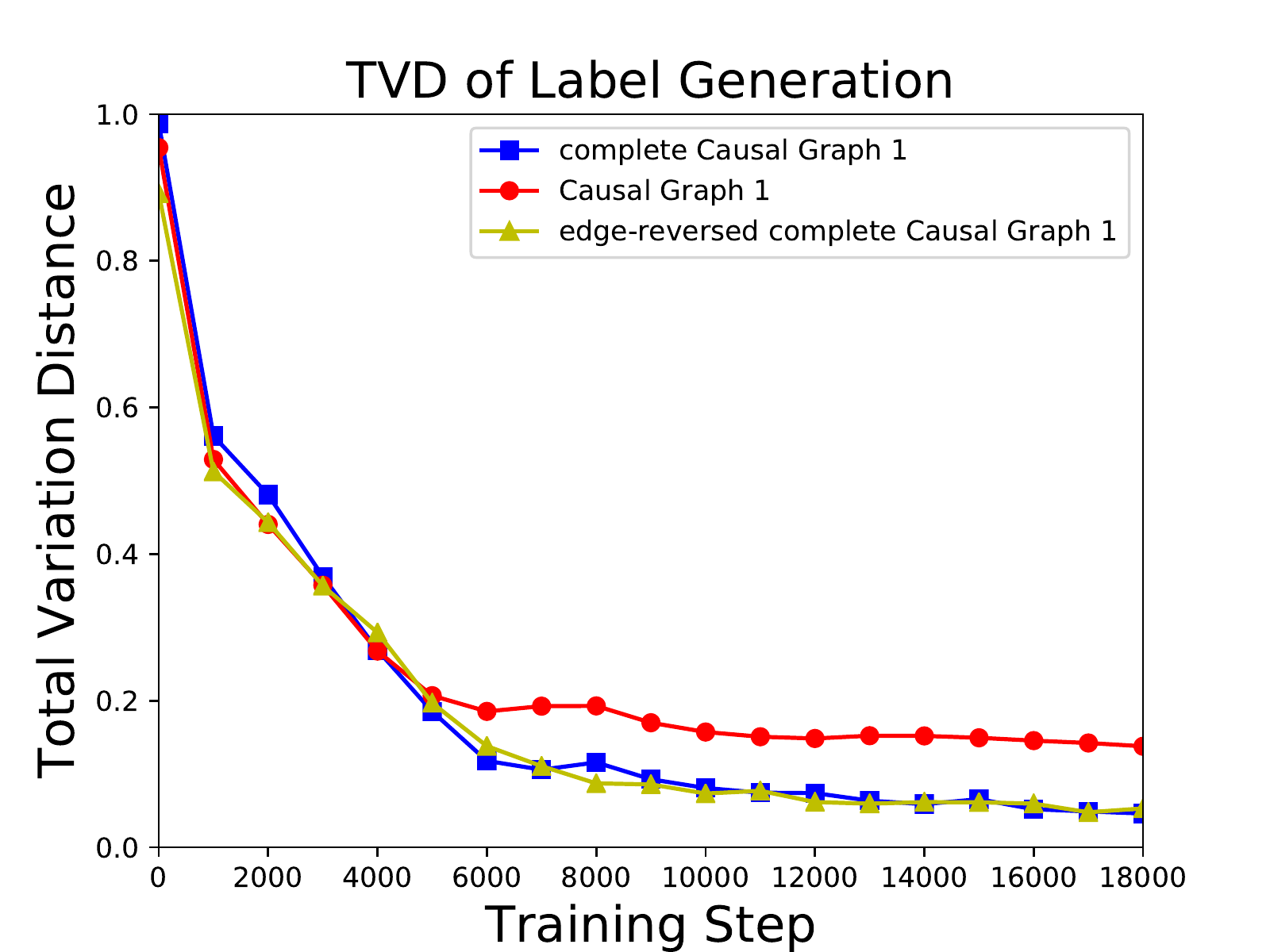}}
\caption{(a) A number line of unit length binned into 4 unequal bins along with the percent of Causal Controller ($G1$) samples in each bin. Results are obtained by sampling the joint label distribution 1000 times and forming a histogram of the scalar outputs corresponding to any label. Note that our Causal Controller output labels are approximately discrete even though the input is a continuum (uniform). The 4\% between 0.05 and 0.95 is not at all uniform and almost zero near 0.5. (b) Progression of total variation distance between the Causal Controller output with respect to the number of iterations: Causal Graph 1 is used in the training with Wasserstein loss.}
\label{asdf}
\end{figure}

A stronger measure of convergence is the total variational distance (TVD). For Causal Graph 1 (G1), our defined completion (cG1), and cG1 with arrows reversed (rcG1), we show convergence of TVD with training (Figure \ref{fig:TVDiters}). Both cG1 and rcG1 have TVD decreasing to 0, and TVD for G1 assymptotes to around 0.14 which corresponds to the incorrect conditional independence assumptions that G1 makes.  This suggests that any given complete causal graph will lead to a nearly perfect implicit causal generator over labels and that bayesian partially incorrect causal graphs can still give reasonable convergence.

\subsection{CausalGAN Results}
In this section, we train the whole CausalGAN together using a pretrained Causal Controller network. The results are given in Figures \ref{fig:dcgan_mustache1}-\ref{fig:dcgan_narrow1}. The difference between intervening and conditioning is clear through certain features. We implement conditioning through rejection sampling. See \cite{Naesseth2017, Graham2017} for other works on conditioning for implicit generative models.

\begin{figure}[h!]
\centering
\subfloat[Intervening vs Conditioning on Mustache, Top: Intervene Mustache=1, Bottom: Condition Mustache=1 ]
{\label{fig:dcgan_mustache1}
\includegraphics[width=0.8\linewidth]
{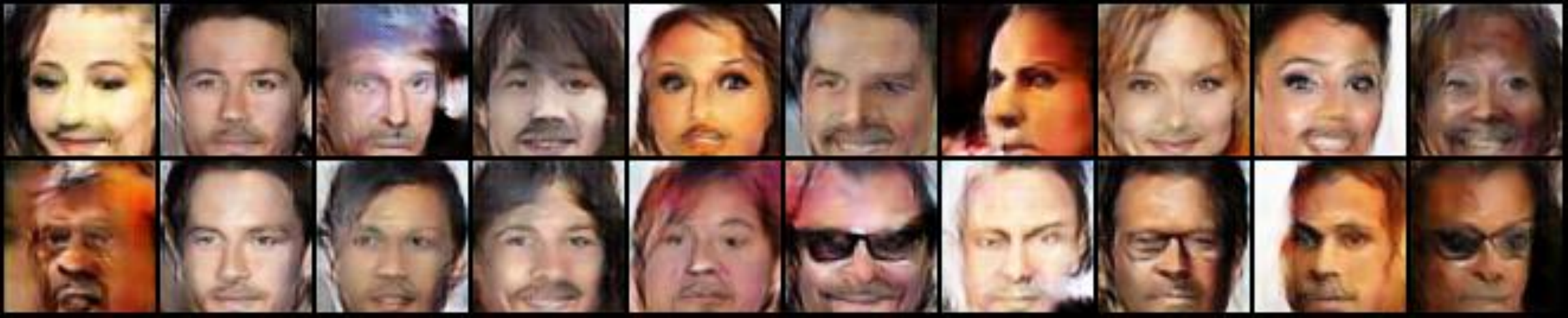}}\\
\caption{Intervening/Conditioning on Mustache label in Causal Graph 1. Since $Male\rightarrow Mustache$ in Causal Graph 1, we do not expect $do(Mustache=1)$ to affect the probability of $Male=1$, i.e., $\pr{Male=1|do(Mustache=1)}=\pr{Male=1} = 0.42$. Accordingly, the top row shows both males and females with mustaches, even though the generator never sees the label combination $\{Male=0, Mustache=1\}$ during training. The bottom row of images sampled from the conditional distribution $\pr{.|Mustache=1}$ shows only male images because in the dataset $\pr{Male=1|Mustache=1}\approx 1$.}
\end{figure}

\begin{figure}[ht!]
\centering
\subfloat[ Intervening vs Conditioning on Bald, Top: Intervene Bald=1, Bottom: Condition Bald=1 ]
{\label{fig:dcgan_bald1}
\includegraphics[width=0.8\linewidth]
{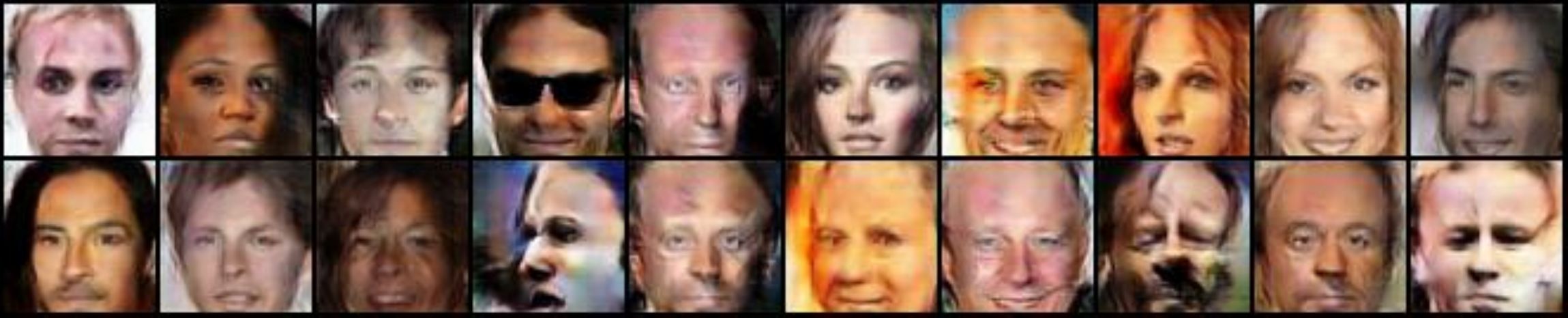}}\\
\caption{Intervening/Conditioning on Bald label in Causal Graph 1. Since $Male\rightarrow Bald$ in Causal Graph 1, we do not expect $do(Bald=1)$ to affect the probability of $Male=1$, i.e., $\pr{Male=1|do(Bald=1)}=\pr{Male=1} = 0.42$. Accordingly, the top row shows both bald males and bald females. The bottom row of images sampled from the conditional distribution $\pr{.|Bald=1}$ shows only male images because in the dataset $\pr{Male=1|Bald=1}\approx 1$.}
\end{figure}

\begin{figure}[ht!]
\centering
\subfloat[ Intervening vs Conditioning on Wearing Lipstick, Top: Intervene Wearing Lipstick=1, Bottom: Condition Wearing Lipstick=1 ]
{\label{fig:dcgan_lipstick1}
\includegraphics[width=0.8\linewidth]
{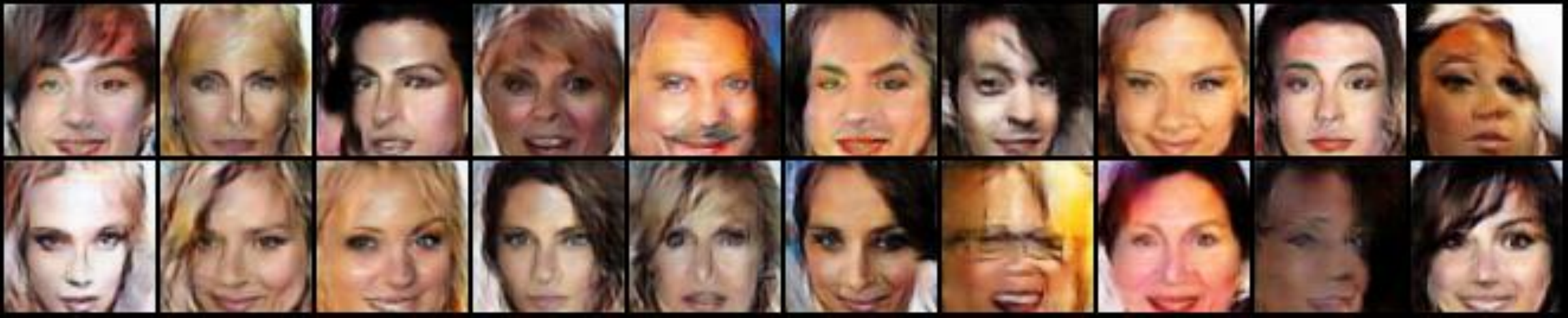}}\\
\caption{Intervening/Conditioning on Wearing Lipstick label in Causal Graph 1. Since $Male\rightarrow Wearing Lipstick$ in Causal Graph 1, we do not expect $do(\textit{Wearing Lipstick}=1)$ to affect the probability of $Male=1$, i.e., $\pr{Male=1|do(\textit{Wearing Lipstick}=1)}=\pr{Male=1} = 0.42$. Accordingly, the top row shows both males and females who are wearing lipstick. However, the bottom row of images sampled from the conditional distribution $\pr{.|\textit{Wearing Lipstick}=1}$ shows only female images because in the dataset $\pr{Male=0|\textit{Wearing Lipstick}=1}\approx 1$.}
\end{figure}

\begin{figure}[ht!]
\centering
\subfloat[ Intervening vs Conditioning on Mouth Slightly Open, Top: Intervene Mouth Slightly Open=1, Bottom: Condition Mouth Slightly Open=1 ]
{\label{fig:dcgan_mso1}
\includegraphics[width=0.8\linewidth]
{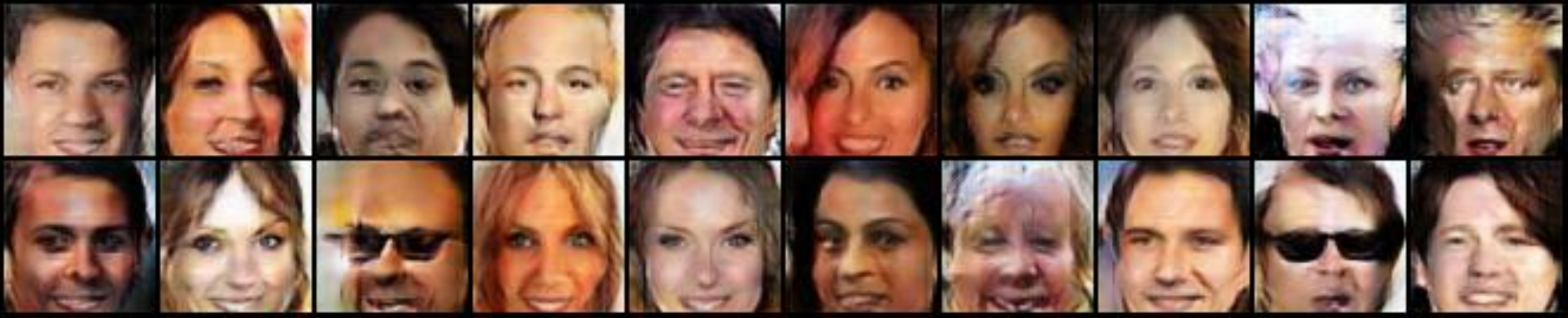}}\\
\caption{Intervening/Conditioning on Mouth Slightly Open label in Causal Graph 1. Since $Smiling\rightarrow Mouth Slightly Open$ in Causal Graph 1, we do not expect $do(\textit{Mouth Slightly Open}=1)$ to affect the probability of $Smiling=1$, i.e., $\pr{Smiling=1|do(\textit{Mouth Slightly Open}=1)}=\pr{Smiling=1}=0.48$. However on the bottom row, conditioning on $\textit{Mouth Slightly Open}=1$ increases the proportion of smiling images ($0.48\rightarrow 0.76$ in the dataset), although 10 images may not be enough to show this difference statistically.}
\end{figure}

\begin{figure}[ht!]
\centering
\subfloat[ Intervening vs Conditioning on Narrow Eyes, Top: Intervene Narrow Eyes=1, Bottom: Condition Narrow Eyes=1 ]
{\label{fig:dcgan_narrow1}
\includegraphics[width=0.8\linewidth]
{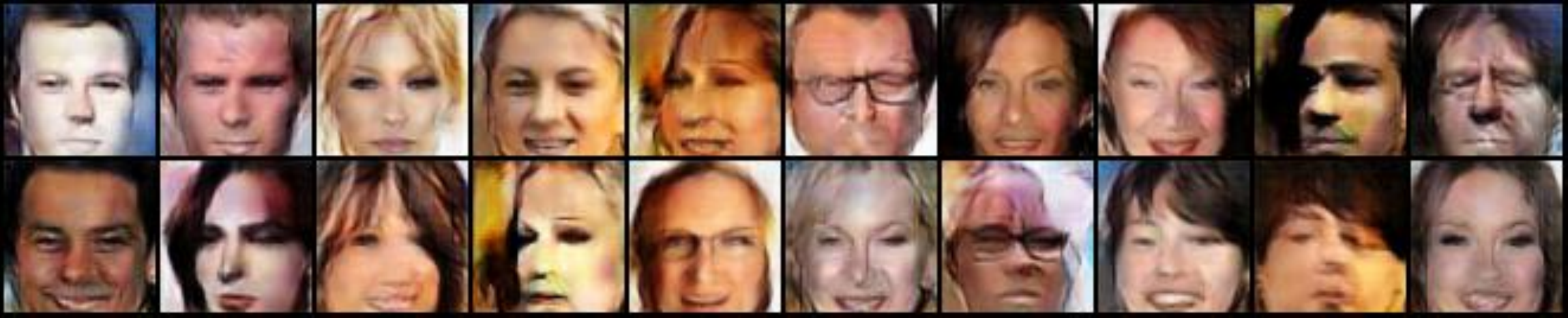}}\\
\caption{Intervening/Conditioning on Narrow Eyes label in Causal Graph 1. Since $Smiling\rightarrow \textit{Narrow Eyes}$ in Causal Graph 1, we do not expect $do(\textit{Narrow Eyes}=1)$ to affect the probability of $Smiling=1$, i.e., $\pr{Smiling=1|do(\textit{Narrow Eyes}=1)}=\pr{Smiling=1}=0.48$. However on the bottom row, conditioning on $\textit{Narrow Eyes}=1$ increases the proportion of smiling images ($0.48\rightarrow 0.59$ in the dataset), although 10 images may not be enough to show this difference statistically.}
\end{figure}

\subsection{CausalBEGAN Results}
In this section, we train CausalBEGAN on CelebA dataset using Causal Graph 1. The Causal Controller is pretrained with a Wasserstein loss and used for training the CausalBEGAN.

To first empirically justify the need for the margin of margins we introduced in (\ref{eq:z_t}) ($c_3$ and $b_3$), we train the same CausalBEGAN model setting $c_3=1$, removing the effect of this margin. We show that the image quality for rare labels deteriorates. Please see Figure \ref{fig:no3margin} in the appendix. Then for the labels \emph{Wearing Lipstick, Mustache, Bald}, and \emph{Narrow Eyes}, we illustrate the difference between interventional and conditional sampling when the label is 1. (Figures \ref{fig:began_mustache1}-\ref{fig:began_narrow1}).

\begin{figure}[h!]
\centering
\subfloat[Intervening vs Conditioning on Mustache, Top: Intervene Mustache=1, Bottom: Condition Mustache=1 ]
{\label{fig:began_mustache1}
\includegraphics[width=0.8\linewidth]
{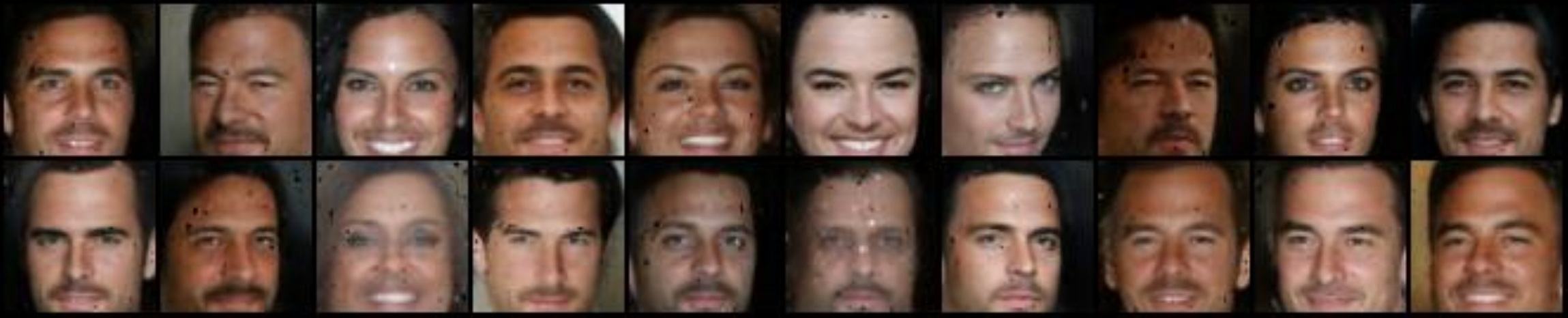}}\\
\caption{Intervening/Conditioning on Mustache label in Causal Graph 1. Since $Male\rightarrow Mustache$ in Causal Graph 1, we do not expect $do(Mustache=1)$ to affect the probability of $Male=1$, i.e., $\pr{Male=1|do(Mustache=1)}=\pr{Male=1} = 0.42$. Accordingly, the top row shows both males and females with mustaches, even though the generator never sees the label combination $\{Male=0, Mustache=1\}$ during training. The bottom row of images sampled from the conditional distribution $\pr{.|Mustache=1}$ shows only male images because in the dataset $\pr{Male=1|Mustache=1}\approx 1$.}
\end{figure}

\begin{figure}[ht!]
\centering
\subfloat[ Intervening vs Conditioning on Bald, Top: Intervene Bald=1, Bottom: Condition Bald=1 ]
{\label{fig:began_bald1}
\includegraphics[width=0.8\linewidth]
{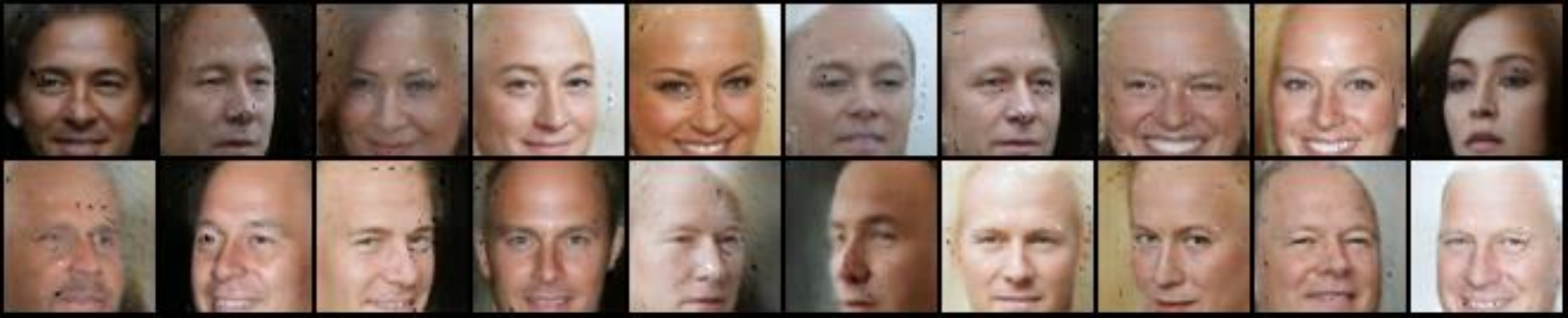}}\\
\caption{Intervening/Conditioning on Bald label in Causal Graph 1. Since $Male\rightarrow Bald$ in Causal Graph 1, we do not expect $do(Bald=1)$ to affect the probability of $Male=1$, i.e., $\pr{Male=1|do(Bald=1)}=\pr{Male=1} = 0.42$. Accordingly, the top row shows both bald males and bald females. The bottom row of images sampled from the conditional distribution $\pr{.|Bald=1}$ shows only male images because in the dataset $\pr{Male=1|Bald=1}\approx 1$.}
\end{figure}

\begin{figure}[ht!]
\centering
\subfloat[ Intervening vs Conditioning on Mouth Slightly Open, Top: Intervene Mouth Slightly Open=1, Bottom: Condition Mouth Slightly Open=1 ]
{\label{fig:began_mso1}
\includegraphics[width=0.8\linewidth]
{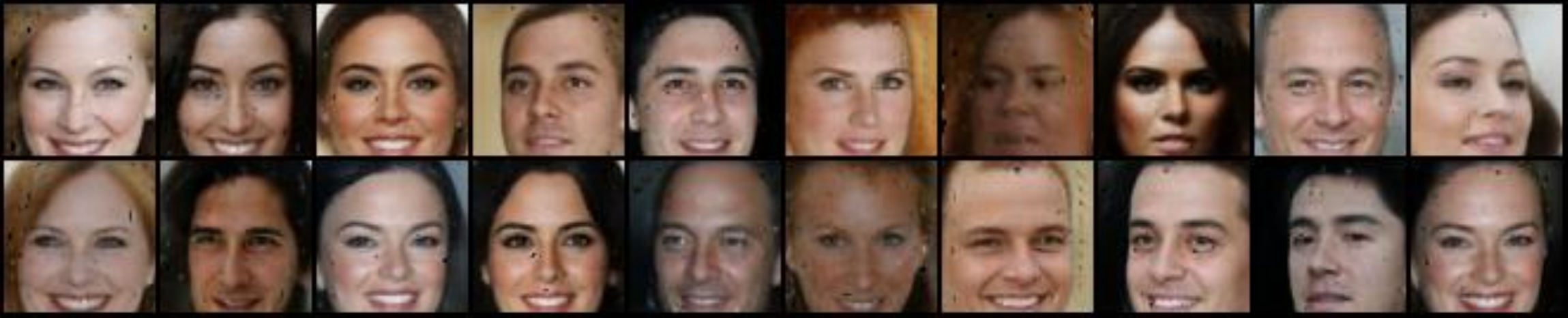}}\\
\caption{Intervening/Conditioning on Mouth Slightly Open label in Causal Graph 1. Since $Smiling\rightarrow Mouth Slightly Open$ in Causal Graph 1, we do not expect $do(\textit{Mouth Slightly Open}=1)$ to affect the probability of $Smiling=1$, i.e., $\pr{Smiling=1|do(\textit{Mouth Slightly Open}=1)}=\pr{Smiling=1}=0.48$. However on the bottom row, conditioning on $\textit{Mouth Slightly Open}=1$ increases the proportion of smiling images ($0.48\rightarrow 0.76$ in the dataset), although 10 images may not be enough to show this difference statistically.}
\end{figure}

\begin{figure}[ht!]
\centering
\subfloat[ Intervening vs Conditioning on Narrow Eyes, Top: Intervene Narrow Eyes=1, Bottom: Condition Narrow Eyes=1 ]
{\label{fig:began_narrow1}
\includegraphics[width=0.8\linewidth]{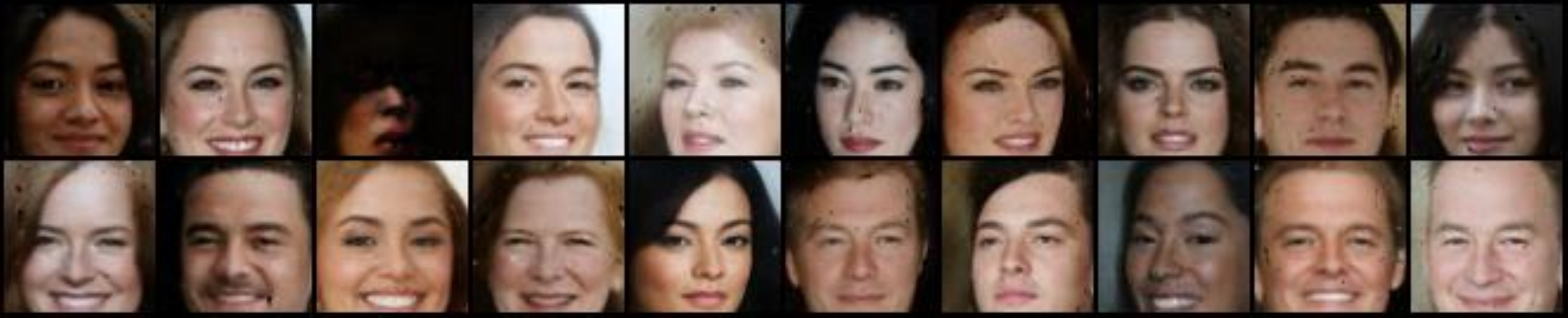}}\\
\caption{Intervening/Conditioning on Narrow Eyes label in Causal Graph 1. Since $Smiling\rightarrow \textit{Narrow Eyes}$ in Causal Graph 1, we do not expect $do(\textit{Narrow Eyes}=1)$ to affect the probability of $Smiling=1$, i.e., $\pr{Smiling=1|do(\textit{Narrow Eyes}=1)}=\pr{Smiling=1}=0.48$. However on the bottom row, conditioning on $\textit{Narrow Eyes}=1$ increases the proportion of smiling images ($0.48\rightarrow 0.59$ in the dataset), although 10 images may not be enough to show this difference statistically. As a rare artifact, in the dark image in the third column the generator appears to rule out the possibility of $\textit{Narrow Eyes}=0$ instead of demonstrating $\textit{Narrow Eyes}=1$.}
\end{figure}

\section{Conclusion}
We proposed a novel generative model with label inputs. In addition to being able to create samples \textit{conditional} on labels, our generative model can also sample from the \textit{interventional} distributions. 
Our theoretical analysis provides provable guarantees about correct sampling under such interventions and conditionings. 
The difference between these two sampling mechanisms is the key for causality. Interestingly, causality leads to generative models that are more creative since they can produce samples that are different from their training samples in multiple ways. We have illustrated this point for two models (CausalGAN and CausalBEGAN) and numerous label examples.

\section*{Acknowledgements}
We thank Ajil Jalal for the helpful discussions.

{\small
\bibliographystyle{plain}
\bibliography{causalinferenceSimple}
}

\newpage

\section{Appendix}
\subsection{Proof of Lemma \ref{lem:labeler_r}}
\label{subsec:opt_labeler}
The proof follows the same lines as in the proof for the optimal discriminator. Consider the objective 
\begin{align}
& \rho\expectdo{\log(D_{LR}(x))} + (1-\rho)\expectdz{\log(1-D_{LR}(x)}\nonumber \\
& = \int \rho p_r(x|l=1) \log (D_{LR}(x)) + (1-\rho)  p_r(x | l=0) \log (1-D_{LR}(x))dx \label{eq:labeler_loss2}
\end{align}
Since $0<D_{LR}<1$, $D_{LR}$ that maximizes (\ref{eq:labeler_r_loss}) is given by
\begin{equation}
D_{LR}^*(x) = \frac{\rho p_r(x|l=1)}{p_r(x|l=1)\rho + p_r(x|l=0)(1-\rho)} = \frac{\rho p_r(x|l=1)}{p_r(x)} = p_r(l=1|x)
\end{equation}

\subsection{Proof of Theorem \ref{thm:main}}
\label{sec:main_proof}
Define $C(G)$ as the generator loss for when discriminator, Labeler and Anti-Labeler are at their optimum. $p_{data}$, $p_r$,  $\mathbb{P}_{data}$ and $\mathbb{P}_r$ are used exchangeably for the data distribution. Then we have,
\begin{align}
C(G)&=\expectd{\log(D^*(x))} + \expectg{\log(1-D^*(x))} -  \expectg{\log(D^*(x))} \nonumber\\
&\hspace{0.5in} -(1-\rho)\expectgz{\log(1-D_{LR}(x))} - \rho\expectgo{\log(D_{LR}(x))}  \nonumber\\
&\hspace{0.5in} +(1-\rho)\expectgz{\log(1-D_{LG}(x))} + \rho\expectgo{\log(D_{LG}(x))}\nonumber\\
&= \expectd{\log(D^*(x))} + \expectg{\log(1-D^*(x))} -  \expectg{\log(D^*(x))} \nonumber\\
&\hspace{0.5in} -(1-\rho)\expectgz{\log(\prr{l=0|x})} - \rho\expectgo{\log(\prr{l=1|x})}  \nonumber\\
&\hspace{0.5in} +(1-\rho)\expectgz{\log(\prg{l=0|x})} + \rho\expectgo{\log(\prg{l=1|x})}\nonumber\\
& = \expectd{\log\left(\frac{p_{data}(x)}{p_{data}(x)+p_{g}(x)}\right)} + \expectg{\log\left(\frac{p_{g}(x)}{p_{data}(x)}\right)}\nonumber\\
&\hspace{0.5in} -(1-\rho)\expectgz{\log(\prr{l=0|x})} - \rho\expectgo{\log(\prr{l=1|x})}\nonumber\\
&\hspace{0.5in} +(1-\rho)\expectgz{\log(\prg{l=0|x})} + \rho\expectgo{\log(\prg{l=1|x})}
\end{align}

Using Bayes' rule, we can write $\pr{l=1|x} = \frac{\pr{x|l=1}\rho}{\pr{x}}$ and $\pr{l=0|x} = \frac{\pr{x|l=0}(1-\rho)}{\pr{x}}$. Then we have the following:
\begin{align*}
C(G) &= -1 + \KL{p_r}{\frac{p_r+p_g}{2}} + \KL{p_g}{p_r} + H(\rho)  \nonumber\\
&+ (1-\rho) \KL{p_g^0}{p_r^0} + \rho\KL{p_g^1}{p_r^1} - (1-\rho)\KL{p_g^0}{p_r} -\rho\KL{p_g^1}{p_r}\nonumber\\
& +(1-\rho)\expectgz{\log\left(\frac{p_g^0 (1-\rho)}{p_g}\right)} + \rho\expectgo{\log\left(\frac{p_g^1 \rho}{p_g}\right)},
\end{align*}
where $H(\rho)$ stands for the binary entropy function. Notice that we have
\begin{align*}
&- (1-\rho)\KL{p_g^0}{p_r} -\rho\KL{p_g^1}{p_r} &&\nonumber\\
 &=-(1-\rho)\int p_g^0(x)\log(p_g^0(x))dx  - \rho \int p_g^1\log(p_g^1(x))dx & &+ (1-\rho)\int p_g^0(x)\log(p_r(x))dx \\
 & & &+  \rho \int p_g^1(x)\log(p_r(x))dx\\
&=  -(1-\rho)\int p_g^0(x)\log(p_g^0(x))dx  - \rho \int p_g^1\log(p_g^1(x))dx && + \int p_g(x)\log(p_r(x))dx \\
&=  -(1-\rho)\int p_g^0(x)\log(p_g^0(x))dx  - \rho \int p_g^1\log(p_g^1(x))dx
 &&- \KL{p_g}{p_r} +\int p_g(x) \log(p_g(x))dx .
\end{align*}
Also notice that we have
\begin{align*}
& (1-\rho)\expectgz{\log\left(\frac{p_g^0 (1-\rho)}{p_g}\right)} + \rho\expectgo{\log\left(\frac{p_g^1 \rho}{p_g}\right)}\nonumber\\
& = -\int p_g(x) \log(p_g(x))dx - H(\rho) + (1-\rho) \int p_g^0(x)\log(p_g^0(x))dx + \rho \int p_g^1(x)\log(p_g^1(x))dx
\end{align*}
Substituting this into the above equation and combining terms, we get
\begin{equation*}
C(G) = -1 + \KL{p_r}{\frac{p_r+p_g}{2}} + (1-\rho) \KL{p_g^0}{p_r^0} + \rho\KL{p_g^1}{p_r^1} \\
\end{equation*}
Observe that for $p_g^0 = p_r^0$ and $p_g^1 = p_g^1$, we have $p_g = p_r$, yielding $C(G) = -1$. Finally, since KL divergence is always non-negative we have $C(G)\geq -1 $, concluding the proof.
\hfill \qed

\subsection{Proof of Corollary \ref{cor:main}}
Since $C$ is a causal implicit generative model for the causal graph $D$, by definition it is consistent with the causal graph $D$. Since in a conditional GAN, generator $G$ is given the noise terms and the labels, it is easy to see that the concatenated generator neural network $G(C(Z_1),Z_2)$ is consistent with the causal graph $D'$, where $D' = (\mathcal{V} \cup \{Image\}, E \cup \{(V_1,Image),(V_2,Image),\hdots (V_n,Image)\})$. Assume that $C$ and $G$ are perfect, i.e., they sample from the true label joint distribution and conditional image distribution. Then the joint distribution over the generated labels and image is the true distribution since $\pr{Image,Label} = \pr{Image|Label}\pr{Label}$. By Proposition \ref{prop:causal}, the concatenated model can sample from the true observational and interventional distributions. Hence, the concatenated model is a causal implicit generative model for graph $D'$.

\subsection{CausalGAN Architecture and Loss for Multiple Labels}
\label{sec:generalized_proof}
In this section, we explain the modifications required to extend the proof to the case with multiple binary labels, or a label variable with more than 2 states in general. $p_{data}$, $p_r$,  $\mathbb{P}_{data}$ and $\mathbb{P}_r$ are used exchangeably for the data distribution in the following.

Consider Figure \ref{fig:architecture} in the main text. Labeler outputs the scalar $D_{LR}(x)$ given an image $x$. With the given loss function in (\ref{eq:labeler_r_loss}), i.e., when there is a single binary label $l$, when we show in Section \ref{subsec:opt_labeler} that the optimum Labeler $D^*_{LR}(x)=p_r(l=1|X=x)$. We first extend the Labeler objective as follows: Suppose we have $d$ binary labels. Then we allow the Labeler to output a $2^d$ dimensional vector $D_{LR}(x)$, where $D_{LR}(x)[i]$ is the $i^{th}$ coordinate of this vector. The Labeler then solves the following optimization problem:
\begin{equation}
\label{eq:labeler_r_loss_generalized}
\max\limits_{D_{LR}} \sum_{j=1}^{2^d} \rho_j\mathbb{E}_{p_r^j}{\log(D_{LR}(x)[j])},
\end{equation}
\noindent
where $p_r^j(x) \coloneqq \mathbb{P}_{r}(X=x|l=j) $ and $\rho_j = \mathbb{P}_{r}(l=j)$. We have the following Lemma:
\begin{lemma}
\label{lem:opt_labeler_generalized}
Consider a Labeler $D_{LR}$ that outputs the $2^d$-dimensional vector $D_{LR}(x)$ such that $\sum_{j=1}^{2^d}D_{LR}(x)[j] = 1$, where $x\sim p_r(x,l)$. Then the optimum Labeler with respect to the loss in (\ref{eq:labeler_r_loss_generalized}) has $D_{LR}^*(x)[j]= p_r(l=j|x)$.
\end{lemma}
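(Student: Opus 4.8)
The plan is to follow the same template as the proof of Lemma~\ref{lem:labeler_r} (and of the optimal discriminator): turn the functional optimization over $D_{LR}$ into a family of pointwise optimizations over the probability simplex, solve each one in closed form, and then recognize the answer via Bayes' rule. First I would rewrite the objective in (\ref{eq:labeler_r_loss_generalized}) as a single integral,
\[
\sum_{j=1}^{2^d}\rho_j\,\mathbb{E}_{p_r^j}\!\left[\log(D_{LR}(x)[j])\right]
=\int \sum_{j=1}^{2^d}\rho_j\,p_r^j(x)\,\log(D_{LR}(x)[j])\,dx
=\int p_r(x)\sum_{j=1}^{2^d}\prr{l=j|x}\,\log(D_{LR}(x)[j])\,dx,
\]
using $\rho_j p_r^j(x)=\prr{l=j}\,\prr{x|l=j}=p_r(x)\,\prr{l=j|x}$. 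Because $D_{LR}(x)$ may be chosen, independently for each $x$, to be any vector in the simplex $\Delta=\{q\in\mathbb{R}^{2^d}_{\ge 0}:\sum_j q_j=1\}$, and since $p_r(x)\ge 0$ does not affect the maximizing $q$, it suffices to maximize, for each fixed $x$, the quantity $\sum_{j}\prr{l=j|x}\log q_j$ over $q\in\Delta$.

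Next I would solve this finite-dimensional problem. Writing $a_j:=\prr{l=j|x}$, so that $a_j\ge 0$ and $\sum_j a_j=1$, the claim is that $\arg\max_{q\in\Delta}\sum_j a_j\log q_j = (a_j)_j$. This is exactly the cross-entropy inequality: by nonnegativity of KL divergence (Gibbs' inequality), $\sum_j a_j\log q_j\le \sum_j a_j\log a_j$ with equality iff $q_j=a_j$ for all $j$. Equivalently, a Lagrange multiplier $\lambda$ for the constraint $\sum_j q_j=1$ gives the stationarity condition $a_j/q_j=\lambda$, hence $q_j=a_j/\lambda$, and normalization forces $\lambda=\sum_j a_j=1$, so $q_j=a_j$; concavity of $q\mapsto\sum_j a_j\log q_j$ on the simplex makes this the unique maximizer. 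Either way we obtain $D_{LR}^*(x)[j]=\prr{l=j|x}=p_r(l=j|x)$, which is the assertion of the lemma. (When $a_j=0$ for some $j$ the term $a_j\log q_j$ is $0$ by convention, but normalization together with $q_k=a_k$ for the indices with $a_k>0$ still forces $q_j=0$, so uniqueness is preserved a.e.)

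The main thing to be careful about — really the only obstacle, and a mild one — is the interchange of the pointwise maximization with the integral. One must observe that the pointwise-optimal selection $x\mapsto(\prr{l=j|x})_{j}$ is a genuine (measurable) version of the conditional law of $l$ given $X$, hence defines an admissible Labeler with $\sum_j D_{LR}(x)[j]=1$, while for any other admissible $D_{LR}$ the integrand is pointwise no larger; integrating preserves the inequality, so no competitor can beat $D_{LR}^*$. Integrability of the log-terms and any $\sigma$-finiteness concerns are handled exactly as in the single-label case of Lemma~\ref{lem:labeler_r}, and the same argument applies verbatim to the Anti-Labeler with $p_r$ replaced by $p_g$. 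I would close by noting that this lemma is the multi-label analogue used later to show the minimizer of $C(G)$ samples from the class-conditional distributions given the full $d$-dimensional label.
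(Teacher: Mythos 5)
Your proof is correct and takes essentially the same route as the paper's: both reduce the objective to a pointwise cross-entropy maximization over the probability simplex at each $x$ and invoke non-negativity of the KL divergence (Gibbs' inequality) to conclude that the unique optimizer is the posterior $p_r(l=j\mid x)$. The paper phrases this by introducing the random variables $L_x$ and $Z_x$ and writing the loss as $\mathbb{E}_{x}\left[\KL{L_x}{Z_x}\right]$ plus a term independent of $D_{LR}$, which is the same argument in different notation.
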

\begin{proof}
Suppose $p_r(l=j|x)=0$ for a set of (label, image) combinations. Then $p_r(x,l=j)=0$, hence these label combinations do not contribute to the expectation. Thus, without loss of generality, we can consider only the combinations with strictly positive probability. We can also restrict our attention to the functions $D_{LR}$ that are strictly positive on these (label,image) combinations; otherwise, loss becomes infinite, and as we will show we can achieve a finite loss. Consider the vector $D_{LR}(x)$ with coordinates $D_{LR}(x)[j]$ where $j\in[2^d]$. Introduce the discrete random variable $Z_x\in [2^d]$, where $\pr{Z_x = j} = D_{LR}(x)[j]$. The Labeler loss can be written as
\begin{align}
&\min -\mathbb{E}_{(x,l)\sim p_r(x,l)}\log(\pr{Z_x=j})\\
&=\min\mathbb{E}_{x\sim p_r(x)} \KL{L_x}{Z_x} - H(L_x),
\end{align}
where $L_x$ is the discrete random variable such that $\pr{L_x = j} = \prr{l=j|x}$. $H(L_x)$ is the Shannon entropy of $L_x$, and it only depends on the data. Since KL divergence is greater than zero and $p(x)$ is always non-negative, the loss is lower bounded by $-H(L_x)$. Notice that this minimum can be achieved by satisfying $\pr{Z_x = j} = \prr{l=j|x}$. Since KL divergence is minimized if and only if the two random variables have the same distribution, this is the unique optimum, i.e., $D_{LR}^*(x)[j] = \prr{l=j|x}$.

\end{proof}
The lemma above simply states that the optimum Labeler network will give the posterior probability of a particular label combination, given the observed image. In practice, the constraint that the coordinates sum to 1 could be satisfied by using a softmax function in the implementation. Next, we have the corresponding loss function and lemma for the Anti-Labeler network. The Anti-Labeler solves the following optimization problem
\begin{equation}
\label{eq:labeler_g_loss_generalized}
\max\limits_{D_{LG}} \sum_{j=1}^{2^d} \rho_j\mathbb{E}_{p_g^j}{\log(D_{LG}(x)[j])},
\end{equation}
\noindent
where $p_g^j(x) \coloneqq \pr{G(z,l)=x|l=j} $ and $\rho_j = \pr{l=j}$. We have the following Lemma:
\begin{lemma}
The optimum Anti-Labeler has $D_{LG}^*(x)[j]= \prg{l=j|x}$.
\end{lemma}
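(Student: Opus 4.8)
The plan is to repeat the argument of Lemma~\ref{lem:opt_labeler_generalized} essentially verbatim, with the real-data distribution $p_r$ replaced everywhere by the (fixed) generator distribution $p_g$. Fix the generator; then $p_g(x,l)$, its marginal $p_g(x)$, the conditionals $p_g^j(x) = \prg{G(z,l)=x\mid l=j}$ and the weights $\rho_j = \prg{l=j}$ are all constants of the problem, and the only free object is the map $x \mapsto D_{LG}(x)$ constrained so that $\sum_{j=1}^{2^d} D_{LG}(x)[j] = 1$ for every $x$.

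First I would trim the domain. If $\prg{l=j\mid x} = 0$ then $p_g(x,l=j) = \rho_j p_g^j(x) = 0$, so that summand contributes nothing to the objective in~(\ref{eq:labeler_g_loss_generalized}); hence it suffices to reason about pairs $(x,j)$ with $p_g(x,l=j) > 0$, and about those $D_{LG}$ strictly positive on such pairs (anything vanishing there incurs infinite loss, and the candidate optimum below achieves a finite value).

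Next, for each fixed $x$ I would introduce the discrete random variables $L_x$ and $Z_x$ on $[2^d]$ with $\mathbb{P}(L_x = j) = \prg{l=j\mid x}$ and $\mathbb{P}(Z_x = j) = D_{LG}(x)[j]$. Since $\rho_j p_g^j(x) = p_g(x,l=j)$, the objective in~(\ref{eq:labeler_g_loss_generalized}) equals $\expect{(x,l)\sim p_g}{\log D_{LG}(x)[l]}$; conditioning first on $x\sim p_g(x)$ and recognizing the inner expectation as a cross-entropy, maximizing this is the same as minimizing $\expect{x\sim p_g(x)}{\KL{L_x}{Z_x}}$ up to the additive, $D_{LG}$-independent term $\expect{x\sim p_g(x)}{H(L_x)}$ that depends only on the fixed generator. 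Because KL divergence is nonnegative and $p_g(x)\ge 0$, the objective is bounded below, and the bound is attained precisely (and, off $p_g$-null sets, uniquely) when $Z_x$ and $L_x$ have the same law, i.e.\ $D_{LG}^*(x)[j] = \prg{l=j\mid x}$; this also shows the optimal value is finite, retroactively justifying the trimming step.

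I do not expect a genuine obstacle. The only two points needing a little care are the support/positivity reduction, which keeps all logarithms and the cross-entropy identity well defined, and the observation that $H(L_x)$ --- which in the Labeler case depended purely on the data --- now depends on the generator, but is still constant in the optimization variable $D_{LG}$; so the conclusion and its uniqueness transfer without change from Lemma~\ref{lem:opt_labeler_generalized}.
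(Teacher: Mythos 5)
Your proposal is correct and matches the paper's own argument: the paper likewise proves this by invoking the proof of Lemma \ref{lem:opt_labeler_generalized} with $p_r$ replaced by $p_g$, noting exactly your key observation that the entropy term now depends on the (fixed) generator but remains constant in the optimization over $D_{LG}$.
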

\begin{proof}
The proof is the same as the proof of Lemma \ref{lem:opt_labeler_generalized}, since Anti-Labeler does not have control over the joint distribution between the generated image and the labels given to the generator, and cannot optimize the conditional entropy of labels given the image under this distribution.
\end{proof}

For a fixed discriminator, Labeler and Anti-Labeler, the generator solves the following optimization problem:
\begin{align}
\label{eq:gen_loss_multiple_labels}
&\min\limits_{G} \expectd{\log(D(x))} + \expectg{\log\left(\frac{1-D(x)}{D(x)}\right)} \nonumber\\
&- \sum_{j=1}^{2^d}\rho_j\expect{x\sim p_g^j(x)}{ \log(D_{LR}(X)[j])} \nonumber \\
+& \sum_{j=1}^{2^d}\rho_j \expect{x\sim p_g^j(x)}{  \log(D_{LG}(X)[j])}.
\end{align}
We then have the following theorem, that shows that the optimal generator samples from the class conditional image distributions given a particular label combination:

\begin{theorem}[Theorem 1 formal for multiple binary labels]
\label{thm:main_generalized}
Define $C(G)$ as the generator loss for when discriminator, Labeler and Anti-Labeler are at their optimum obtained from (\ref{eq:gen_loss_multiple_labels}). The global minimum of the virtual training criterion $C(G)$  is achieved if and only if $p_g^j=p_{\text{data}}^j, \forall j\in[2^d]$, i.e., if and only if given a $d$-dimensional label vector $l$, generator samples from the class conditional image distribution, i.e., $\pr{G(z,l)=x} =p_{data}(x|l)$.
\end{theorem}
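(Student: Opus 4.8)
The plan is to transcribe the single-binary-label argument (the proof of Theorem~\ref{thm:main}) almost verbatim, replacing every two-term expression by a sum over the $2^d$ label states. First I would substitute the three optimality characterizations into the generator objective (\ref{eq:gen_loss_multiple_labels}): the optimal discriminator $D^*(x)=p_{data}(x)/(p_{data}(x)+p_g(x))$, the optimal Labeler $D_{LR}^*(x)[j]=\prr{l=j|x}$ from Lemma~\ref{lem:opt_labeler_generalized}, and the optimal Anti-Labeler $D_{LG}^*(x)[j]=\prg{l=j|x}$ from the lemma that follows it. Throughout I would use the fact that the label vectors fed to the generator are produced by the pretrained Causal Controller, so $\prg{l=j}=\prr{l=j}=\rho_j$ for every $j$; this is precisely what makes Bayes' rule applicable on both the data and the generator sides.

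The second step is the algebraic reduction. The two GAN-type terms $\expectd{\log D^*(x)}+\expectg{\log\tfrac{1-D^*(x)}{D^*(x)}}$ collapse, by the same manipulation as in the proof of Theorem~\ref{thm:main}, to $-1+\KL{p_r}{\tfrac{p_r+p_g}{2}}+\KL{p_g}{p_r}$. For the Labeler term I would write $\prr{l=j|x}=\rho_j p_r^j(x)/p_r(x)$ inside each expectation $\expect{x\sim p_g^j}{\cdot}$, split the logarithm, and use $p_g=\sum_j\rho_j p_g^j$ to collect terms; this yields $\sum_j\rho_j\KL{p_g^j}{p_r^j}$ together with the correction terms $H(\{\rho_j\})$, $\sum_j\rho_j H(p_g^j)$, $-H(p_g)$, and $-\KL{p_g}{p_r}$. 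Performing the analogous substitution $\prg{l=j|x}=\rho_j p_g^j(x)/p_g(x)$ in the Anti-Labeler term yields exactly the negatives of the three entropy corrections: $-H(\{\rho_j\})$, $-\sum_j\rho_j H(p_g^j)$, and $+H(p_g)$. Summing the Labeler and Anti-Labeler contributions therefore telescopes down to $\sum_j\rho_j\KL{p_g^j}{p_r^j}-\KL{p_g}{p_r}$, and the leftover $-\KL{p_g}{p_r}$ cancels the $+\KL{p_g}{p_r}$ from the GAN terms. The end result is $C(G)=-1+\KL{p_r}{\tfrac{p_r+p_g}{2}}+\sum_{j=1}^{2^d}\rho_j\KL{p_g^j}{p_r^j}$.

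The third step is the conclusion. Every summand is a KL divergence, hence nonnegative, so $C(G)\ge -1$. Equality forces $\KL{p_g^j}{p_r^j}=0$, i.e. $p_g^j=p_{\text{data}}^j$, for every $j$ with $\rho_j>0$; and this already implies $p_g=\sum_j\rho_j p_g^j=\sum_j\rho_j p_r^j=p_r$, so the $\KL{p_r}{(p_r+p_g)/2}$ term vanishes automatically and imposes no extra constraint. As in the single-label proof, a short check shows the objective is finite at any such $p_g$ (otherwise the loss is $+\infty$), so the bound is attained; hence the global minimum of $C(G)$ is achieved exactly when $p_g^j=p_{\text{data}}^j$ for all $j\in[2^d]$, which is the claim.

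The step I expect to be delicate is the bookkeeping inside the second paragraph: one must check that each spurious term created by pushing Bayes' rule through the Labeler expectation --- the conditional-entropy pieces $\sum_j\rho_j H(p_g^j)$ and $H(p_g)$ and the label-marginal entropy $H(\{\rho_j\})$ --- is matched exactly, with the right sign and the right $\rho_j$ weight, by a term from the Anti-Labeler, leaving only $\sum_j\rho_j\KL{p_g^j}{p_r^j}$. This cancellation is the whole reason the Anti-Labeler is in the architecture, and carrying it through $2^d$ states rather than two is essentially the only place the argument differs in substance from Theorem~\ref{thm:main}. A minor caveat worth noting is that label combinations with $\rho_j=0$ contribute nothing and are left unconstrained, which is harmless here and a non-issue under the strict-positivity hypothesis of Corollary~\ref{cor:main}.
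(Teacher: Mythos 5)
Your proposal is correct and follows essentially the same route as the paper's proof: substitute the optimal discriminator, Labeler, and Anti-Labeler, push Bayes' rule through the label posteriors, and check that the entropy corrections from the Labeler and Anti-Labeler terms cancel exactly, leaving $C(G)=-1+\KL{p_r}{\frac{p_r+p_g}{2}}+\sum_{j=1}^{2^d}\rho_j\KL{p_g^j}{p_r^j}$. Your treatment of the equality case (noting that $p_g^j=p_r^j$ for all $j$ already forces $p_g=p_r$, and flagging the unconstrained combinations with $\rho_j=0$) is if anything slightly more careful than the paper's, but the argument is the same.
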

\begin{proof}
Substituting the optimum values for the Discriminator, Labeler and Anti-Labeler networks, we get the virtual training criterion $C(G)$ as
\begin{align}
C(G)&=\expectd{\log(D^*(x))} + \expectg{\log(1-D^*(x)} -  \expectg{\log(D^*(x)} \nonumber\\
&\hspace{0.5in}- \sum_{j=1}^{2^d}\rho_j \mathbb{E}_{x\sim p_g^j(x)}\log(D_{LR}^*(x)[j]))  \nonumber\\
&\hspace{0.5in} +\sum_{j=1}^{2^d}\rho_j \mathbb{E}_{x\sim p_{g}^j(x)}\log(D_{LG}^*(x)[j])\nonumber\\
& = \expectd{\log\left(\frac{p_{data}(x)}{p_{data}(x)+p_{g}(x)}\right)} + \expectg{\log\left(\frac{p_{g}(x)}{p_{data}(x)}\right)}\nonumber\\
&\hspace{0.5in}- \sum_{j=1}^{2^d}\rho_j \mathbb{E}_{x\sim p_g^j(x)}\log(p_r(l=j|X=x))  \nonumber\\
&\hspace{0.5in} +\sum_{j=1}^{2^d}\rho_j \mathbb{E}_{x\sim p_{g}^j(x)}\log(p_g(l=j|X=x))\nonumber\\
\end{align}

Using Bayes' rule, we can write $\pr{l=j|x} = \frac{\pr{x|l=j}\rho_j}{\pr{x}}$. Then we have the following:
\begin{align*}
C(G) &= \expectd{\log\left(\frac{p_{data}(x)}{p_{data}(x)+p_{g}(x)}\right)} + \expectg{\log\left(\frac{p_{g}(x)}{p_{data}(x)}\right)}\nonumber\\
&\hspace{0.5in} -\sum_{j=1}^{2^d}\rho_j \mathbb{E}_{x\sim p_g^j(x)}\log\left(\frac{p_r^j(x)\rho_j}{p_r(x)}\right)  \nonumber\\
&\hspace{0.5in} +\sum_{j=1}^{2^d} \rho_j\mathbb{E}_{x\sim p_{g}^j(x)}\log\left(\frac{p_g^j(x)\rho_j}{p_g(x)}\right),\nonumber\\
 &= -1 + \KL{p_r}{\frac{p_r+p_g}{2}} + \KL{p_g}{p_r} + H(l)  \nonumber\\
&+ \sum_{j=1}^{2^d}\rho_j \KL{p_g^j}{p_r^j} - \sum_{j=1}^{2^d} \rho_j\KL{p_g^j}{p_r} \nonumber\\
& +\sum_{j=1}^{2^d}\rho_j \mathbb{E}_{x\sim p_{g}^j(x)}\log\left(\frac{p_g^j(x)\rho_j}{p_g(x)}\right).
\end{align*}
Notice that we have
\begin{align*}
&- \sum_{j=1}^{2^d} \rho_j\KL{p_g^j}{p_r}\nonumber\\
& =-\sum_{j=1}^{2^d} \rho_j \int p_g^{j}\log(p_g^j)dx - \KL{p_g}{p_r} + \int p_g(x)\log(p_g(x))dx
\end{align*}
Also notice that we have
\begin{align*}
& \sum_{j=1}^{2^d}\rho_j \mathbb{E}_{x\sim p_{g}^j(x)}\log(\frac{p_g^j(x)\rho_j}{p_g(x)})\nonumber\\
& = -\int p_g(x) \log(p_g(x))dx - H(l) + \sum_{j=1}^{2^d}\rho_j \int p_g^j(x)\log(p_g^j(x))dx
\end{align*}
Substituting this into the above equation and combining terms, we get
\begin{equation*}
C(G) = -1 + \KL{p_r}{\frac{p_r+p_g}{2}} + \sum_{j=1}^{2^d} \rho_j\KL{p_g^j}{p_r^j}\\
\end{equation*}
Observe that for $p_g^j = p_r^j, \forall j\in [d]$, we have $p_g = p_r$, yielding $C(G) = -1$. Finally, since KL divergence is always non-negative we have $C(G)\geq -1 $, concluding the proof.

\end{proof}

\subsection{Alternate CausalGAN Architecture for $d$ Labels}
\label{sec:alternate_practical_architecture}
In this section, we provide the theoretical guarantees for the implemented CausalGAN architecture with $d$ labels. Later we show that these guarantees are sufficient to prove that the global optimal generator samples from the class conditional distributions for a practically relevant class of distributions.

First, let us restate the loss functions more formally. Note that $D_{LR}(x), D_{LG}(x)$ are $d-$dimensional vectors. The Labeler solves the following optimization problem:
\begin{equation}
\label{eq:labeler_r_loss_alternative}
\max\limits_{D_{LR}} \rho_j\mathbb{E}_{x\sim p_r^{j1}}\log(D_{LR}(x)[j]) + (1-\rho_j)\mathbb{E}_{x\sim p_r^{j0}}\log(1-D_{LR}(x)[j]).
\end{equation}
\noindent
where $p_r^{j0}(x) \coloneqq \pr{X=x|l_j=0} $, $p_r^{j0}(x) \coloneqq \pr{X=x|l_j=0} $ and $\rho_j = \pr{l_j=1}$. For a fixed generator, the Anti-Labeler solves the following optimization problem:
\begin{equation}
\label{eq:labeler_g_loss_alternative}
\max\limits_{D_{LG}} \rho_j\mathbb{E}_{p_g^{j1}}\log(D_{LG}(x)[j]) + (1-\rho_j)\mathbb{E}_{p_g^{j0}}\log(1-D_{LG}(x)[j]),
\end{equation}
where $p_g^{j0}(x) \coloneqq \mathbb{P}_g(x|l_j=0) $, $p_g^{j0}(x) \coloneqq \mathbb{P}_g(x|l_j=0) $. For a fixed discriminator, Labeler and Anti-Labeler, the generator solves the following optimization problem:
\begin{align}
\label{eq:gen_loss_alternative}
&\min\limits_{G} \expectd{\log(D(x))} + \expectg{\log\left(\frac{1-D(x)}{D(x)}\right)} \nonumber\\
&- \frac{1}{d}\sum_{j=1}^d\rho_j\expect{x\sim p_g^{j1}(x)}{ \log(D_{LR}(X)[j])} - (1-\rho_j)\expect{x\sim p_g^{j0}(x)}{\log(1-D_{LR}(X)[j])}\nonumber \\
+& \frac{1}{d}\sum_{j=1}^d \rho_j \expect{x\sim p_g^{j1}(x)}{  \log(D_{LG}(X)[j])} + (1-\rho_j) \expect{x\sim p_g^{j0}(x)}{ \log(1-D_{LG}(X)[j])}.
\end{align}

We have the following proposition, which characterizes the optimum generator, for optimum Labeler, Anti-Labeler and Discriminator:
\begin{proposition}
\label{prop:alternate}
Define $C(G)$ as the generator loss for when discriminator, Labeler and Anti-Labeler are at their optimum obtained from (\ref{eq:gen_loss_alternative}). The global minimum of the virtual training criterion $C(G)$  is achieved if and only if $p_g(x|l_i) = p_r(x|l_i), \forall i\in[d]$ and $p_g(x)=p_r(x)$.
\end{proposition}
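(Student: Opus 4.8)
The plan is to mimic the proof of Theorem~\ref{thm:main}, carrying an average over the $d$ label coordinates throughout. First I would record the three optimality facts that the definition of $C(G)$ presupposes. The optimal discriminator is $D^*(x)=p_{data}(x)/(p_{data}(x)+p_g(x))$, by the same computation as in standard GANs \cite{Goodfellow2014}. For the Labeler and Anti-Labeler, observe that the objectives (\ref{eq:labeler_r_loss_alternative}) and (\ref{eq:labeler_g_loss_alternative}) each decouple into $d$ independent one-dimensional problems, one per coordinate $j$; the argument of Lemma~\ref{lem:labeler_r} then applies verbatim to each coordinate and yields $D_{LR}^*(x)[j]=p_r(l_j=1\mid x)$ and $D_{LG}^*(x)[j]=p_g(l_j=1\mid x)$ (the support/positivity caveats of Lemma~\ref{lem:opt_labeler_generalized} apply as usual). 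I would also assume $0<\rho_j<1$ for every $j$.

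Next I would substitute these optima into (\ref{eq:gen_loss_alternative}). The GAN part collapses exactly as in Theorem~\ref{thm:main} to $-1+\KL{p_r}{\tfrac{p_r+p_g}{2}}+\KL{p_g}{p_r}$. For each fixed $j$, the Labeler and Anti-Labeler terms are precisely the single-label expressions from the proof of Theorem~\ref{thm:main} with $(\rho,p_g^1,p_g^0,p_r^1,p_r^0)$ replaced by $(\rho_j,p_g^{j1},p_g^{j0},p_r^{j1},p_r^{j0})$; applying Bayes' rule $p_r(l_j=1\mid x)=\rho_j p_r^{j1}(x)/p_r(x)$ and its analogues and regrouping, the binary label entropies $H(\rho_j)$ cancel between the two terms, the pieces containing $\log(p_r/p_g)$ assemble (using $\rho_j p_g^{j1}+(1-\rho_j)p_g^{j0}=p_g$) into $-\KL{p_g}{p_r}$ for every $j$, and what remains are per-coordinate conditional KL divergences. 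Averaging over $j$, the $-\KL{p_g}{p_r}$ cancels the $+\KL{p_g}{p_r}$ from the GAN part, and I expect to land at
\begin{equation*}
C(G)= -1 + \KL{p_r}{\tfrac{p_r+p_g}{2}} + \frac{1}{d}\sum_{j=1}^{d}\Big(\rho_j\,\KL{p_g^{j1}}{p_r^{j1}}+(1-\rho_j)\,\KL{p_g^{j0}}{p_r^{j0}}\Big).
\end{equation*}

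Finally, every term on the right is a nonnegative KL divergence, so $C(G)\ge -1$ with equality if and only if $\KL{p_r}{\tfrac{p_r+p_g}{2}}=0$, i.e.\ $p_g(x)=p_r(x)$, and, for every $j$, $\KL{p_g^{j1}}{p_r^{j1}}=0$ and $\KL{p_g^{j0}}{p_r^{j0}}=0$, i.e.\ $p_g(x\mid l_j)=p_r(x\mid l_j)$; conversely these equalities make all KL terms vanish, giving $C(G)=-1$. Since $p_g$ and $p_r$ share the fixed label marginals $\rho_j$, the per-label matching already forces $p_g=p_r$, so the two stated conditions are not logically independent, but it is cleanest to read them off the two groups of terms.

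The step I expect to be the main obstacle is the bookkeeping in this cancellation: unlike the $2^d$-output construction of Theorem~\ref{thm:main_generalized}, here the Labeler returns only the $d$ marginal posteriors, so one must verify carefully that the single-label identity of Theorem~\ref{thm:main} really does hold coordinate by coordinate and that summing $d$ copies of it against the single shared GAN term leaves no residual cross terms. Everything else is routine manipulation of KL divergences.
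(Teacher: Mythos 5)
Your proposal is correct and follows exactly the route the paper intends: the paper omits this proof with the remark that it ``follows the same lines as'' Theorems~\ref{thm:main} and~\ref{thm:main_generalized}, and your coordinate-wise substitution of the optimal $D$, $D_{LR}$, $D_{LG}$, the cancellation of the $H(\rho_j)$ terms, and the reassembly into $C(G)=-1+\KL{p_r}{\tfrac{p_r+p_g}{2}}+\tfrac{1}{d}\sum_j\bigl(\rho_j\KL{p_g^{j1}}{p_r^{j1}}+(1-\rho_j)\KL{p_g^{j0}}{p_r^{j0}}\bigr)$ is precisely the intended bookkeeping and checks out. Your closing observation that the per-label conditions already imply $p_g=p_r$ via the mixture identity is also correct.
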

\begin{proof}
Proof follows the same lines as in the proof of Theorem \ref{thm:main} and Theorem \ref{thm:main_generalized} and is omitted.
\end{proof}

Thus we have
\begin{equation}
p_r(x,l_i) = p_g(x,l_i), \forall i\in [d]\text{  and  } p_r(x)=p_g(x).
\end{equation}
\noindent
However, this does not in general imply $p_r(x,l_1,l_2,\hdots,l_d) = p_g(x,l1,l_2,\hdots,l_d)$, which is equivalent to saying the generated distribution samples from the class conditional image distributions.  To guarantee the correct conditional sampling given all labels, we introduce the following assumption: We assume that the image $x$ determines all the labels. This assumption is very relevant in practice. For example, in the CelebA dataset, which we use, the label vector, e.g., whether the person is a male or female, with or without a mustache, can be thought of as a deterministic function of the image. When this is true, we can say that $p_r(l_1,l_2,\hdots, l_n|x) = p_r(l_1|x)p_r(l_2|x)\hdots p_r(l_n|x)$. 

We need the following lemma, where kronecker delta function refers to the functions that take the value of 1 only on a single point, and 0 everywhere else:
\begin{lemma}
\label{lem:helper_alternate}
Any discrete joint probability distribution, where all the marginal probability distributions are kronecker delta functions is the product of these marginals.
\end{lemma}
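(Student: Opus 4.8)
The plan is to argue that a joint distribution whose every marginal is a point mass must itself be a point mass, and then observe that the product of the marginals is the same point mass. Concretely, write the joint distribution over discrete variables $X_1,\hdots,X_n$ and, for each $i$, let $a_i$ denote the unique atom of the $i$-th marginal, so that $\Pr(X_i = a_i) = 1$, equivalently $\Pr(X_i \neq a_i) = 0$.

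First I would pass to complements and apply finite subadditivity (the union bound): the event that the joint sample differs from $(a_1,\hdots,a_n)$ in at least one coordinate satisfies
\begin{equation*}
\Pr\!\left(\exists\, i \in [n]:\ X_i \neq a_i\right) \;\le\; \sum_{i=1}^{n} \Pr(X_i \neq a_i) \;=\; 0 .
\end{equation*}
Hence $\Pr(X_1 = a_1,\hdots,X_n = a_n) = 1$, so the joint distribution assigns probability $0$ to every tuple other than $(a_1,\hdots,a_n)$; that is, it is the Kronecker delta supported at $(a_1,\hdots,a_n)$.

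Second, I would note that the product of the marginals is, by definition, the function $(x_1,\hdots,x_n) \mapsto \prod_{i=1}^n \mathbb{1}[x_i = a_i]$, which equals $1$ exactly when $(x_1,\hdots,x_n) = (a_1,\hdots,a_n)$ and $0$ otherwise — i.e., the same Kronecker delta. Comparing the two expressions gives the claimed equality of the joint with the product of its marginals.

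There is no real obstacle here: the only thing to be careful about is invoking finite subadditivity of a probability measure over the (possibly countable) discrete support rather than trying to manipulate conditional probabilities directly, since conditioning on probability-one events is trivial but conditioning arguments obscure the one-line union-bound reasoning. Everything else is bookkeeping.
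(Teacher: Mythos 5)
Your proof is correct and reaches the same intermediate conclusion as the paper's: the joint must be the point mass at the tuple of atoms $(a_1,\hdots,a_n)$, which coincides with the product of the marginals. The paper argues this by contradiction via marginalization (any tuple with positive joint mass would force positive marginal mass off the atom), while you use the union bound on the complements; these are two phrasings of the same elementary fact, so the approaches are essentially identical.
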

\begin{proof}
Let $\delta_{\{x-u\}}$ be the kronecker delta function which is 1 if $x=u$ and is 0 otherwise. Consider a joint distribution $p(X_1,X_2,\hdots,X_n)$, where $p(X_i) = \delta_{\{X_i-u_i\}}, \forall i\in [n]$, for some set of elements $\{u_i\}_{i\in [n]}$. We will show by contradiction that the joint probability distribution is zero everywhere except at $(u_1,u_2,\hdots,u_n)$. Then, for the sake of contradiction, suppose for some $v=(v_1,v_2,\hdots,v_n)\neq (u_1,u_2,\hdots,u_n)$, $p(v_1,v_2,\hdots,v_n)\neq 0$. Then $\exists j\in[n]$ such that $v_j\neq u_j$. Then we can marginalize the joint distribution as
\begin{equation}
p(v_j)=\sum_{X_1,\hdots,X_{j-1},X_j,\hdots,X_n}p(X_1,\hdots,X_{j-1},v_j,X_{j+1},\hdots,X_n) > 0,
\end{equation}
where the inequality is due to the fact that the particular configuration $(v_1,v_2,\hdots, v_n)$ must have contributed to the summation. However this contradicts with the fact that $p(X_j) = 0, \forall X_j\neq u_j$. Hence, $p(.)$ is zero everywhere except at $(u_1,u_2,\hdots,u_n)$, where it should be 1.
\end{proof}
We can now simply apply the above lemma on the conditional distribution $p_g(l_1,l_2,\hdots,l_d|x)$. Proposition \ref{prop:alternate} shows that the image distributions and the marginals $p_g(l_i|x)$ are true to the data distribution due to Bayes' rule. Since the vector $(l_1,\hdots,l_n)$ is a deterministic function of $x$ by assumption, $p_r(l_i|x)$ are kronecker delta functions, and so are $p_g(l_i|x)$ by Proposition \ref{prop:alternate}. Thus, since the joint $p_g(x,l_1,l_2,\hdots,l_d)$ satisfies the condition that every marginal distribution $p(l_i|x)$ is a kronecker delta function, then it must be a product distribution by Lemma \ref{lem:helper_alternate}. Thus we can write 
\begin{align*}
p_g(l_1,l_2,\hdots,l_d|x) = p_g(l_1|x)p_g(l_2|x)\hdots p_g(l_n|x).
\end{align*}
Then we have the following chain of equalities.

\begin{align*}
p_r(x,l_1,l_2,\hdots,l_d) &= p_r(l_1,\hdots,l_n|x)p_r(x)\\
& = p_r(l_1|x)p_r(l_2|x)\hdots p_r(l_n|x)p_r(x)\\
& = p_g(l_1|x)p_g(l_2|x)\hdots p_g(l_n|x)p_g(x)\\
& = p_g(l_1,l_2,\hdots,l_d|x)p_g(x)\\
& = p_g(x,l_1,l_2,\hdots,l_d).
\end{align*}

Thus, we also have $p_r(x|l_1,l_2,\hdots,l_n) = p_g(x|l_1,l_2,\hdots,l_n)$ since $p_r(l_1,l_2,\hdots,l_n)=p_g(l_1,l_2,\hdots,l_n)$, concluding the proof that the optimum generator samples from the class conditional image distributions.

\subsection{Additional Simulations for Causal Controller}
\label{app:cc}
First, we evaluate the effect of using the wrong causal graph on  an artificially generated dataset. Figure \ref{fig:syn_scatter} shows the scatter plot for the two coordinates of a three dimensional distribution. As we observe, using the correct graph gives the closest scatter plot to the original data, whereas using the wrong Bayesian network, collider graph, results in a very different distribution.

\begin{figure}[t]
\captionsetup{justification=centering}

\subfloat[ $X_1\rightarrow X_2 \rightarrow X_3$]
{\label{fig:syn_data}
\includegraphics[width=0.19\linewidth]
{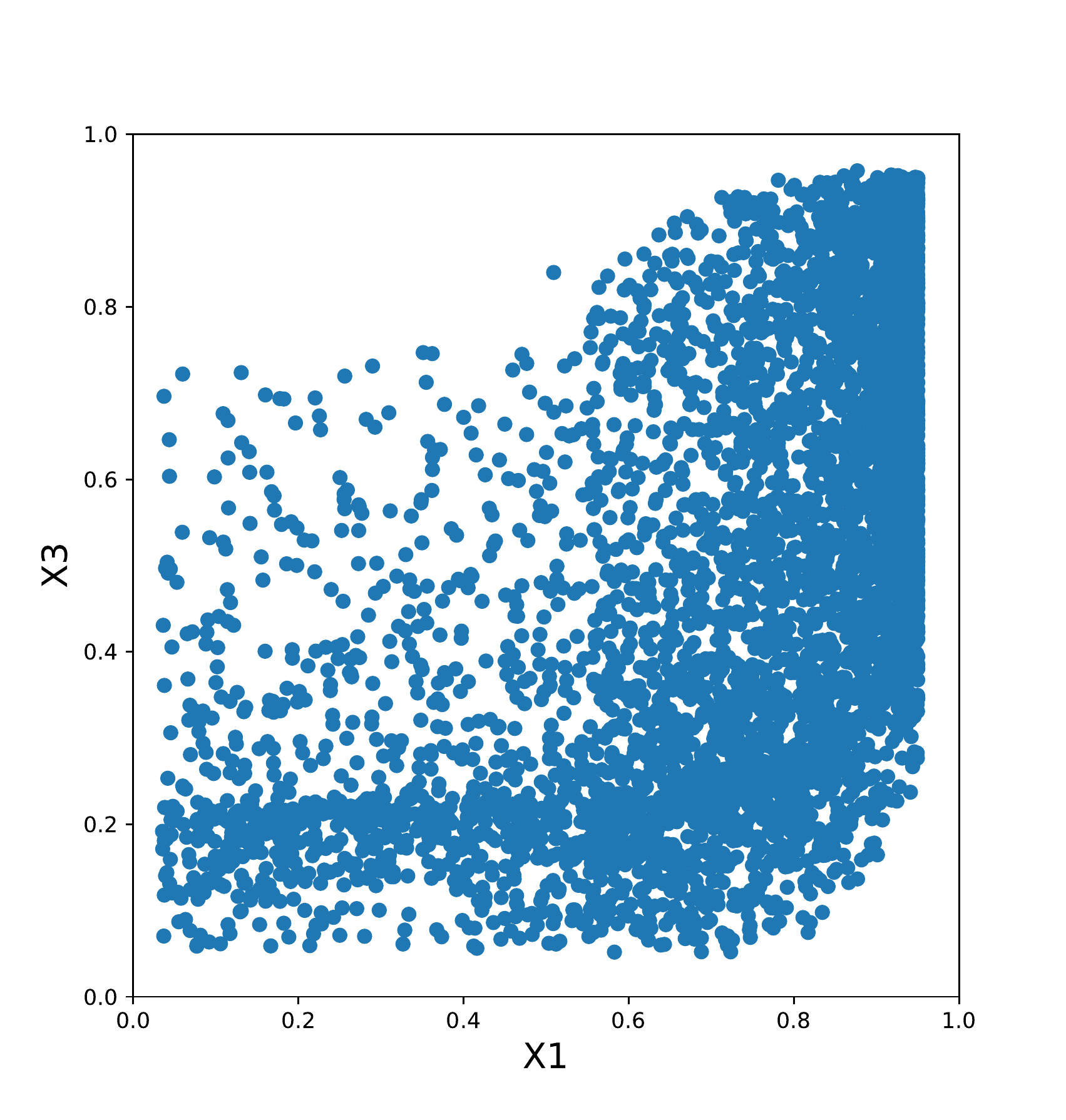}}
\subfloat[$X_1\rightarrow X_2 \rightarrow X_3$]
{\label{fig:syn_linear}
\includegraphics[width=0.19\linewidth]
{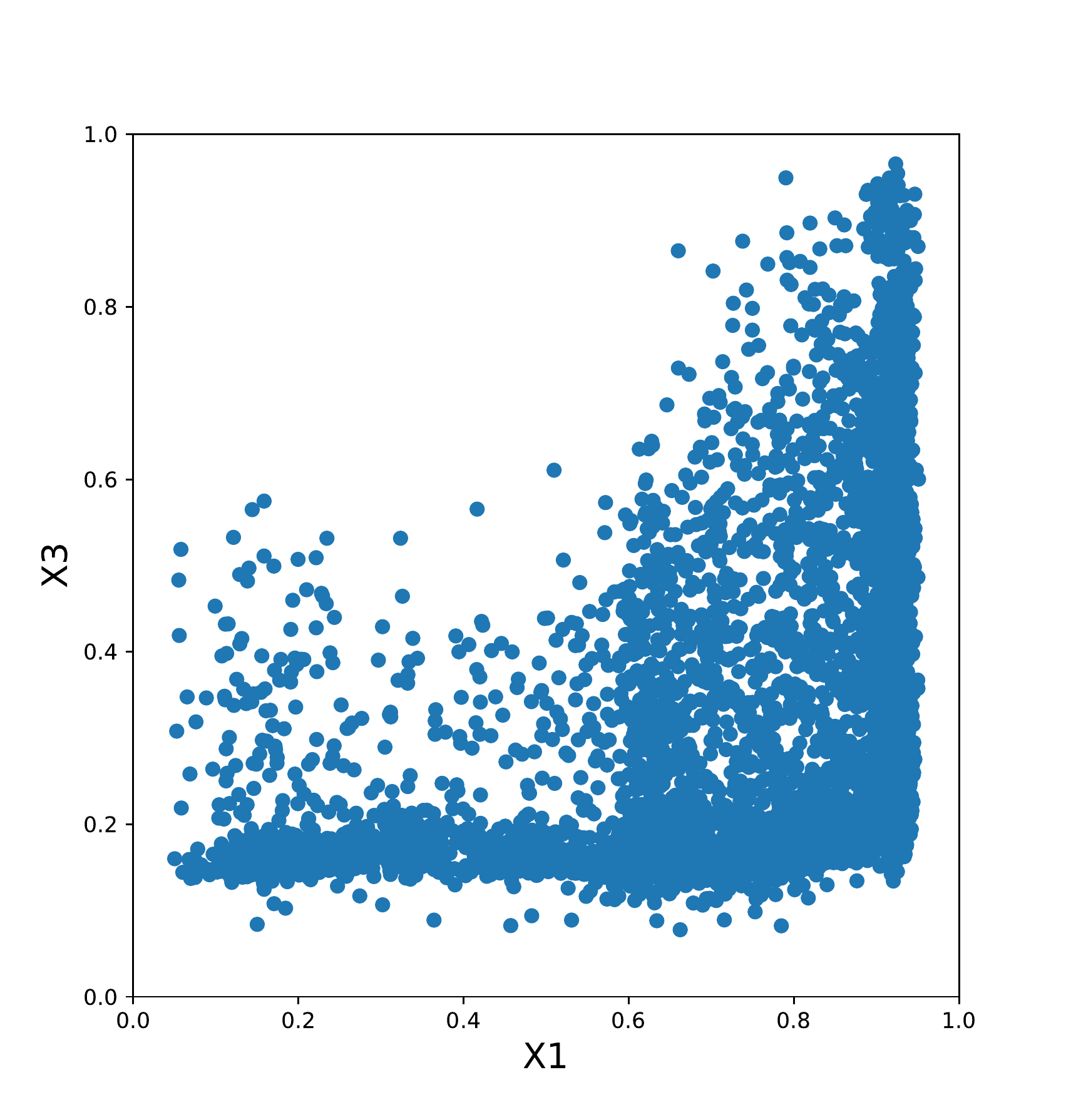}}
\subfloat[$X_1\rightarrow X_2\rightarrow X_3$ $X_1\rightarrow X_3$]
{\label{fig:syn_complete}
\includegraphics[width=0.19\linewidth]
{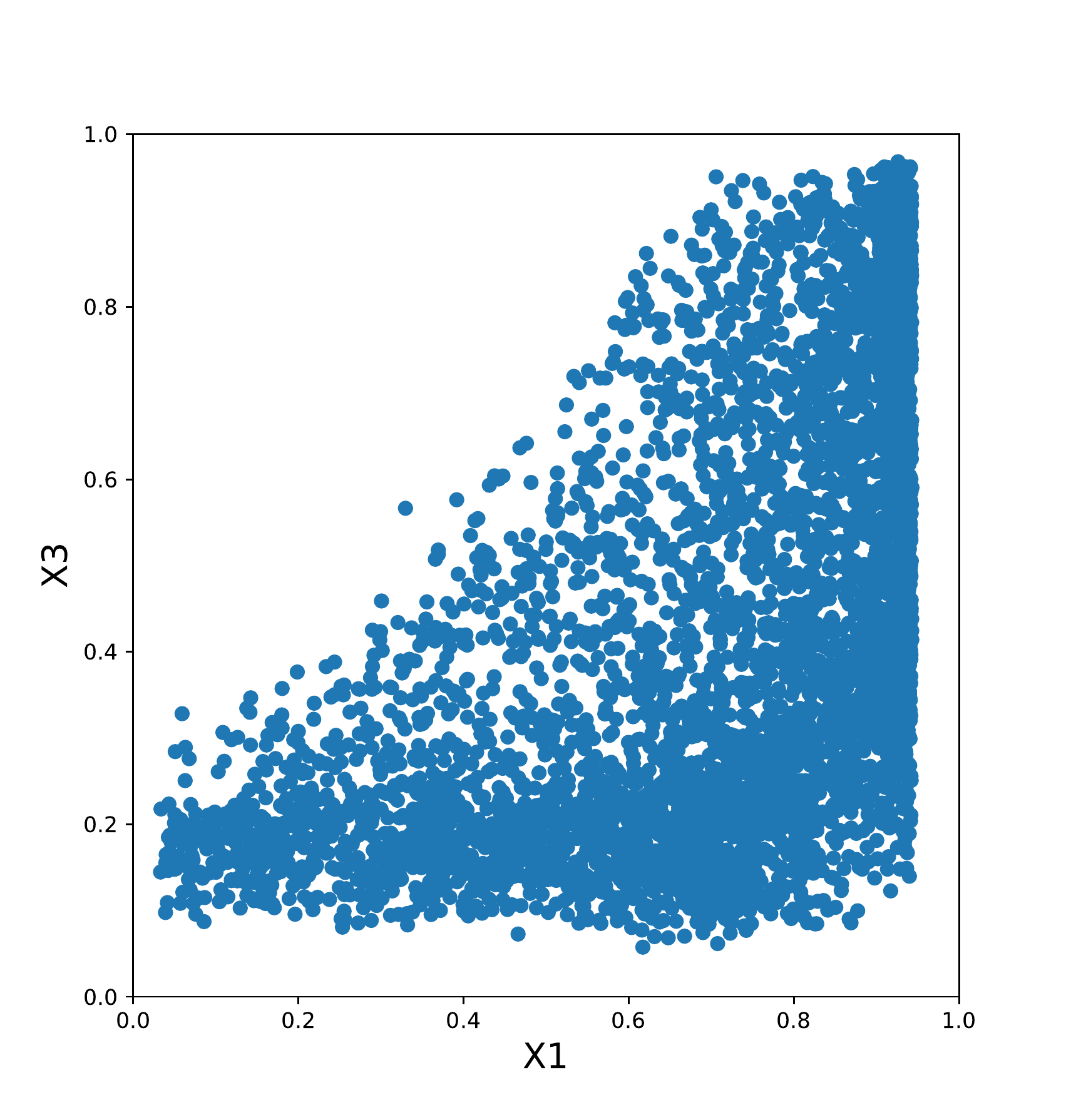}}
\subfloat[$X_1\rightarrow X_2\leftarrow X_3$]
{\label{fig:syn_collider}
\includegraphics[width=0.19\linewidth]
{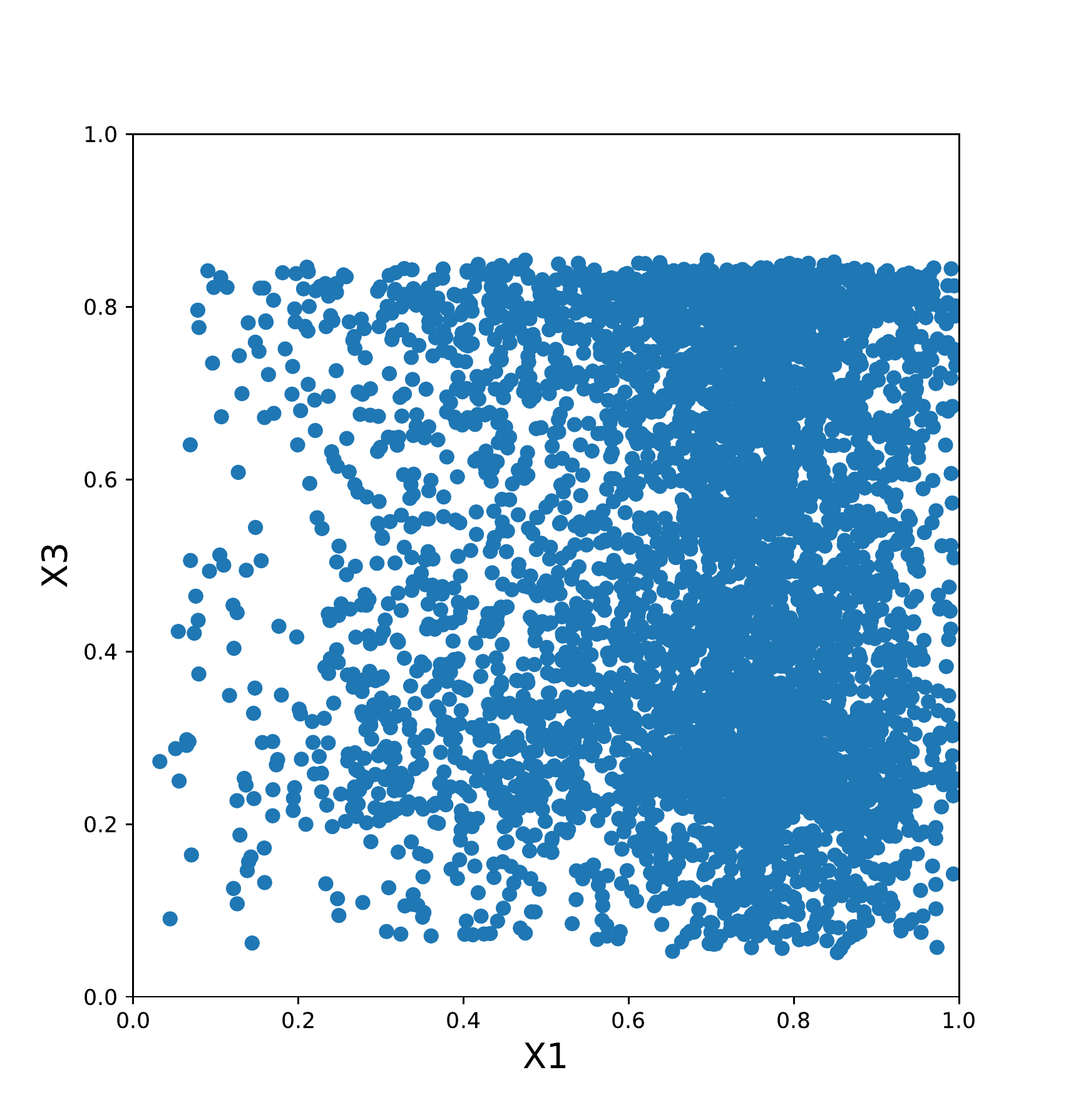}}
\subfloat[Fully connected]
{\label{fig:syn_fc}
\includegraphics[width=0.19\linewidth]
{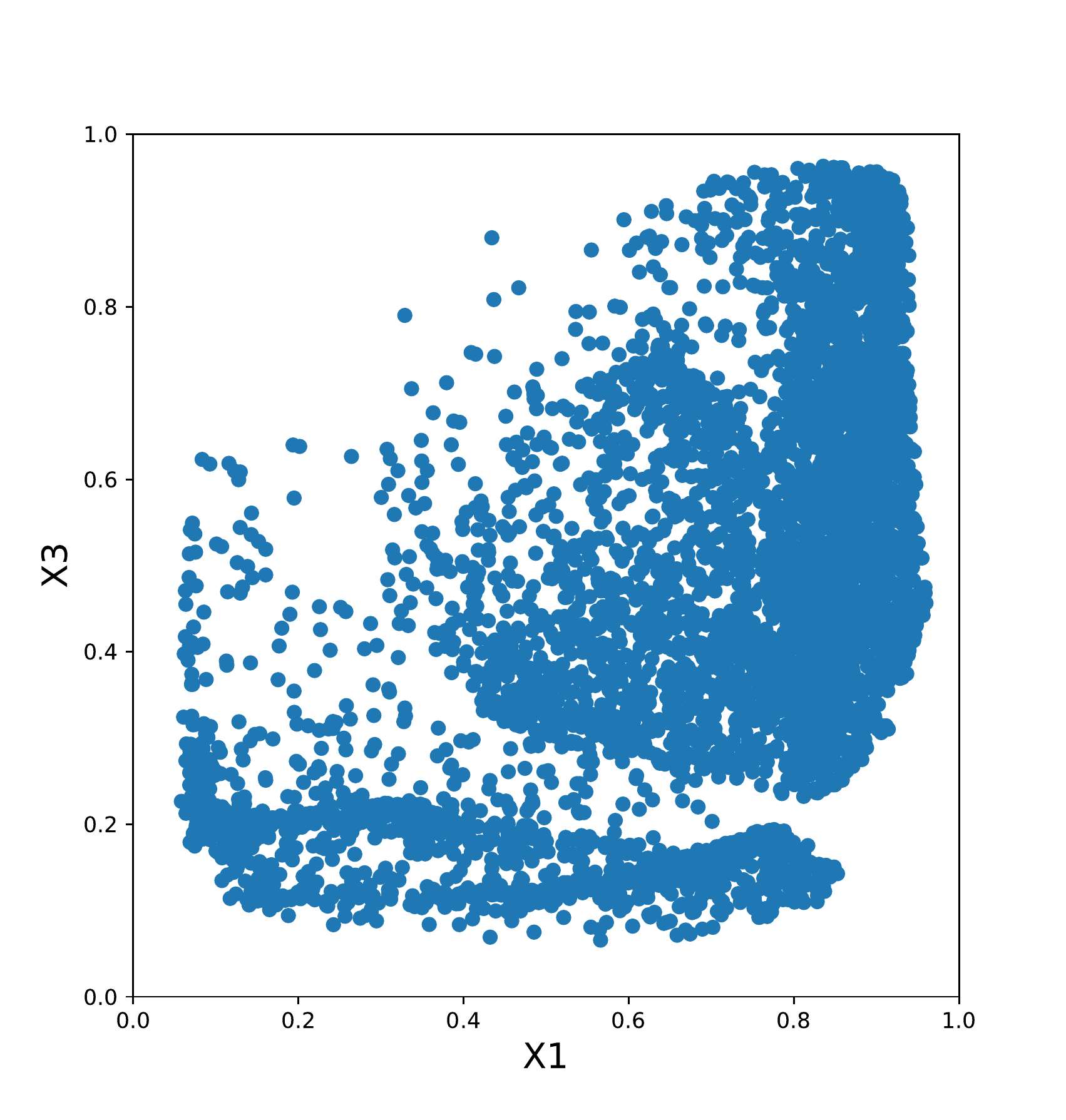}}
\caption{Synthetic data experiments: (a) Scatter plot for actual data. Data is generated using the causal graph $X_1\rightarrow X_2 \rightarrow X_3$.  (b) Generated distribution when generator causal model is $X_1\rightarrow X_2 \rightarrow X_3$. (c) Generated distribution when generator causal model is $X_1\rightarrow X_2 \rightarrow X_3$ $X_1\rightarrow X_3$. (d) Generated distribution when generator causal model is $X_1\rightarrow X_2 \leftarrow X_3$. (e) Generated distribution when generator is from a fully connected last layer of a 5 layer FF neural net.}
\label{fig:syn_scatter}
\end{figure}

Second, we expand on the causal graphs used for experiments for the CelebA dataset. The graph Causal Graph 1 (G1) is as illustrated in Figure \ref{fig:big_causal_graph}. The graph cG1, which is a completed version of G1, is the complete graph associated with the ordering: Young, Male, Eyeglasses, Bald, Mustache, Smiling, Wearing Lipstick, Mouth Slightly Open, Narrow Eyes. For example, in cG1 Male causes Smiling because Male comes before Smiling in the ordering. The graph rcG1 is associated with the reverse ordering.

Next, we check the effect of using the incorrect Bayesian network for the data. The causal graph G1 generates Male and Young independently, which is incorrect in the data. Comparison of pairwise distributions in Table \ref{tab:pretrain_pairMY} demonstrate that for G1 a reasonable approximation to the true distribution is still learned for \{Male, Young\} jointly. For cG1 a nearly perfect distributional approximation is learned. Furthermore we show that despite this inaccuracy, both graphs G1 and cG1 lead to Causal Controllers that never output the label combination \{Female,Mustache\}, which will be important later.

\begin{table}
\centering
\begin{tabular}{c|c|c|c|}
\hline
\multicolumn{2}{|c|}{Label} & \multicolumn{2}{ c| }{Male} \\ \cline{3-4}
\multicolumn{2}{|c|}{Pair} & 0 & 1\\ \hline
\multicolumn{1}{|c|}{\multirow{2}{*}{Young} } & 0 & 0.14[0.07](0.07) & 0.09[0.15](0.15) \\ 
\multicolumn{1}{|c|}{} & 1 &0.47[0.51](0.51) & 0.29[0.27](0.26) \\ \hline
\multicolumn{1}{|c|}{ \multirow{2}{*}{Mustache} } & 0 & 0.61[0.58](0.58) & 0.34[0.38](0.38) \\
\multicolumn{1}{|c|}{} & 1 & 0.00[0.00](0.00) & 0.04[0.04](0.04) \\ \hline

\end{tabular}
\caption{Pairwise marginal distribution for select label pairs when Causal Controller is trained on $G1$ in plain text, its completion $cG1$[square brackets], and the true pairwise distribution(in parentheses). Note that $G1$ treats Male and Young labels as independent, but does not completely fail to generate a reasonable (product of marginals) approximation. Also note that when an edge is added $Young\rightarrow Male$, the learned distribution is nearly exact. Note that both graphs contain the edge $Male\rightarrow Mustache$ and so are able to learn that women have no mustaches.}

\label{tab:pretrain_pairMY}
\end{table}

Wasserstein GAN in its original form (with Lipshitz discriminator) assures convergence in distribution of the Causal Controller output to the discretely supported distribution of labels. We use a slightly modified version of Wasserstein GAN with a penalized gradient\cite{Gulrajani2017}. We first demonstrate that learned outputs actually have "approximately discrete" support. In Figure \ref{fig:cc_discrete}, we sample the joint label distribution 1000 times, and make a histogram of the (all) scalar outputs corresponding to any label.

Although Figure \ref{fig:TVDiters} demonstrates conclusively good convergence for both graphs, TVD is not always intuitive. For example, "how much can each marginal be off if there are 9 labels and the TVD is 0.14?". To expand upon Figure \ref{tab:pretrain_marg} where we showed that the causal controller learns the correct distribution for a pairwise subset of nodes, here we also show that both Causal Graph 1 (G1) and the completion we define (cG1) allow training of very reasonable marginal distributions for all labels (Table \ref{tab:pretrain_pairMY}) that are not off by more than 0.03 for the worst label. $\mathbb{P}_D{(L=1)}$ is the probability that the label is 1 in the dataset, and $\mathbb{P}_G{(L=1)}$ is the probability that the generated label is (around a small neighborhood of ) 1.

\begin{table}
\centering
    \begin{tabular}{ | c | c | c | c |}
    \hline
    \textbf{Label, L}  & $\mathbb{P}_{G1}{(L=1)}$ & $\mathbb{P}_{cG1}{(L=1)}$ & $\mathbb{P}_D{(L=1)}$ \\ \hline  \hline
    Bald & 0.02244 & 0.02328 & 0.02244 \\ \hline
    Eyeglasses & 0.06180 & 0.05801 & 0.06406  \\ \hline
    Male & 0.38446 & 0.41938 & 0.41675\\ \hline
    Mouth Slightly Open & 0.49476 & 0.49413 & 0.48343 \\ \hline
    Mustache & 0.04596 & 0.04231 & 0.04154 \\ \hline
    Narrow Eyes & 0.12329 & 0.11458 & 0.11515 \\ \hline
    Smiling & 0.48766 & 0.48730 & 0.48208 \\ \hline
    Wearing Lipstick & 0.48111 & 0.46789 & 0.47243 \\ \hline
    Young & 0.76737 & 0.77663 & 0.77362 \\ \hline
    \end{tabular}
\caption{Marginal distribution of pretrained Causal Controller labels when Causal Controller is trained on Causal Graph 1($P_{G1}$) and its completion($P_{cG1}$), where $cG1$ is the (nonunique) largest DAG containing $G1$ (see appendix). The third column lists the actual marginal distributions in the dataset}
\label{tab:pretrain_marg}
\end{table}

\subsection{Additional Simulations for CausalGAN}
\label{dcgan:cig}
In this section, we provide additional simulations for CausalGAN. In Figures \ref{fig:bald_sweep}-\ref{fig:eyeglasses_sweep}, we show the conditional image generation properties of CausalGAN by sweeping a single label from 0 to 1 while keeping all other inputs/labels fixed. In Figure \ref{fig:dcgan_diversity}, to examine the degree of mode collapse and show the image diversity, we show 256 randomly sampled images.

\begin{figure}
\centering
\subfloat[Interpolating Bald label]
{\label{fig:bald_sweep}
\includegraphics[width=0.45\linewidth]
{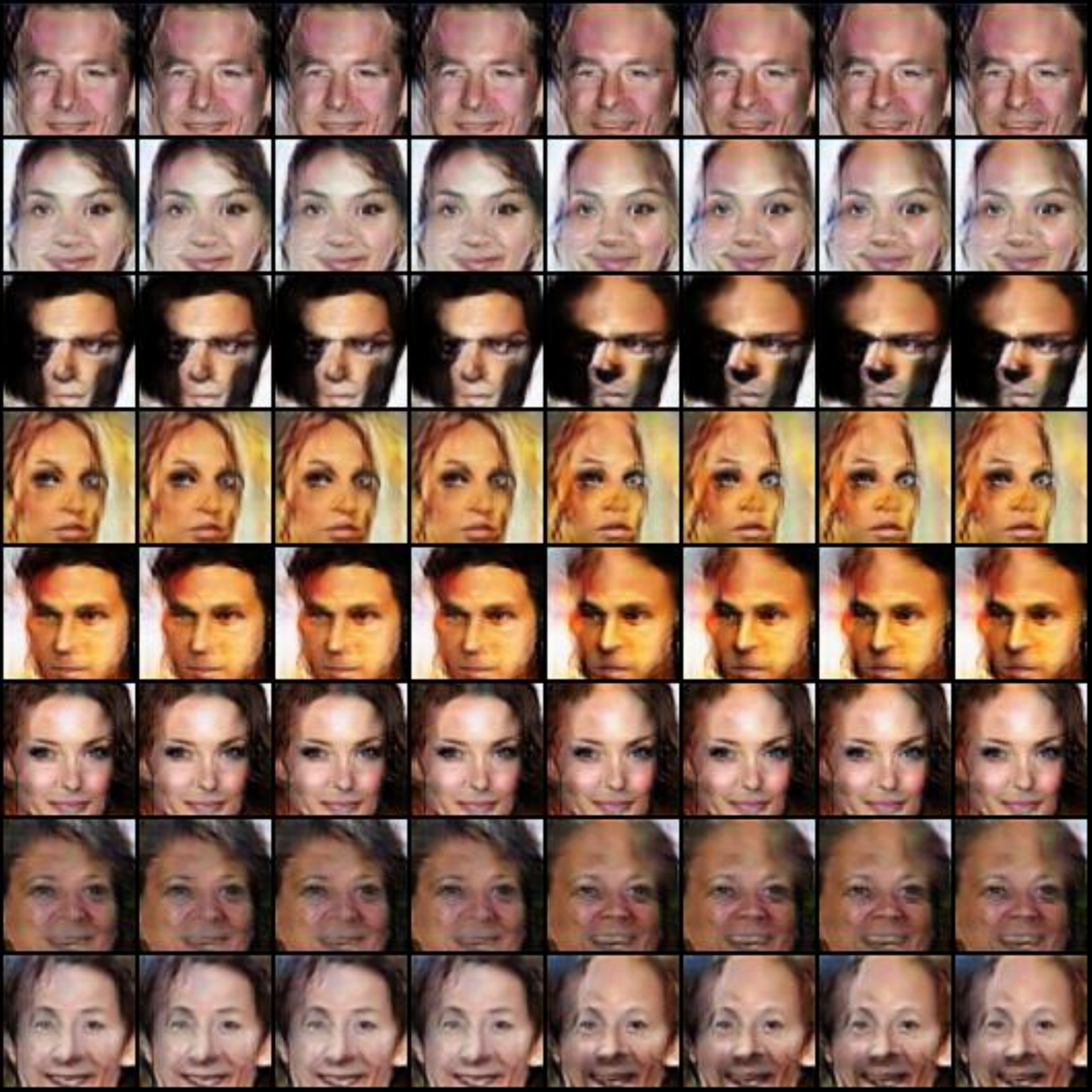}}
\subfloat[Interpolating Male label]
{\label{fig:male_sweep}
\includegraphics[width=0.45\linewidth]
{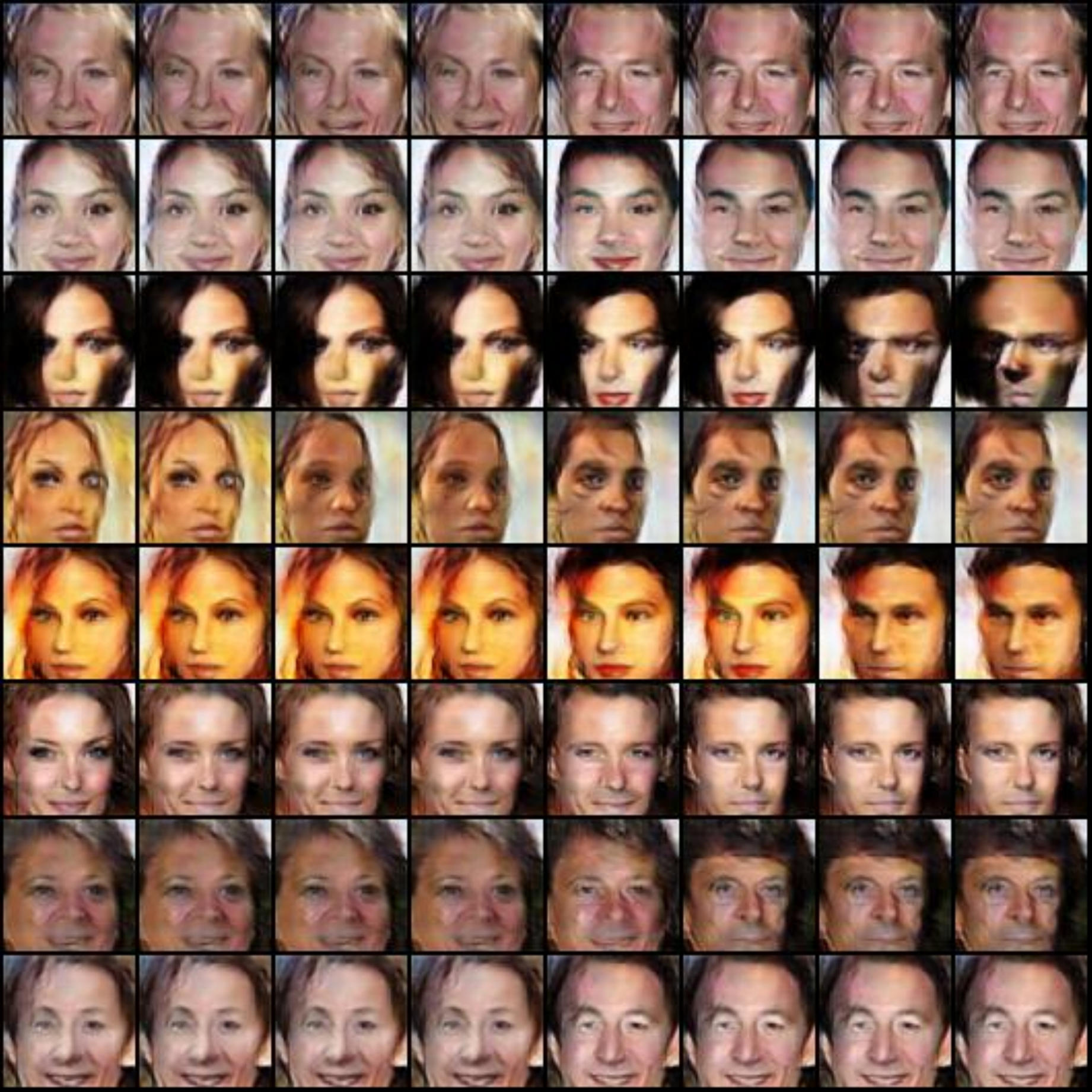}}
\\
\subfloat[Interpolating Young label]
{\label{fig:young_sweep}
\includegraphics[width=0.45\linewidth]
{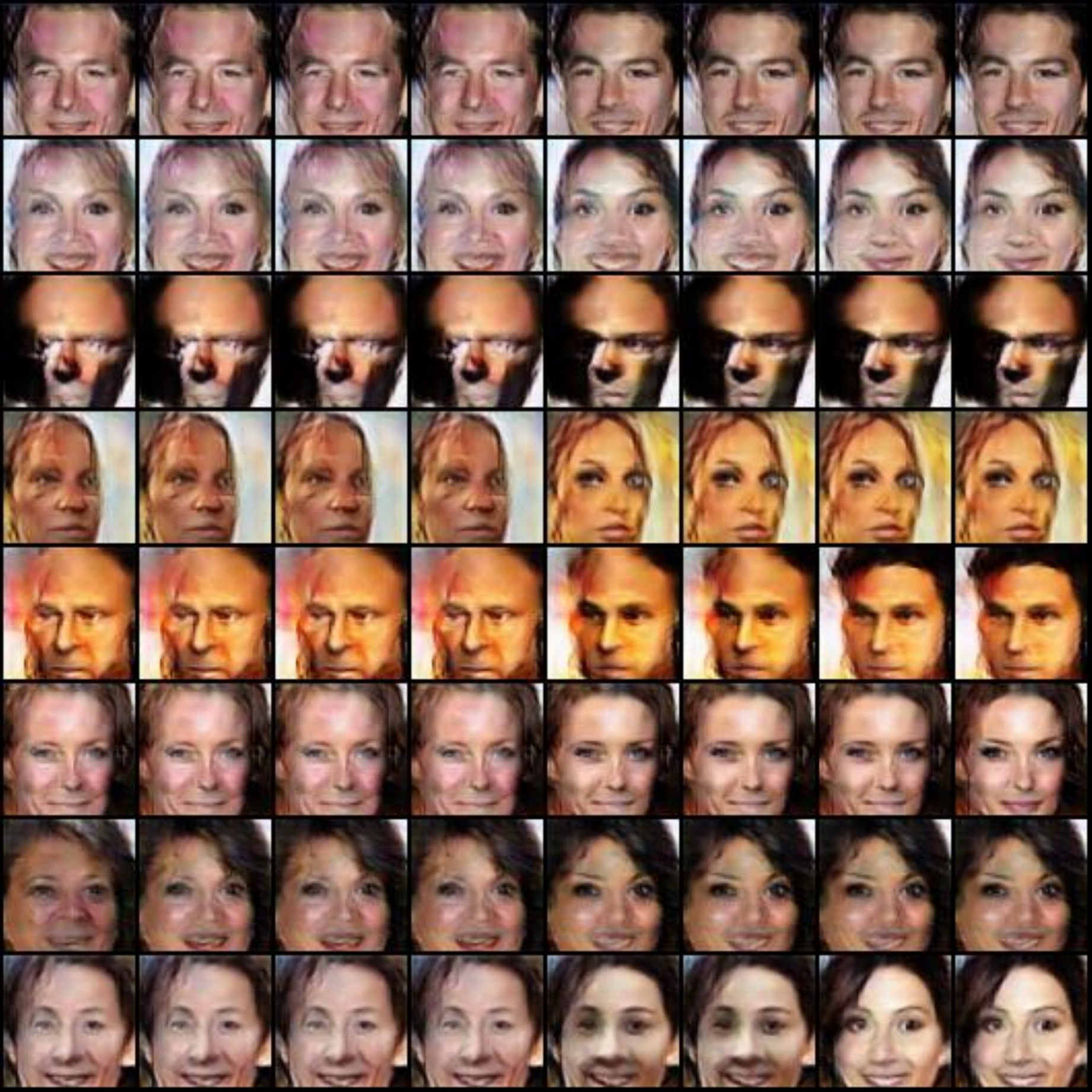}
}
\subfloat[Interpolating Eyeglasses label]
{\label{fig:eyeglasses_sweep}
\includegraphics[width=0.45\linewidth]
{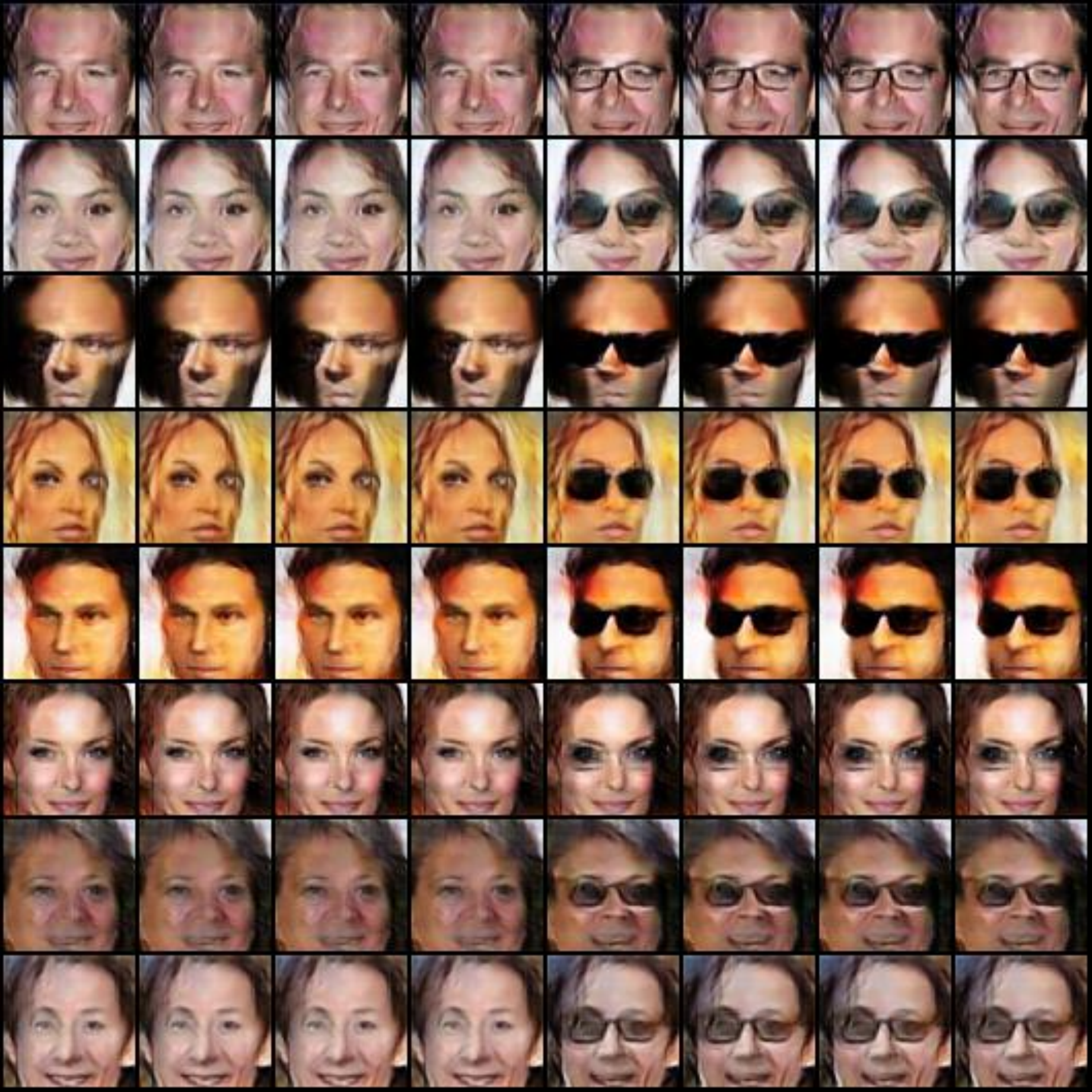}
}
\caption{The effect of interpolating a single label for CausalGAN, while keeping the noise terms and other labels fixed. }
\end{figure}

\begin{figure}
\includegraphics[width=\textwidth]
{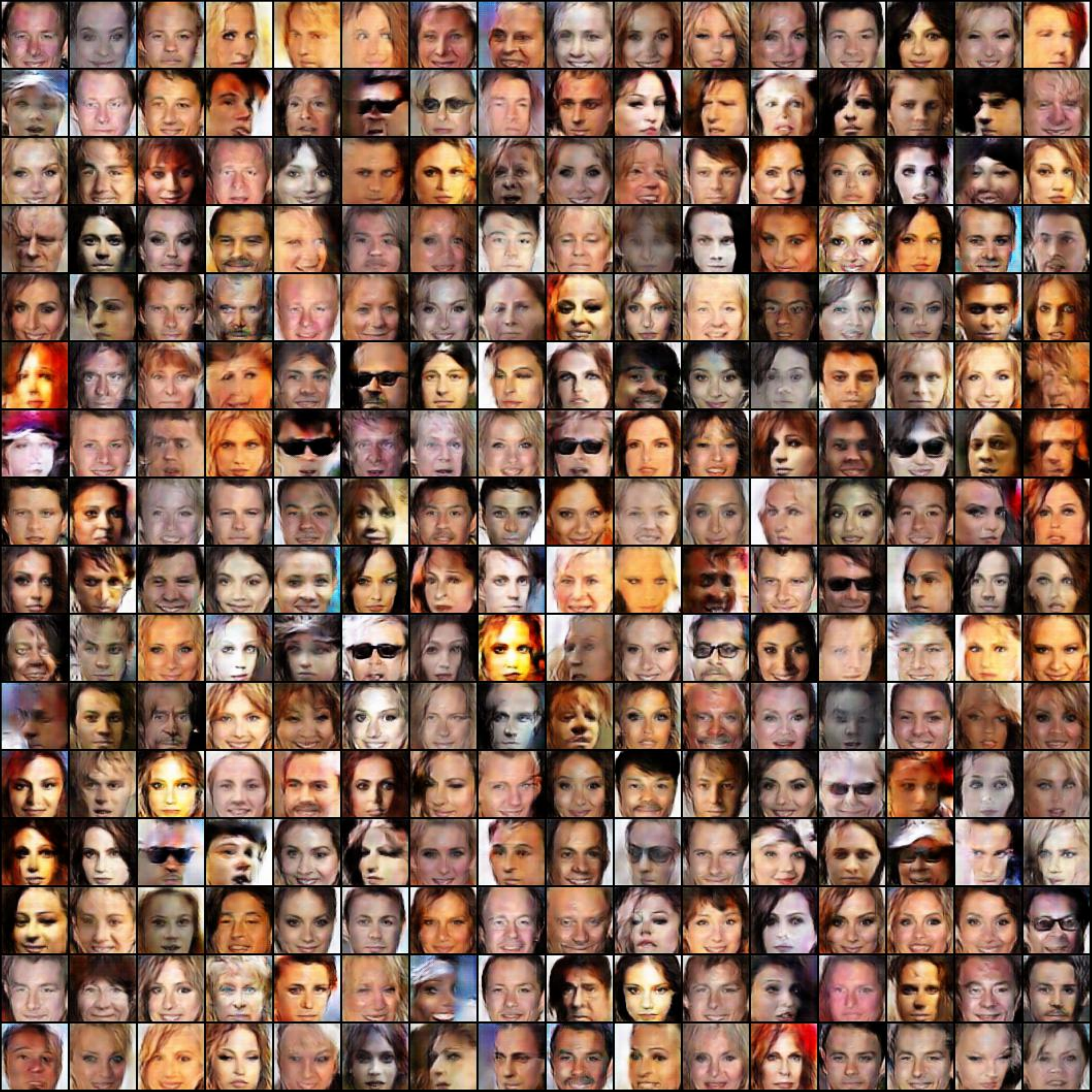}
\caption{Diversity of the proposed CausalGAN showcased with 256 samples.} 
\label{fig:dcgan_diversity}
\end{figure}

\subsection{Additional CausalBEGAN Simulations}
In this section, we provide additional simulation results for CausalBEGAN. First we show that although our third margin term $b_3$ introduces complications, it can not be ignored. Figure \ref{fig:no3margin} demonstrates that omitting the third margin on the image quality of rare labels.

Furthermore just as the setup in BEGAN permitted the definiton of a scalar "$\mathcal{M}$", which was monotonically decreasing during training, our definition permits an obvious extension $\mathcal{M}_{complete}$ (defined in \ref{eqn:M}) that preserves these properties. See Figure \ref{fig:convergence_of_causalBEGAN} to observe  $\mathcal{M}_{complete}$ decreaing monotonically during training.

We also show the conditional image generation properties of CausalBEGAN by using "label sweeps" that move a single label input from 0 to 1 while keeping all other inputs fixed (Figures \ref{fig:began_bald_sweep} -\ref{fig:began_eyeglasses_sweep} ). It is interesting to note that while generators are often implicitly thought of as continuous functions, the generator in this CausalBEGAN architecture learns a discrete function with respect to its label input parameters. (Initially there is label interpolation, and later in the optimization label interpolation becomes more step function like (not shown)). Finally, to examine the degree of mode collapse and show the image diversity, we show a random sampling of 256 images (Figure \ref{fig:began_diversity}).

\begin{figure}[ht!]
\includegraphics[width=\textwidth]
{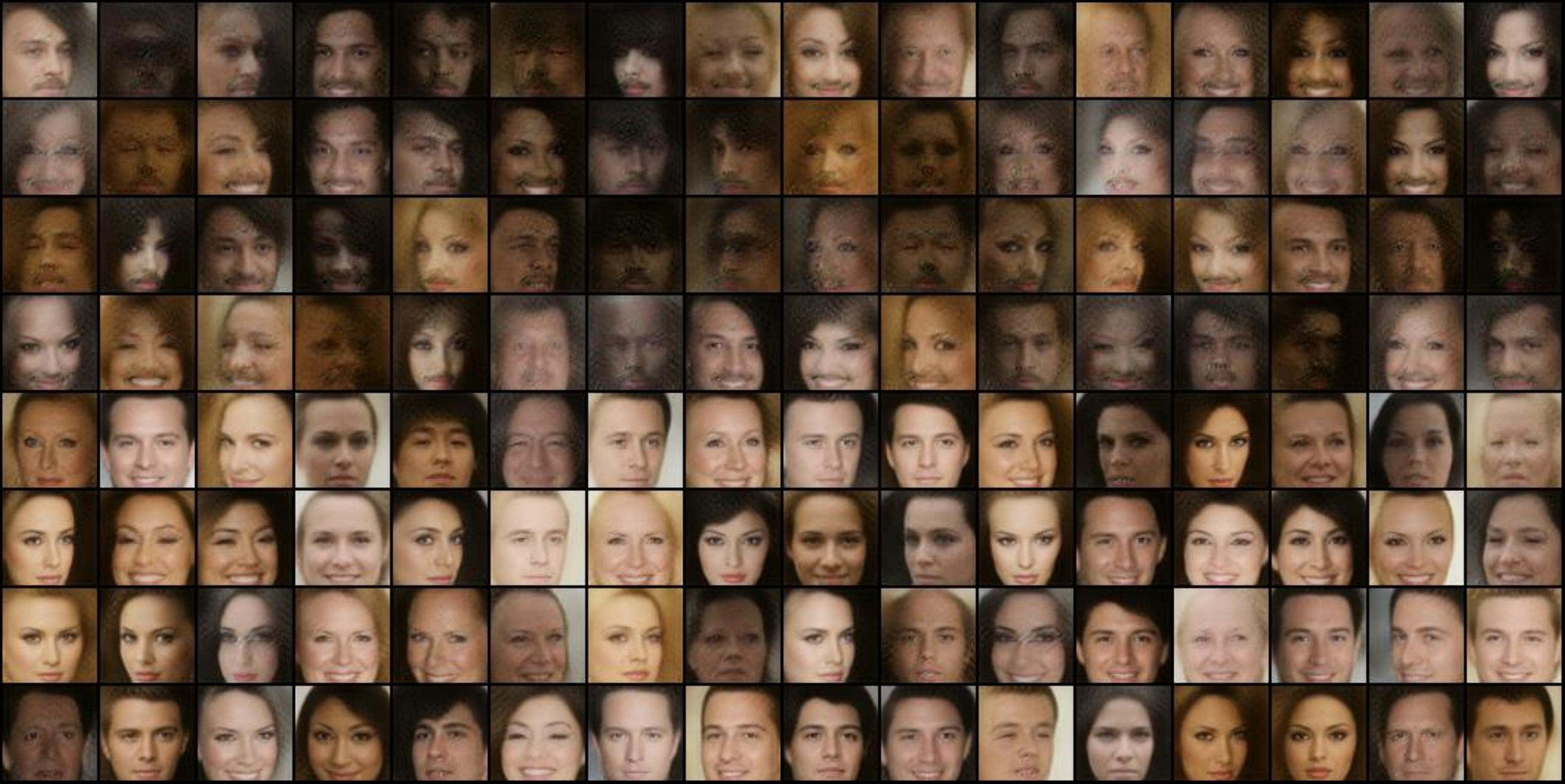}
\caption{Omitting the nonobvious margin $b_3 = \gamma_3*relu(b_1) - relu(b_2)$ results in poorer image quality particularly for rare labels such as mustache. We compare samples from two interventional distributions. Samples from $\pr{.|do(Mustache=1)}$ (top) have much poorer image quality compared to those under $\pr{.|do(Mustache=0)}$ (bottom).} 
\label{fig:no3margin}
\end{figure}

\begin{figure}[ht!]
\centering
\includegraphics[width=0.6\linewidth]
{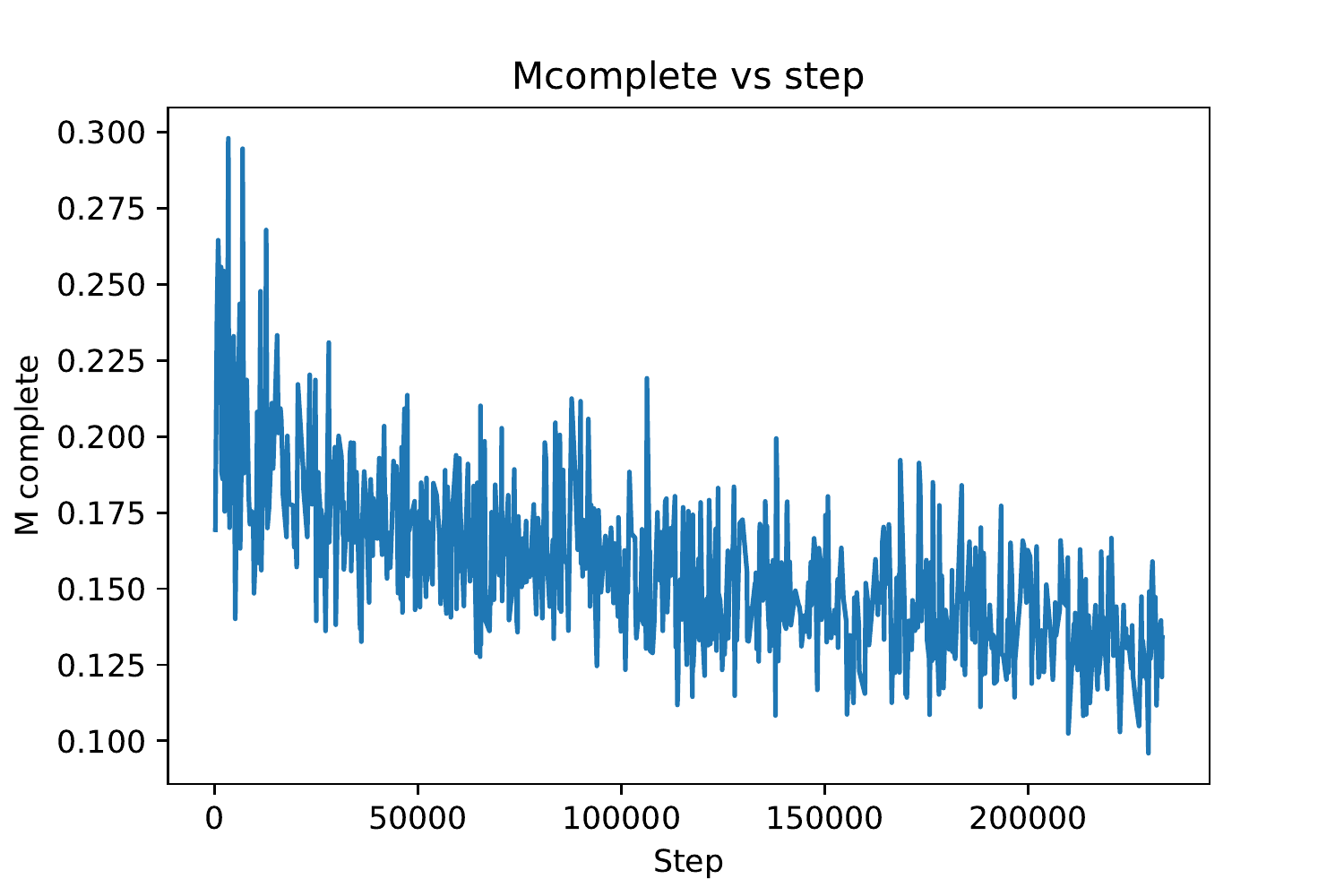}
\caption{Convergence of CausalBEGAN captured through the parameter $\mathcal{M}_{complete}$.}
\label{fig:convergence_of_causalBEGAN}
\end{figure}

\clearpage

\begin{figure}
\centering
\subfloat[Interpolating Bald label]
{\label{fig:began_bald_sweep}
\includegraphics[width=0.45\linewidth]
{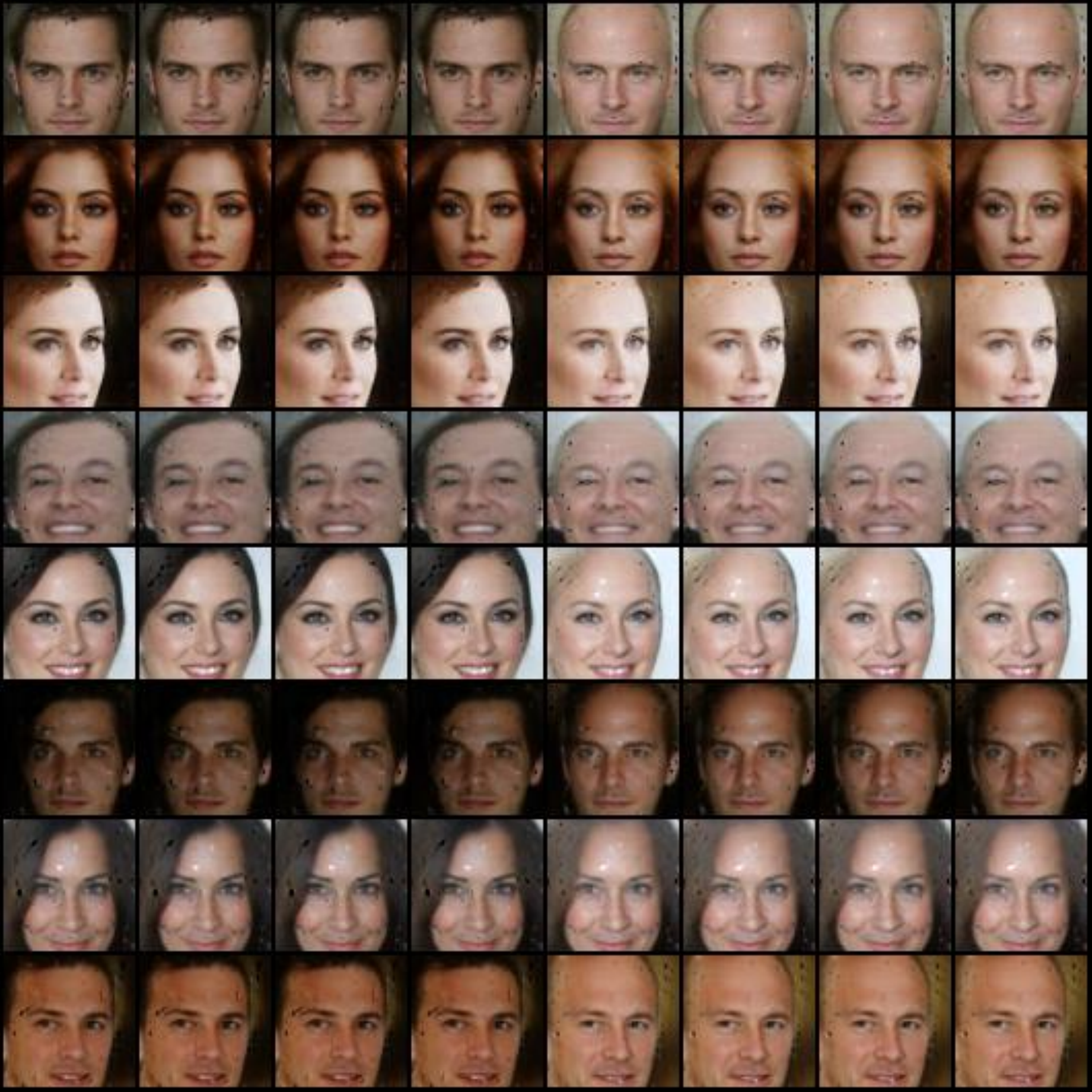}}
\subfloat[Interpolating Male label]
{\label{fig:began_male_sweep}
\includegraphics[width=0.45\linewidth]{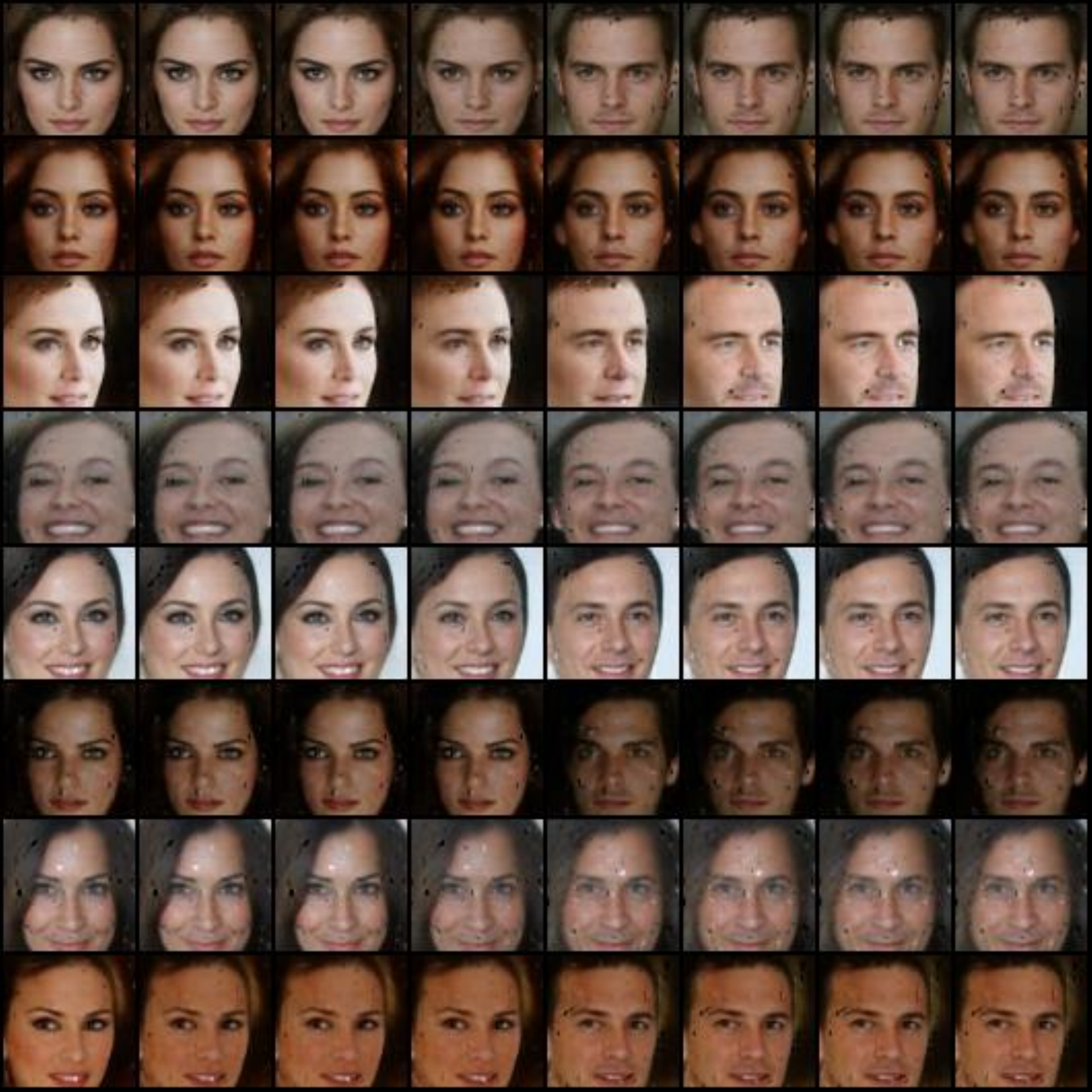}
}
\\
\subfloat[Interpolating Young label]
{\label{fig:began_young_sweep}
\includegraphics[width=0.45\linewidth]{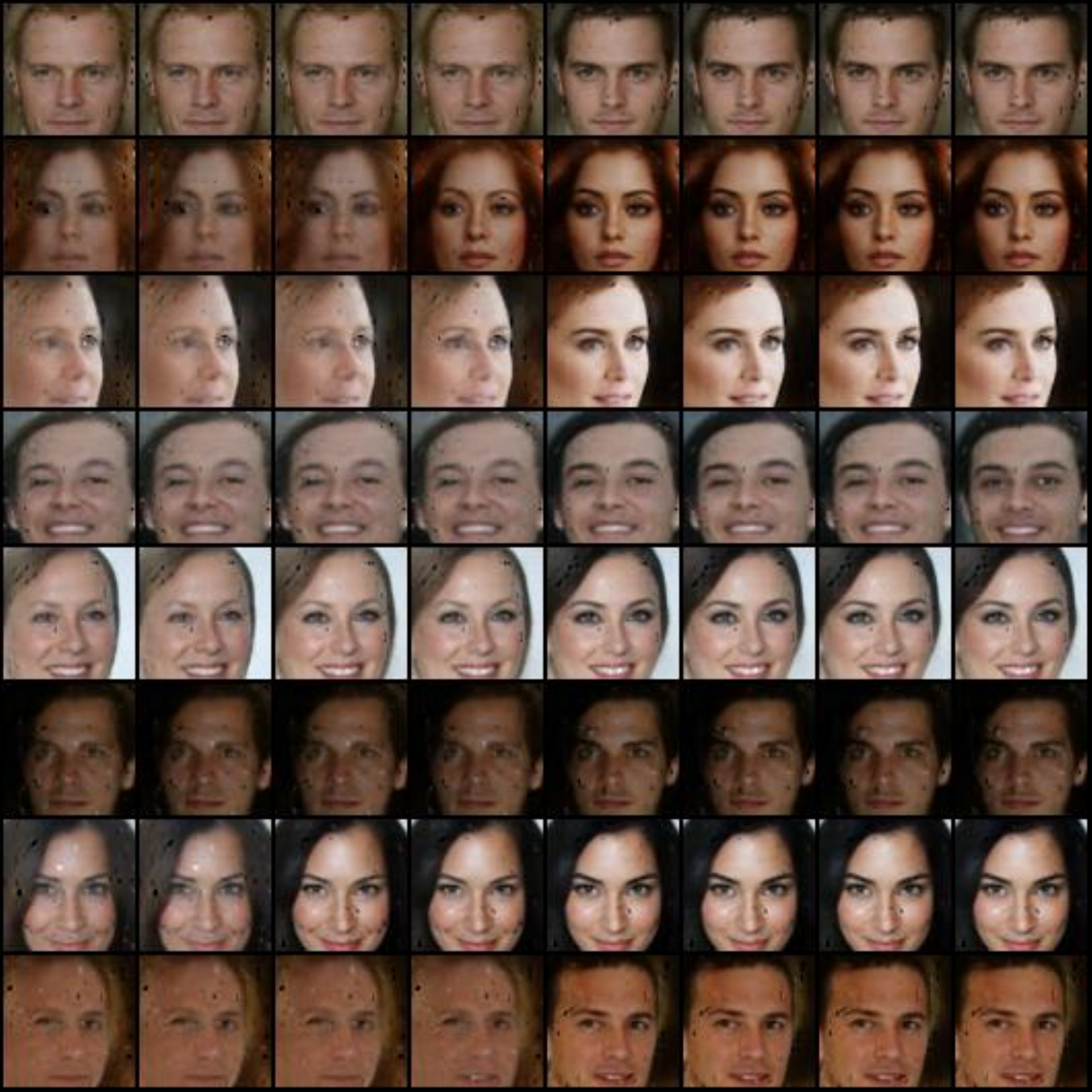}
}
\subfloat[Interpolating Eyeglasses label]
{\label{fig:began_eyeglasses_sweep}
\includegraphics[width=0.45\linewidth]{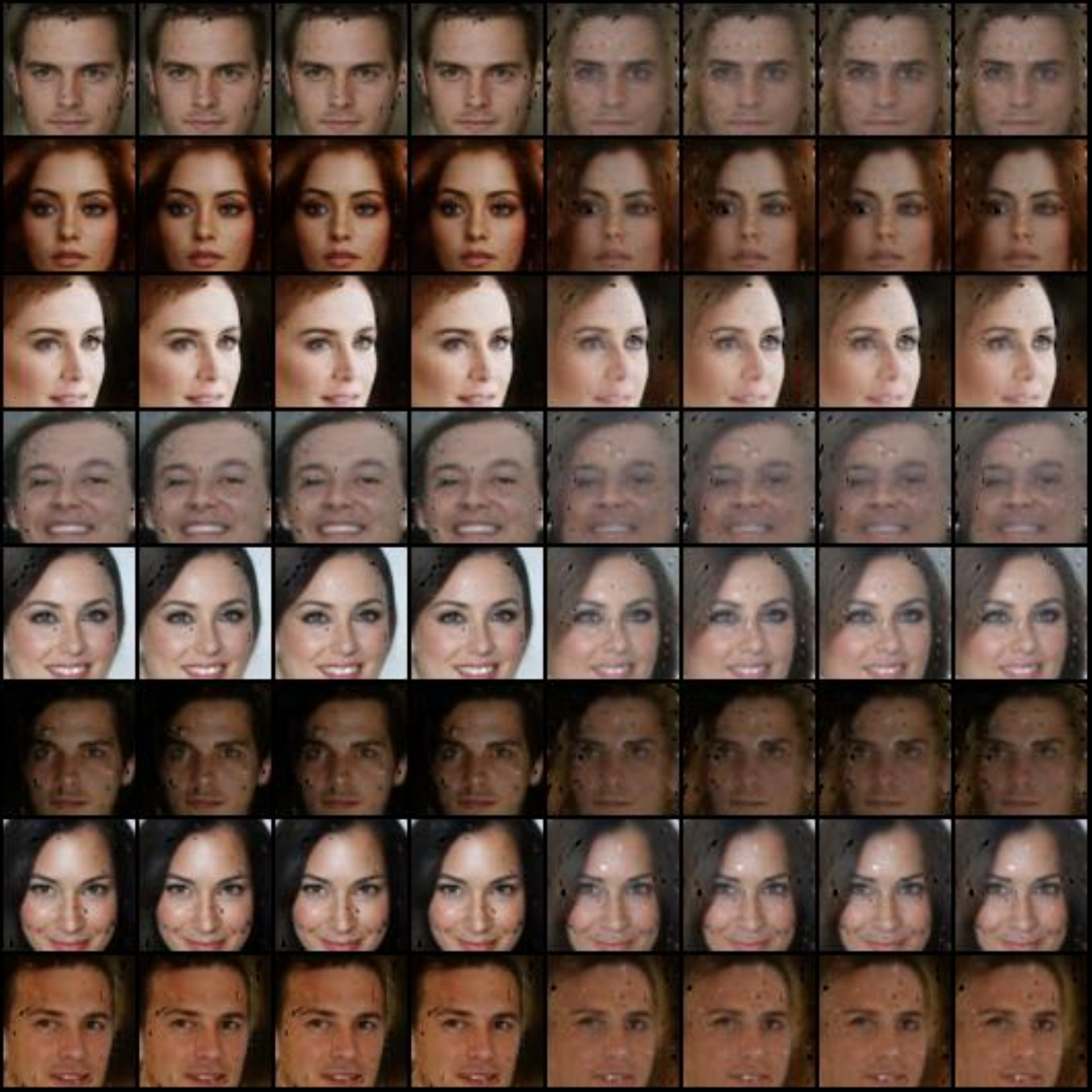}
}
\caption{The effect of interpolating a single label for CausalBEGAN, while keeping the noise terms and other labels fixed. Although most labels are properly captured, we see that eyeglasses label is not.}
\end{figure}

\begin{figure}
\includegraphics[width=\textwidth]
{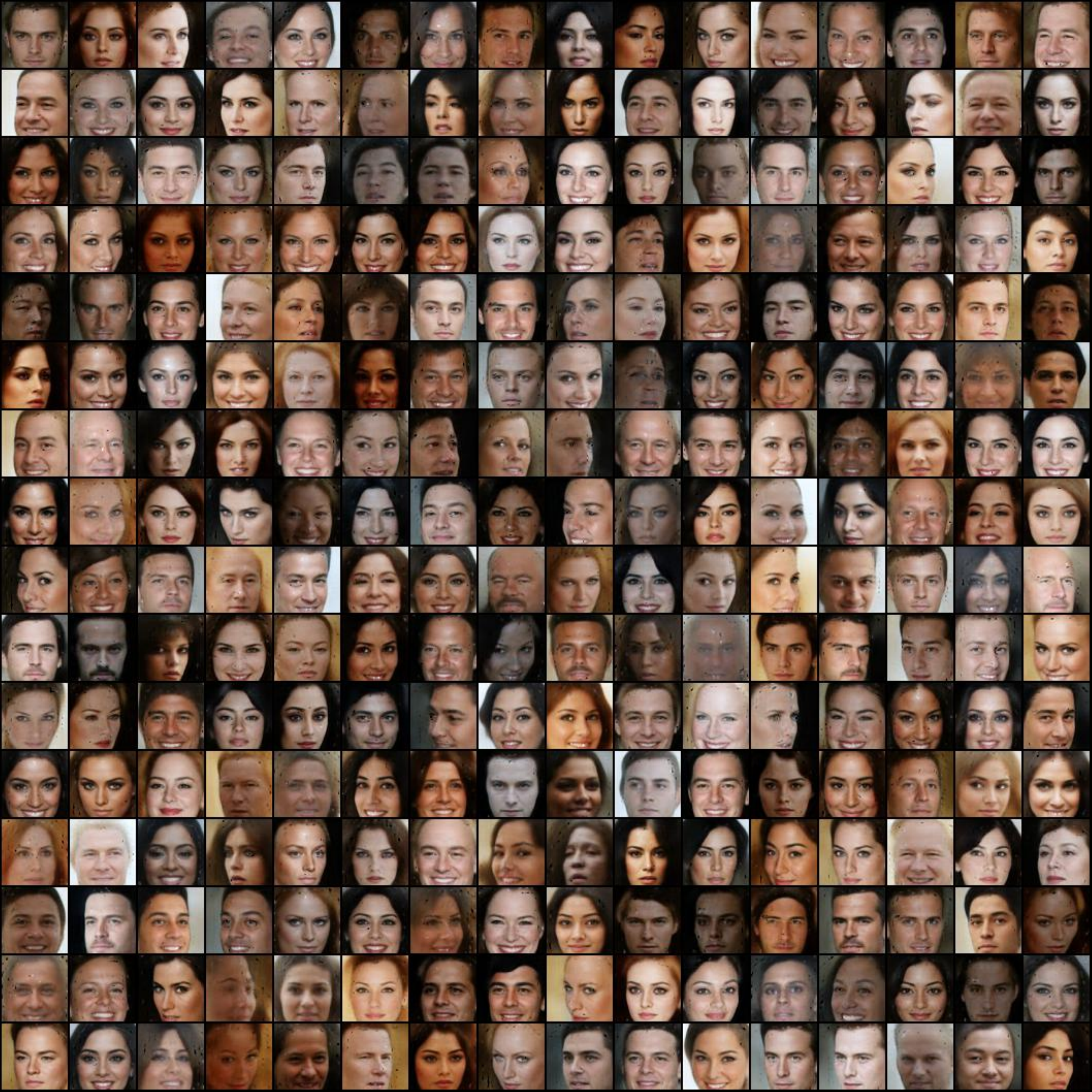}
\caption{Diversity of Causal BEGAN showcased with 256 samples.} 
\label{fig:began_diversity}
\end{figure}

\clearpage
\subsection{Directly Training Labels+Image Fails}
In this section, we present the result of attempting to jointly train an implicit causal generative model for labels and the image. This approach treats the image as part of the causal graph. It is not clear how exactly to feed both labels and image to discriminator, but one way is to simply encode the label as a constant image in an additional channel. We tried this for Causal Graph 1 and observed that the image generation is not learned (Figure \ref{channelgan:image}). One hypothesis is that the discriminator focuses on labels without providing useful gradients to the image generation.

\begin{figure}
\centering
\includegraphics[width=0.6\linewidth]{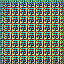}
\caption{Failed Image generation for simultaneous label and image generation after 20k steps.}
\label{channelgan:image}
\end{figure}

\end{document}